\documentclass{article}

    \PassOptionsToPackage{numbers, sort}{natbib}
 \usepackage[preprint]{neurips_2025}

\usepackage[utf8]{inputenc} % allow utf-8 input
\usepackage[T1]{fontenc}    % use 8-bit T1 fonts
\usepackage{url}            % simple URL typesetting
\usepackage{booktabs}       % professional-quality tables
\usepackage{amsfonts}       % blackboard math symbols
\usepackage{nicefrac}       % compact symbols for 1/2, etc.
\usepackage{microtype}      % microtypography
\usepackage{xcolor}         % colors
\usepackage{graphicx}
\usepackage{amsmath}
\usepackage{algorithm}
\usepackage{algorithmic}
\usepackage{amssymb}
\usepackage[english]{babel}
\usepackage{wrapfig}

\usepackage{tikz}
\usepackage{microtype}
\usetikzlibrary{positioning,calc}

\definecolor{progressgreen}{HTML}{30B265}
\definecolor{progresslightgreen}{HTML}{D6F0E0}
\definecolor{paperblack}{HTML}{111111}
\definecolor{paperdarkgray}{HTML}{252525}
\definecolor{papermidgray}{HTML}{707070}
\definecolor{paperlightgray}{HTML}{C9C9C9}
\definecolor{paperpalegray}{HTML}{F2F2F2}
\definecolor{papergridgray}{HTML}{D8D8D8}
\colorlet{softgreen}{progresslightgreen}
\colorlet{softneutral}{paperpalegray}
\colorlet{highlightgreen}{progresslightgreen}
\colorlet{bordergray}{papermidgray}
\colorlet{textgray}{paperdarkgray}

\usetikzlibrary{
  arrows.meta,
  calc,
  decorations.markings,
  matrix
}
\colorlet{ppmgreen}{progressgreen}
\colorlet{ppmlightgreen}{progresslightgreen}
\colorlet{ppmgridgray}{papergridgray}
\colorlet{ppmdarkpath}{paperdarkgray}
\colorlet{ppmmidpath}{papermidgray}
\colorlet{ppmlightpath}{paperlightgray}
\colorlet{ppmrefpath}{paperblack}
\colorlet{ppmblack}{papermidgray}

\newcommand{\greenhl}[1]{%
  \begingroup
  \setlength{\fboxsep}{3pt}%
  \setlength{\fboxrule}{0.45pt}%
  \fcolorbox{progressgreen}{highlightgreen}{#1}%
  \endgroup
}

\usepackage{xcolor}
\usepackage{tikz}

\usetikzlibrary{arrows.meta}

\colorlet{titletext}{paperblack}
\colorlet{nodetext}{paperblack}
\colorlet{edgegray}{papermidgray}

\colorlet{neutralfill}{paperpalegray}
\colorlet{neutralborder}{papermidgray}

\colorlet{pointfill}{progresslightgreen}
\colorlet{pointborder}{progressgreen}

\colorlet{goalgreen}{progressgreen}

\tikzset{
    reasoning graph edge/.style={
        draw=edgegray,
        line width=0.85pt,
        -{Latex[length=2.4mm,width=1.9mm]},
        shorten >=0.42cm,
        shorten <=0.42cm
    },
    graph point/.style={
        circle,
        minimum size=0.78cm,
        inner sep=0pt,
        line width=0.8pt,
        font=\rmfamily\fontsize{16}{18}\selectfont,
        text=nodetext
    },
    start point/.style={
        graph point,
        draw=neutralborder,
        fill=white
    },
    reasoning point/.style={
        graph point,
        draw=pointborder,
        fill=pointfill
    },
    goal point/.style={
        graph point,
        draw=goalgreen,
        fill=goalgreen,
        text=white
    },
    graph title/.style={
        font=\rmfamily\fontsize{18}{20}\selectfont,
        text=titletext
    }
}

\tikzset{
  feedback/.style={
    draw=bordergray,
    line width=0.65pt,
    rounded corners=2pt,
    text=textgray,
    text width=5cm,
    minimum height=1.66cm,
    inner xsep=10pt,
    inner ysep=8pt,
    align=center,
    font=\large
  },
  positive/.style={feedback, draw=ppmgreen, fill=softgreen},
  negative/.style={feedback, fill=white},
  explanation/.style={
    draw=bordergray,
    fill=softneutral,
    line width=0.8pt,
    rounded corners=2pt,
    text=textgray,
    text width=6cm,
    minimum height=6.95cm,
    inner xsep=10pt,
    inner ysep=10pt,
    align=left,
    font=\large
  }
}

\tikzset{
  ppm grid edge/.style={
    draw=ppmgridgray,
    line width=0.55pt
  },
  ppm state/.style={
    circle,
    draw=ppmblack,
    fill=white,
    minimum size=5.0mm,
    inner sep=0pt,
    line width=0.6pt
  },
  ppm start state/.style={
    ppm state,
    font=\itshape
  },
  ppm goal state/.style={
    ppm state,
    draw=ppmgreen,
    line width=0.9pt,
    font=\itshape
  },
  ppm goal rewarded/.style={
    ppm goal state,
    fill=ppmgreen,
    text=white
  },
  ppm credited/.style={
    ppm state,
    fill=ppmlightgreen,
    draw=ppmgreen,
    line width=0.8pt
  },
  ppm reference/.style={
    draw=ppmrefpath,
    line width=0.8pt,
    postaction={decorate},
    decoration={
      markings,
      mark=at position 0.58 with {
        \arrow{Stealth[length=2.7mm,width=1.9mm]}
      }
    }
  },
  ppm rollout/.style={
    draw=ppmdarkpath,
    dashed,
    line width=1pt,
    line cap=round,
    line join=round
  },
  ppm rollout two/.style={
    draw=ppmmidpath,
    dashed,
    line width=1pt,
    line cap=round,
    line join=round
  },
  ppm rollout three/.style={
    draw=ppmlightpath,
    dashed,
    line width=1pt,
    line cap=round,
    line join=round
  },
  ppm legend label/.style={
    anchor=west,
    font=\small,
    text=ppmrefpath
  },
  ppm legend ref arrow/.style={
    draw=ppmrefpath,
    line width=0.9pt,
    -{Stealth[length=2.7mm,width=1.9mm]}
  },
  ppm legend matrix/.style={
    matrix of nodes,
    row sep=0pt,
    column sep=1mm,
    nodes={anchor=center}
  }
}

\newcommand{\canonicalterm}[1]{\textcolor{red}{#1}}
\newcommand{\alternativeterm}[1]{\textcolor{purple}{#1}}

\newcommand{\EE}{\mathbb E}
\newcommand{\Var}{\mathrm{Var}}
\newcommand{\PP}{\mathbb P}
\newcommand{\RR}{\mathbb R}
\newcommand{\gS}{\mathcal S}
\newcommand{\gA}{\mathcal A}
\newcommand{\sR}{\mathbb R}
\newcommand{\pl}{}

\newcommand{\snr}{\mathrm{SNR}}
\newcommand{\T}{^T}

\newcommand{\x}{\times}

\newcommand{\set}[1]{\left \{ #1 \right \}}
\newcommand{\parens}[1]{\left( #1 \right)}

\renewcommand{\vec}[1]{\if#1\relax\bm{#1}\else\mathbf{#1}\fi}

\newcommand{\ind}{\mathbf 1}

\newcommand{\methodname}{progressive point matching}
\newcommand{\Methodname}{Progressive point matching}
\newcommand{\METHODNAME}{Progressive Point Matching}

\usepackage{wrapfig}
\usepackage{enumitem}

\setlist[itemize]{
    itemsep=0.5em,
    topsep=0.2em,
    parsep=0pt,
    partopsep=0pt,
    leftmargin=2em
}
\setlist[enumerate]{
    itemsep=0.5em,
    topsep=0.2em,
    parsep=0pt,
    partopsep=0pt,
    leftmargin=2em
}

\usepackage{amsthm}
\usepackage{aliascnt}
\usepackage{xcolor}
\usepackage{hyperref}

\colorlet{linkgreen}{progressgreen}

\hypersetup{
  colorlinks=true,
  linkcolor=linkgreen,  % Internal links
  citecolor=linkgreen,  % Citation links
  urlcolor=linkgreen    % External URLs
}
\usepackage{cleveref}

\newtheorem{theorem}{Theorem}[section]

\newaliascnt{prop}{theorem}
\newtheorem{prop}[prop]{Proposition}
\aliascntresetthe{prop}

\newaliascnt{lemma}{theorem}
\newtheorem{lemma}[lemma]{Lemma}
\aliascntresetthe{lemma}

\newaliascnt{example}{theorem}
\newtheorem{example}[example]{Example}
\aliascntresetthe{example}

\newaliascnt{corollary}{theorem}
\newtheorem{corollary}[corollary]{Corollary}
\aliascntresetthe{corollary}

\Crefname{prop}{Proposition}{Propositions}
\Crefname{lemma}{Lemma}{Lemmas}
\Crefname{example}{Example}{Examples}
\Crefname{corollary}{Corollary}{Corollaries}

\usepackage{etoolbox}

\makeatletter
\patchcmd{\proof}
  {\topsep6\p@\@plus6\p@\relax}
  {\topsep0pt\@plus0pt\@minus0pt\relax}
  {}{}
\makeatother

\usepackage[most]{tcolorbox}

\definecolor{promptbg}{HTML}{F7F7F8}
\definecolor{promptframe}{HTML}{C9CDD3}
\definecolor{prompttitle}{HTML}{2F3A4A}

\usepackage{soul}

\sethlcolor{yellow!25}
\newcommand{\diff}[1]{\hl{#1}}

\newcommand{\promptheading}[1]{%
  \normalfont\bfseries\footnotesize #1%
}

\newtcblisting{llmprompt}[1][LLM Prompt]{
  listing only,
  breakable,
  enhanced,
  colback=promptbg,
  colframe=promptframe,
  boxrule=0.6pt,
  arc=2mm,
  left=3mm,
  right=3mm,
  top=1mm,
  bottom=1mm,
  title={#1},
  fonttitle=\bfseries\small,
  coltitle=prompttitle,
  attach boxed title to top left={xshift=2mm, yshift=-2mm},
  boxed title style={
    colback=white,
    colframe=promptframe,
    boxrule=0.5pt,
    arc=1.5mm,
    left=1.5mm,
    right=1.5mm,
    top=0.5mm,
    bottom=0.5mm
  },
  listing options={
    basicstyle=\ttfamily\scriptsize,
    breaklines=true,
    breakautoindent=false,
    breakindent=0pt,
    columns=fullflexible,
    keepspaces=true,
    showstringspaces=false,
    escapeinside={(*@}{@*)}
  }
}

\usepackage{titlesec}

\titlespacing*{\section}
  {0pt}    % left indent
  {0.5em}  % space before
  {0.3em}  % space after

\titlespacing*{\subsection}
  {0pt}    % left indent
  {0.4em}  % space before
  {0.25em}  % space after

\usetikzlibrary{arrows.meta}

\title{Long-Horizon Language Model Reinforcement Learning via \METHODNAME}

\author{%
  Preston Fu$^{1, *}$
  \quad
  Kevin Frans$^{1, *}$
  \quad
  Oleh Rybkin$^{1}$
  \quad
  \textbf{Sergey Levine}$^{1}$  
  \quad
  \textbf{Aviral Kumar}$^{2}$ \\[0.5em]
  $^{1}$UC Berkeley \quad $^{2}$Carnegie Mellon University\\[0.5em]
  \href{https://prestonfu.com/notes/ppm/}{\texttt{prestonfu.com/notes/ppm}}
}

\begin{document}

\maketitle

% Some todos:
% Right a proof about goal-reaching letting you skip states. This is more efficient under some assumption -> coverage under demonstrations vs. under optimal policy (concentratability coefficient)
% look at: CPI paper
% we will try to prove: unbiased and better sample complexity
% look at: alex agrawal PCPG
% what's the main insight: If we can someone construc this markovian state -> get this out of the rubrics
% focus: each part is kind of similar, but the fundamental insight is that if you can assign furthest state with markovian, adn this depends on all the parts.
% the big puzzle: how to reconcile the IL steps without being biased
% another graph: sparse + state-matching gets to asymptotic answer, but process rewards don't get there.

\begin{abstract}
Current paradigms for training language models via reinforcement learning rely heavily on sparse outcome rewards. However, as we pursue tasks that require longer and more complicated trajectories, such strategies result in slow learning. Prior work has attempted to address this problem by rewarding partial progress; however, naive formulations are often biased and converge to suboptimal policies. We show that a simple and \textit{unbiased} dense reward formulation, which we term \textbf{progressive point matching}, scales \textit{exponentially} more efficiently to long-horizon tasks by rewarding partial progress on a segment level, both theoretically and empirically via synthetic environments. We then show how progressive point matching can be practically instantiated using a single reference trajectory per task. On extremely hard math reasoning problems, sparse outcome rewards cannot make any progress, whereas segment-level rewards enable improvements at larger test-time token budgets when measured by success rate or pass@$k$.\begingroup
\renewcommand{\thefootnote}{\fnsymbol{footnote}}
\footnotetext[1]{Equal contribution. Correspondence to \url{prestonfu@berkeley.edu}.}
\endgroup
\end{abstract}

% For rl writing:
% - Start from the credit assignment problem. Exploration comes next as a sub point.
% - Motivate via IRL? Or state matching? Or estimate of the value function?
% 	- Let's do IRL route. We don't have a teacher. SFT don't work.

\vspace{-0.4em}
\section{Introduction}
\vspace{-0.4em}

\begin{figure}
    \centering
    \includegraphics[width=\linewidth]{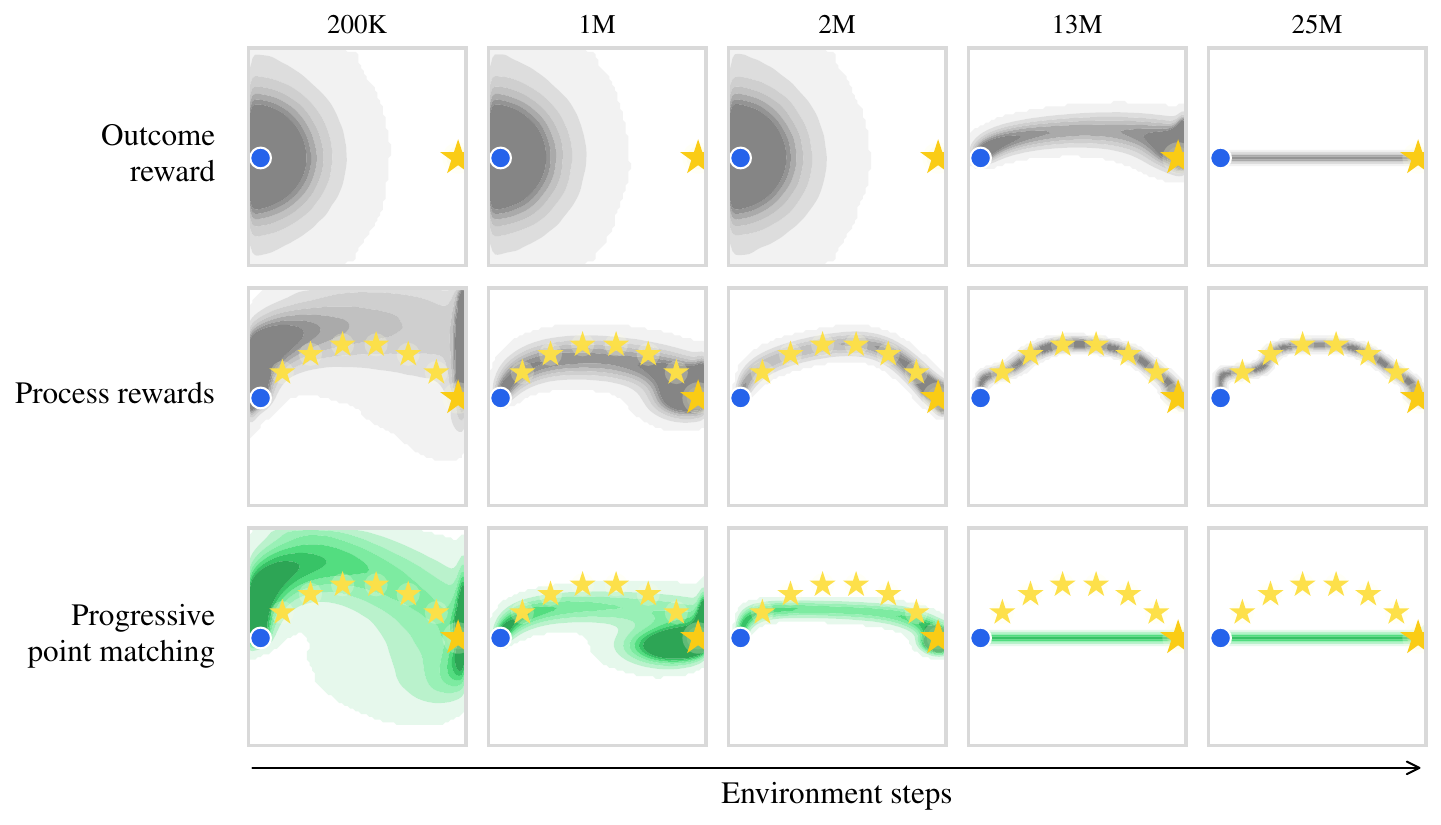}
    \vspace{-2em}
    % \caption{\textbf{
    % % With outcome-level rewards, the required number of samples for policy improvement increases exponentially with task horizon}, manifesting in slow learning.
    % Naive outcome-level rewards require exponentially more samples as task horizon increases, while \methodname\ reduces this requirement while converging to the optimal solution.}
    % \textbf{Top:} In a long-horizon simulated task, outcome-level supervision requires over 12M samples to see a single success. \textbf{Middle:} Given a set of reference points, one may combat this by providing auxiliary rewards for reaching each reference point, but biased objectives can produce suboptimal policies. \textbf{Bottom:} \Methodname\ reduces the effective horizon and can converge to the optimal policy.}
    \caption{
    As an analogy, consider a long-horizon problem in which an agent is rewarded for the correctness of its final answer. In this figure, we apply a discount factor to demonstrate that the goal can be reached suboptimally. \textbf{Top:} Sparse outcome rewards are unbiased, but successful trajectories become exponentially rare as the task horizon grows -- in our simulated task, over 12M samples are required to observe a single success. \textbf{Middle:} One may combat this by rewarding the agent for reaching each reasoning point, but this biases the agent toward suboptimality when the hints themselves are suboptimal. \textbf{Bottom:} \Methodname\ can efficiently learn the optimal policy under outcome rewards.
    }
    \label{fig:teaser}
\end{figure}

Reinforcement learning (RL) has proven a powerful tool for improving the reasoning and agentic capabilities of large language models (LLMs). The typical paradigm is to utilize large-scale policy gradients, with a sparse outcome reward generated at the end of a full trajectory. While straightforward to implement and verify, sparse rewards often result in weak signal per trajectory \citep{schulman2025lora} and brings a range of potential difficulties such as slow training and poor exploration \citep{ladosz2022exploration, andrychowicz2018hindsightexperiencereplay}. 

Such issues are especially prominent in long-horizon tasks, which require solving many subproblems before ultimately reaching the goal. % stringing together dozens of \confusingterm{correct actions}
%%AK.final: so i generally get the sense of what we mean by correct, but it is not clear to me what it means in reasoning
% before receiving any positive reward signal. 
% For example, in hard math problems, waypoints may include correctly computing intermediate quantities or proving lemmas. 
As we show in \Cref{fig:teaser}, sparse outcome reward methods require exponentially more trajectories to reach a desired pass rate as the task horizon increases, due to exponentially degrading signal-to-noise ratio of the policy gradient 
% \citep{Cheng2026isocompute}
(Theorem~\ref{thm:snr}). Indeed, the standard approach to tackle difficult problems wastes compute by generating many more trajectories than needed, then discarding a large chunk of those trajectories altogether~\citep{yu2025dapoopensourcellmreinforcement,Cheng2026isocompute}. In the worst case, the policy may be unable to sample a single success, resulting in no reward signal nor RL improvement.

How can we learn efficiently as we scale to longer and more difficult tasks? One natural answer is to utilize sources of reward signal other than the typical sparse outcome reward, which does not carry discriminative power beyond the final answer -- if a trajectory makes progress on dozens of subproblems but fails at the final stretch, it nevertheless is assigned a reward of zero. By properly assigning partial credit to each trajectory, we may alleviate this trend.

Our key insight is that reasoning problems can be regarded as discovering paths through a Markovian state space. One may decompose a long-horizon task into \textit{reasoning points}, which are sufficient statistics for subsequent reasoning. Thus, different prefixes of the trajectory that reach the same reasoning point are interchangeable with respect to future reasoning. For example, two trajectory prefixes that provide different proofs to the same lemma reach the same reasoning point, as subsequent reasoning can simply condition on the lemma without its proof. The \textit{reasoning state} is the \textit{set} of reached reasoning points, and the task becomes reaching a state that contains the goal. Unlike process rewards \cite{lightman2023letsverifystepstep}, which may not correlate with true task progress \cite{setlur2024rewarding, zhang2025lessons}, our method directly rewards reaching reasoning states that are closer to the goal state.

% Our key insight is that by framing reasoning problems as discovering paths through a \confusingterm{Markovian state space}, 
% %%AK.final: why a markovian state space?
% we can construct a \confusingterm{dense reward} particularly amenable to RL -- specifically, we aim to directly reward \textit{partial progress to the answer} as defined by a set of human or reference \alternativeterm{intermediate steps} \canonicalterm{[reasoning points]}. Our methodology is similar in motivation to \alternativeterm{process rewards} \canonicalterm{[process reward]} \cite{lightman2023letsverifystepstep,zhang2025lessons}, however unlike correctness-based methods which may simple not correlate with true task progress \cite{setlur2024rewarding, zhang2025lessons}, our method directly rewards reaching \confusingterm{states} closer and closer to the answer.

%%AK.final: this para is quite abrupt i feel.. Feels like it comes out of nowhere
By designing the reasoning points, \methodname\ enables interpolation between imitation learning and reinforcement learning. At one extreme, treating every token-level prefix of a reference trajectory as a
reasoning point approximates token-level imitation. At another, treating the final answer as the
sole reasoning point recovers RL with sparse outcome rewards. Between these extremes, reasoning points can be selected to compress disjoint segments of a reference trajectory. Rewarding reaching reasoning points via on-policy RL is less ``strict'' than directly imitating reference trajectories, which enables the policy to reach points through potentially more efficient reasoning paths (\Cref{fig:teaser}) while avoiding distribution-shift pitfalls of standard imitation learning \citep{Ross2011Reduction, agarwal2024policy}. Theoretically, \methodname\ leads to exponentially higher signal-to-noise than sparse outcome rewards (\Cref{thm:snr}) while enabling convergence to the optimal policy
  under outcome rewards (\Cref{prop:optimality-equivalence}).

% Similarly to imitation learning, \methodname\ provides \confusingterm{dense signal} by making use of task-solving reference trajectories. However, our method is fundamentally an RL algorithm that learns \textit{on-policy}, avoiding pitfalls from distribution shift that naive imitation learning often suffers from \citep{Ross2011Reduction, agarwal2024policy}. Intentionally, \methodname\ is less ``strict'' than directly imitating a teacher model -- the policy is not trained to mimic reference actions or trajectories exactly, but rather policies are free to reach \alternativeterm{intermediate points} \canonicalterm{[reasoning points]} in \textit{alternative} and potentially more efficient and easy-to-model ways (\Cref{fig:teaser}). As shown in Theorem~\ref{thm:snr}, we prove that under certain conditions, \methodname\ rewards lead to faster convergence than sparse outcome rewards while not limiting \confusingterm{asymptotic performance} \pf{we don't show this -- though this would be good to study actually}.

We instantiate our method as a modular modification to an off-the-shelf language model RL algorithm, e.g., PPO \cite{schulman2017proximal} or GRPO \citep{shao2024deepseekmath}. Reasoning points are constructed via privileged information, e.g.\ by using an LLM to extract them from a human or oracle solution. Importantly, we construct segment-level rewards, which allows to assign different advantages to tokens in a trajectory based on their relative contribution.

Our experimental setup isolates the effect of long-horizon scaling by focusing on three synthetic environments: a multi-turn version of Countdown \citep{gandhi2024stream}, a multi-turn Matrix Manipulation task \cite{stojanovski2025reasoninggymreasoningenvironments}, and synthetic math problems from GSM-Infinite \cite{zhou2025gsminfinitellmsbehaveinfinitely}. We show that as reasoning problems become harder, as characterized by requiring reaching more intermediate points to ultimately reach the goal, policies trained with sparse outcome rewards (i.e., standard GRPO) require exponentially more trajectories to converge. In contrast, policies trained via \methodname\ converge faster, with the gap in relative performance increasing with task horizon. We additionally show that \methodname\ allows us to train policies on \textit{near-zero success rate} math problems, where methods using sparse outcome rewards fail to learn completely. When evaluated at larger test-time token budgets, policies trained with \methodname\ extrapolate their ability to achieve a higher success rate and pass@$k$ rate according to outcome rewards.

% \begin{figure}
%     \centering
%     % \resizebox{1.02\linewidth}{!}{%
%     \resizebox{0.75\linewidth}{!}{%
%         % \input{imgs/fig1}%
%         \input{imgs/fig1_new}%
%     }
%     \vspace{-0.5em}
%     % \input{imgs/fig1}
%     % \includegraphics[width=0.9\linewidth]{imgs/fig-rewards.png}
%     %%SL.5.2: Generally I really like the idea behind this figure and I think it's really important to clearly explain the "shortcut" concept. I wonder if we can somehow make it even clearer
%     \caption{
%     \textbf{\Methodname\ provides dense signal to reward partial progress, while not limiting asymptotic performance.} 
%     Outcome rewards (left) are sparse and hard to extract signal from. In contrast, we reward \textit{partial progress} of policies as measured by the \textit{furthest} reached point of a reference trajectory (right). Thus, the learned policy is not asymptotically constrained by the reference dataset. \pf{add a simple language example}
%     % Policies trained with \methodname\ can discover diverse and alternate reasoning paths to those in the reference, such as paths that ``shortcut'' intermediate points (right). Thus, the resulting policy is not asymptotically constrained by the reference dataset.
%     % At the same time, \textbf{pol can improve \textit{beyond} the reference trajectory}, by locating paths that ``shortcut" intermediate states, thus policies are free to discover alternative (and more efficient) ways of reaching the answer.
%     }
%     \label{fig:method}
% \end{figure}

\vspace{-0.4em}
\section{Related Work}
\vspace{-0.4em}

%%SL.5.2: you're using \cite instead of \citep, which renders as non-parenthetical citations; assuming this is unintended, fix by using \citep

\textbf{Partial progress rewards.} 
Learning from sparse outcome rewards has been a long-standing challenge in RL due to their low signal and high exploration requirements. A number of prior works have proposed methodologies to reward \textit{partial progress}. 
In principle, multi-task value learning methods \citep{schaul2015universal, yue2025vapo, schulman2017proximal, snell2022offline} provide a notion of partial credit, as trajectories that fail to reach the answer can nevertheless be rewarded by a learned estimate of their answer-reaching likelihood. 
Hindsight relabeling methods \cite{andrychowicz2018hindsightexperiencereplay, zhang2023wisdom} also provide signal by rewriting the original goal such that failing trajectories now succeed (for a different goal). 

Particularly for language models, a large body of work has studied process rewards, which provide feedback to intermediate reasoning by checking whether each segment is logically correct. This feedback may come from a human annotated dataset \cite{lightman2023letsverifystepstep, uesato2022solving, zheng2025processbench} or via querying a language model judge \cite{miao2023selfcheck, khalifa2025process}. However, training on correctness rewards may not converge to an optimal policy under outcome rewards, as logical correctness may not correlate with task success \citep{setlur2024rewarding, zhang2025lessons}. Closest to our work are methods that do not rely on external definitions of correctness and attempt to predict success rate directly, largely via Monte Carlo estimation \cite{luo2024improvemathematicalreasoninglanguage,wang2024mathshepherdverifyreinforcellms, cui2025processreinforcementimplicitrewards}. While we share a similar viewpoint of directly rewarding task success, our methodology avoids learning an expensive reward or value model entirely, and instead uses per-task reference trajectories to provide partial credit.

\textbf{Imitation learning and inverse reinforcement learning.}
%%SL.5.2: maybe start with a sentence or so that just explains why we are even talking about imitation and IRL? eg our method has some similarity...
% Another source of dense signal is to learn from reference trajectories. 
Our work takes inspiration from imitation learning methods that derive signal from reference policies or trajectories. 
The simplest of these are behavioral cloning methods \cite{Pomerleau1988ALVINN}, which perform supervised learning on actions over a dataset of reference trajectories. Online imitation learning methods \cite{Ross2011Reduction, Ross2014Reinforcement, Sun2017DeeplyAggreVaTeD} instead aim to minimize distribution drift by directly operating on sampled trajectories, copying teacher actions at each state. 
If an explicit teacher model is not available, some methods resort to learning an adversarial critic \cite{Ho2016Generative, Ke2021Imitation, Zhang2020FGAIL}, which can be used to provide signal to the student. 
Other methods pose imitation learning as a \textit{state occupancy matching} problem \cite{Nachum2019DualDICE, Kostrikov2019ValueDICE, Ma2022SMODICE, Sun2019ProvablyEfficient, garg2022iqlearninversesoftqlearning}, which train policies to exactly match the steady-state distribution of a teacher. Our approach is most closely related to \cite{Schroecker2017StateAware, Reddy2020SQIL, Chiang2024ExpertProximity},
which directly construct on-policy rewards derived from demonstrations. While similar in inspiration, our method critically differs in that we do not aim to match state \textit{occupancies}, which involves matching the distribution of all intermediate states, but rather we reward progress as the set of reached reasoning points, giving credit to intermediate points via a \textit{shortcutting} mechanism. Thus, \methodname\ enables policies to learn fundamentally different goal-reaching strategies than the demonstrations.
% \kf{maybe rework that our contribution is how to do state occupancies in LLM land, and we do that via decomposing states into points.}

%%SL.5.2: I feel like this paragraph doesn't really answer how our method differs from imitation learning. The last sentence comes across as more of an implementation detail. Could it make sense to more directly address precisely what the distinction between our method and imitation really is?

\textbf{Language model training via privileged information.} Prior works have examined mechanics to best extract learning signal from demonstrations or reference solutions. Strategies involve placing privileged information in-context, e.g. partial solutions \citep{qu2026pope} or reasoning abstractions \citep{qu2025rlad, chen2025nudging}. These in-context policies can also be treated as a target for distillation \citep{zhao2026self, hubotter2026reinforcement, shenfeld2026self}. However, directly providing privileged information creates a risk of distribution shift \citep{zelikman2022star} or overfitting, characterized by a collapse in exploration ability \citep{qu2026pope}. In contrast, our method is most similar to works that learn fully on-policy and use privileged information only to construct rewards. This has often taken the form of using likelihood-based rewards \citep{chen2024language, tang2025learning, zhou2025reinforcing} or adversarial critics \citep{cai2025escaping}. However, instead of relying on model likelihoods, we draw a tight connection to goal-reaching MDPs and opt to directly reward partial progress.

\vspace{-0.4em}
\section{Preliminaries}
\vspace{-0.4em}

% We study RL post-training of a large language model with parameters $\theta$. Let $(s_0, \dots, s_{T-1}) \sim \rho$ denote a collection of input prompts, and let $h_t = (s_0, a_0, s_1, a_1, \dots, s_{t-1})$ denote the full interaction history before the $t$-th assistant turn. We model the interaction as an MDP whose state is the full history $h_t$, whose action is the next assistant message $a_t$, and whose transition appends the generated assistant message and next input prompt $s_{t+1}$ to the history. The policy $\pi_\theta(a_t \mid h_t)$ defines a distribution of state-action trajectories $p_{\pi_\theta}(\tau) = \rho(s_0, \dots, s_{T-1}) \prod_{t=0}^{T-1} \pi_\theta(a_t \mid h_t)$, where $\tau = (h_0, a_0, h_1, a_1, \dots, h_{T-1}, a_{T-1})$. Each trajectory terminates with a final answer, for example an answer appearing in a \texttt{\textbackslash boxed\{\}} block. The reward function is 

\textbf{Markov decision processes.} A Markov decision process (MDP) is defined by a state space, an action space $\gA$,
a transition function $p: \gS \times \gA \to \Delta(\gS)$, a reward function $r: \gS \to \sR$,
and an initial state distribution $p(\pl{s_0}) \in \Delta(\gS)$. 
For our setting, we are concerned only with a subset of MDPs suitable for language reasoning -- the reward function is nonzero \textit{only} at the outcome state, transition functions are deterministic, and the state is fully observable. 
Our policy is defined as a probability distribution over actions $\pi(\pl{a} \mid \pl{s}): \gS \to \Delta(A)$,
which defines the distribution of state-action trajectories
$p(\tau \mid \pi) = p(s_0) \prod_{t=0}^{T-1} p(s_{t+1} \mid s_t, a_t) \pi(a_t \mid s_t)$,
where $\tau = (s_0, a_0, \ldots, s_{T-1}, a_{T-1}, s_T)$.
The standard RL objective is to learn a parameterized policy $\pi_\theta$ that maximizes the expected sum of rewards: $J(\pi_\theta) = \EE_{\tau \sim p(\pl{\tau} \mid \pi_\theta)} \sum_{t=0}^{T} \gamma^t r(s_t)$.
% \begin{equation}
%     J(\pi_\theta) = \mathbb{E}_{\tau \sim p(\pl{\tau} \mid \pi_\theta)}  \sum_{t=0}^\infty \gamma^t r(s_t). \label{equation:rl}
% \end{equation}

\textbf{Language reasoning models as policies.} 
% Throughout this work, we consider the fine-tuning of large language models that utilize natural language reasoning. This can take the form of a chain-of-thought or an explicit "thinking" block. 
Throughout this work, we consider the fine-tuning of large language models that reason via autoregressive generation.
% We denote the prompt tokens as $x^\text{prompt}$, and the model-generated tokens as $x_{0..T}$.
We define a sequence of tokens as $x_{0..T}$ where a prefix of these tokens comes from an external prompt, and the following are generated autoregressively. We never take a loss on the prompt tokens.
We assume access to a base model, defined as a neural network that parameterizes a probability distribution $\pi_\theta(x_t \mid x_{<t})$.
To measure the performance of such models, we consider a verifiable setting in which a set of training problems are associated with a verifiable success criterion defined as an outcome reward $r(x)$.

% \textbf{Reinforcement learning objective.} We can define our core learning problem as maximizing the expected reward over the distribution of problems. In this work we utilize policy gradient methods to achieve this, which can be generally written as following the direction:
% \begin{equation}
%     \Delta \theta = \mathbb{E}_{s,a \sim \pi} \left[ \nabla_\theta \log \pi(a|s) \cdot A(s,a) \right]
% \end{equation}
% where $a$ represents an action, $s$ a Markovian state, and $A(s,a)$ is an estimate of future advantages. For reasoning models, we define $s$ as the sequence of previous tokens $x_{0..t}$. 

% \kf{Let's use trajectory instead of rollout.}

\vspace{-0.4em}
\section{Rewarding Partial Progress via \METHODNAME}
\vspace{-0.4em}

In this section, we describe a simple framework for assigning partial credit in RL, then follow with how language reasoning tasks can be best formulated into this framework. We aim to satisfy the following desiderata:
\begin{enumerate}
    \item[(1)] Segments within trajectories are assigned rewards proportional to their partial progress toward the solution,  even for problems where no trajectories reach the goal.
    \item[(2)] Optimal policies under  per-segment rewards are optimal under outcome rewards, and vice versa.
    %%AK.final: asymptotic in what sense? asymptotic over what variable
\end{enumerate}

% - Reasoning states are on the goal-reaching MDP.
% 	- Partial progress at any state is defined by that state's distance to the goal.
% 	- A state is the entire history of intermediate reasoning steps.
% - Reference trajectories.
% 	- One assumption we make is that the reasoning MDP is non-decreasing in partial progress. 
% 	- Approximate distance to the goal as distance to the goal under the reference.
% - Claims:

\begin{figure}
    \centering
        \resizebox{\linewidth}{!}{%
            \begin{tikzpicture}[node distance=2.4mm]

  \tikzset{
    rubric two heading/.style={
      font=\large,
      text=paperblack,
      align=center
    },
    rubric two grid/.style={
      draw=paperlightgray,
      line width=0.45pt,
      densely dotted
    },
    rubric two axis/.style={
      draw=papermidgray,
      line width=0.7pt
    },
    rubric two score/.style={
      draw=ppmgreen,
      line width=1.5pt,
      line cap=round,
      line join=round
    },
    rubric two score point/.style={
      circle,
      draw=ppmgreen,
      fill=white,
      line width=1.5pt,
      minimum size=3.5mm,
      inner sep=0pt
    },
    rubric two chunk/.style={
      draw=bordergray,
      fill=white,
      rounded corners=2pt,
      line width=0.75pt,
      inner sep=0pt
    },
    rubric two chunk text/.style={
      anchor=north west,
      text=paperdarkgray,
      text width=6.10cm,
      align=left,
      font=\normalsize,
      inner sep=0pt
    },
    rubric two chunk muted/.style={
      rubric two chunk,
      draw=paperlightgray,
      fill=paperpalegray
    }
  }

  % This is the rubric and response from rubric_example.tex.
  \node[positive] (step1)
    {Express $AB$ and $AC$ in terms of complex numbers.};

  \node[positive, below=of step1] (step2)
    {Determine the orientation of triangles $ABP$ and $BCQ$.};

  \node[negative, below=of step2] (step3)
    {The coefficients of $b^2$, $c^2$, and $bc$ must be equal.};

  \node[negative, below=of step3] (step4)
    {The final answer is correctly stated as $\tfrac{2}{3}$.};

  % The sampled response is exposed chunk-by-chunk. Chunk 2 starts exactly
  % where the second matched reasoning point becomes explicit.
  \node[
    rubric two chunk,
    anchor=north west,
    minimum width=6.80cm,
    minimum height=4.00cm
  ] (chunk1) at ($(step1.north east)+(4.5mm,0)$) {};
  \node[rubric two chunk text, anchor=center]
    at (chunk1.center)
    {%
      \textbf{Segment 1} --
      Let the vertices of the triangles be represented by the complex
      numbers $a,b,c,p,q,r,m$. We may place triangle $ABC$ in the complex
      plane by setting $a=0$. Then
      \greenhl{$u=b-a=b$} and \greenhl{$v=c-a=c$}
      are the complex numbers representing the vectors
      \greenhl{$\overrightarrow{AB}$ and $\overrightarrow{AC}$}.%
    };

  \node[
    rubric two chunk,
    anchor=north west,
    minimum width=6.80cm,
    minimum height=2.20cm
  ] (chunk2) at ($(chunk1.south west)+(0,-1.8mm)$) {};
  \node[rubric two chunk text, anchor=center]
    at (chunk2.center)
    {%
      \textbf{Segment 2} -- 
      Assume that $A,B,C$ are in counter-clockwise order. Then ``outside''
      means that the triangles $ABP$, $BCQ$, and $CAR$ are in
      \greenhl{clockwise order}.%
    };

  \node[
    rubric two chunk muted,
    anchor=north west,
    minimum width=6.80cm,
    minimum height=0.80cm
  ] (chunk3) at ($(chunk2.south west)+(0,-1.8mm)$)
    {};
  \node[rubric two chunk text, anchor=center, text=papermidgray]
    at (chunk3.center)
    {\textbf{Segment 3} -- Set $t=AR/AC$. Expand:};

  % A shared vertical offset keeps all three heading baselines aligned.
  \node[rubric two heading, anchor=base]
    at ($(step1.north)+(0,3mm)$)
    {Reference reasoning points};
  \node[rubric two heading, anchor=base]
    at ($(chunk1.north)+(0,3mm)$)
    {Sampled trajectory};
  \node[rubric two heading, anchor=base]
    at ($(chunk1.north east)+(3.85cm,3mm)$)
    {\Methodname\ measure};

  % Progressive point-matching score over the response prefix.
  \begin{scope}[shift={($(chunk1.north east)+(0.95,0)$)}]
    \foreach \y in {-6.35,-4.80,-3.25,-1.70,-0.15} {
      \draw[rubric two grid] (0,\y) -- (5.80,\y);
    }

    \draw[rubric two axis] (0,-6.35) -- (5.80,-6.35);
    \draw[rubric two axis] (0,-6.35) -- (0,-0.15);

    \foreach \x/\label in {
      0/0,
      1.160/1,
      2.320/2,
      3.480/3,
      4.640/4,
      5.800/5
    } {
      \draw[papermidgray, line width=0.5pt]
        (\x,-6.42) -- (\x,-6.28);
      \node[anchor=north, font=\normalsize, text=paperdarkgray]
        at (\x,-6.50) {\label};
    }

    \foreach \y/\label in {
      -6.35/$\nicefrac{0}{4}$,
      -4.80/$\nicefrac{1}{4}$,
      -3.25/$\nicefrac{2}{4}$,
      -1.70/$\nicefrac{3}{4}$,
      -0.15/$\nicefrac{4}{4}$
    } {
      \draw[papermidgray, line width=0.5pt]
        (-0.07,\y) -- (0.07,\y);
      \node[anchor=east, font=\normalsize, text=paperdarkgray]
        at (-0.16,\y) {\label};
    }

    \node[anchor=north, font=\large, text=paperdarkgray]
      at (2.90,-6.95) {Timestep};

    % Matching progress sampled at evenly spaced response prefixes.
    \draw[rubric two score]
      (1.16,-4.80)
      -- (2.32,-3.25)
      -- (3.48,-3.25)
      -- (4.64,-1.70)
      -- (5.80,-1.70);

    \foreach \x/\y in {
      1.16/-4.80,
      2.32/-3.25,
      3.48/-3.25,
      4.64/-1.70,
      5.80/-1.70 
    } {
      \node[rubric two score point] at (\x,\y) {};
    }
  \end{scope}

\end{tikzpicture}%
        }
    %     \begin{minipage}[t]{0.62\textwidth}
    %     \vspace{0pt}
    %     \centering
    %     \resizebox{\linewidth}{!}{%
    %         \input{imgs/rubric_example}%
    %     }
    % \end{minipage}
    % \hspace{-0.02\textwidth}
    % \hfill
    % \begin{minipage}[t]{0.375\textwidth}
    %     \vspace{-3pt}
    %     \centering
    %     \includegraphics[width=\linewidth]{imgs/prompt_climbing.pdf}
    % \end{minipage}
    % \includegraphics[width=1.0\linewidth]{imgs/fig-progress.png}
    \caption{\textbf{\Methodname\ rewards enable segment-level credit assignment.} Shown above is an example challenging problem from Omni-MATH. 
    \textbf{Left:} Rewards are constructed by measuring the number of reference reasoning points that a trajectory reaches.
    \textbf{Right:} \Methodname\ is a monotonic function over tokens. Successful trajectories will accumulate reward from 0 to 1 throughout the reasoning process, whereas unsuccessful trajectories may plateau when they deviate from reference trajectory.
    }
    \label{fig:progress}
\end{figure}

We start from the observation that, in typical long chain-of-thought reasoning trajectories, intermediate results compress earlier portions of the reasoning process. For example, in a math problem, a trajectory may prove a lemma that can then be used in subsequent steps of a longer proof without revisiting its derivation. We call such intermediate results \textit{reasoning points}. Prior work~\citep{qu2026pope} has used a similar mental model to develop guidance-conditioned RL training approaches. Accordingly, we operate under the assumption that reasoning trajectories admit compact state representations: the \textit{set} of reasoning points reached by the current trajectory prefix. We can then view a reasoning trajectory as a path through the state space of a goal-reaching MDP, where the goal state contains the goal point (i.e., the final answer), and each action adds one or more new reasoning points to the current set.

One implication of this formulation is that progress can never be harmed as a reasoning trace proceeds -- each new trajectory segment may add new reasoning point(s) to this set, or leave the set unchanged. This is intentional and often desirable, e.g.\ deriving intermediate lemmas may be helpful to proving a particular theorem, but in general do not prevent further progress -- one can always backtrack to a previous lemma. Thus, each reasoning trajectory is \textit{non-decreasing in partial progress} (\Cref{fig:progress}). Successful trajectories therefore follow a desirable pattern: starting at an empty set, they make non-decreasing progress, and end by reaching the goal point.

\textbf{\Methodname\ rewards}. To construct a measure of partial progress, we rely on the reasoning points spanned by a reference trajectory. 
%%AK.final: note the limitation that "while this may result in only learning one plausible solution approach to a problem, this is ok [why in 1 line]".
Over this set, we define the \methodname\ measure $\phi(s, s^{\text{ref}})$ (or simply $\phi(s)$) as the number of reference reasoning points $s^{\text{ref}} = \set{s_1^{\text{ref}}, \dots, s_{n-1}^{\text{ref}}, s_n^{\text{ref}} = g}$ contained at a given state $s$:
\begin{align}
    \phi(s, s^{\text{ref}}) & := \sum_{i=1}^n \mathbf{1}(s^{\text{ref}}_i \in s).
    \label{eq:point-matching}
\end{align}
%%AK.final: I think we have not given much intuition on what this reasoning point is supposed to be?
% The \methodname\ measure above is a non-decreasing function of the current \confusingterm{state}.
To derive per-segment rewards satisfying desiderata \textbf{(1)}, we can simply calculate the increase in the measure caused by that segment\footnote{This framework matches potential shaping \cite{ng1999policy} at non-terminal states. Let $r(s) := \mathbf{1}\{g \in s\}$ denote the sparse outcome reward, and define
$\Phi(s) := \phi(s) - r(s)$. Then, for non-terminal transitions,
\begin{equation}
\hat r(s_t,a_t)
= r(s_{t+1}) + \Phi(s_{t+1}) - \Phi(s_t),
\end{equation}
where $\gamma = 1$. Unlike potential shaping, however, \methodname\ retains the terminal potential for unsuccessful trajectories, providing partial credit for progress.}:
\begin{align}
    \hat{r}(s_t,a_t) = \phi(s_{t+1}) - \phi(s_t).
    \label{eq:reward}
\end{align}
% \begin{theorem}[\METHODNAME\ Reward Sufficiency]
% Every policy that maximizes $\hat{r}$ reaches the goal reasoning point $s^{\text{ref}}_n$, satisfying desiderata \textbf{(2)}.
% \end{theorem}
% \begin{proof}
% The reference trajectory contains all $n$ reference points by construction, so $\max_\pi \phi(s_T) = n$. Since $\sum_{t=0}^{T-1} \hat{r}(s_t, a_t)$ telescopes to $\phi(s_T) - \phi(s_0)$, any $\hat{r}$-optimal policy attains $\phi(s_T) = n$. Attaining the maximum value of $\phi$ requires containing every reference reasoning point, including the final goal point $s^{\text{ref}}_n$.  Hence any $\hat{r}$-optimal policy reaches the goal.
% \end{proof}

% \subsection{\METHODNAME\ on the Reasoning Graph}

\begin{figure}[t]
    \centering
    \resizebox{0.8\textwidth}{!}{%
\begin{tikzpicture}[
    x=1cm,
    y=1cm
]

% Fix the overall dimensions and alignment.
\path[use as bounding box]
    (0,-0.45) rectangle (21.88,3.85);

% ================================================================
% Multi-Countdown
% ================================================================

\begin{scope}
    \node[graph title]
        at (3.20,3.55)
        {Multi-Countdown};

    \coordinate (m0) at (3.20,2.44);

    \coordinate (m1) at (0.54,1.22);
    \coordinate (m2) at (1.87,1.22);
    \coordinate (m3) at (3.20,1.22);
    \coordinate (m4) at (4.52,1.22);
    \coordinate (m5) at (5.85,1.22);

    \coordinate (mg) at (3.20,0.00);

    % Dependencies from the initial state.
    \draw[reasoning graph edge] (m0) -- (m1);
    \draw[reasoning graph edge] (m0) -- (m2);
    \draw[reasoning graph edge] (m0) -- (m3);
    \draw[reasoning graph edge] (m0) -- (m4);
    \draw[reasoning graph edge] (m0) -- (m5);

    % Each independent point contributes to the goal.
    \draw[reasoning graph edge] (m1) -- (mg);
    \draw[reasoning graph edge] (m2) -- (mg);
    \draw[reasoning graph edge] (m3) -- (mg);
    \draw[reasoning graph edge] (m4) -- (mg);
    \draw[reasoning graph edge] (m5) -- (mg);

    \node[start point]
        at (m0)
        {$\varnothing$};

    \node[reasoning point]
        at (m1)
        {1};

    \node[reasoning point]
        at (m2)
        {2};

    \node[reasoning point]
        at (m3)
        {3};

    \node[reasoning point]
        at (m4)
        {4};

    \node[reasoning point]
        at (m5)
        {5};

    \node[goal point]
        at (mg)
        {$g$};
\end{scope}

% ================================================================
% Manipulate Matrix
% ================================================================

\begin{scope}[shift={(7.605,0)}]
    \node[graph title]
        at (3.325,3.55)
        {Manipulate Matrix};

    \coordinate (a0) at (0.00,1.22);
    \coordinate (a1) at (1.33,1.22);
    \coordinate (a2) at (2.66,1.22);
    \coordinate (a3) at (3.99,1.22);
    \coordinate (a4) at (5.32,1.22);
    \coordinate (ag) at (6.65,1.22);

    \draw[reasoning graph edge] (a0) -- (a1);
    \draw[reasoning graph edge] (a1) -- (a2);
    \draw[reasoning graph edge] (a2) -- (a3);
    \draw[reasoning graph edge] (a3) -- (a4);
    \draw[reasoning graph edge] (a4) -- (ag);

    \node[start point]
        at (a0)
        {$\varnothing$};

    \node[reasoning point]
        at (a1)
        {1};

    \node[reasoning point]
        at (a2)
        {2};

    \node[reasoning point]
        at (a3)
        {3};

    \node[reasoning point]
        at (a4)
        {4};

    \node[goal point]
        at (ag)
        {$g$};
\end{scope}

% ================================================================
% Math Reasoning
% ================================================================

\begin{scope}[shift={(16.03,0)}]
    \node[graph title]
        at (2.65,3.55)
        {Math Reasoning};

    \coordinate (r0) at (0.00,1.22);

    \coordinate (r1) at (1.33,2.44);
    \coordinate (r2) at (2.66,2.44);

    \coordinate (r3) at (1.33,1.22);
    \coordinate (r4) at (2.66,1.22);

    \coordinate (r5) at (2.00,0.00);

    \coordinate (r6) at (4.00,1.22);
    \coordinate (rg) at (5.31,1.22);

    % Alternative branches from the initial state.
    \draw[reasoning graph edge] (r0) -- (r1);
    \draw[reasoning graph edge] (r0) -- (r3);
    \draw[reasoning graph edge] (r0) -- (r5);

    % Intermediate dependencies.
    \draw[reasoning graph edge] (r1) -- (r2);
    \draw[reasoning graph edge] (r1) -- (r4);
    \draw[reasoning graph edge] (r3) -- (r4);

    % Paths merge before reaching the goal.
    \draw[reasoning graph edge] (r2) -- (r6);
    \draw[reasoning graph edge] (r4) -- (r6);
    \draw[reasoning graph edge] (r5) -- (r6);
    \draw[reasoning graph edge] (r6) -- (rg);

    \node[start point]
        at (r0)
        {$\varnothing$};

    \node[reasoning point]
        at (r1)
        {1};

    \node[reasoning point]
        at (r2)
        {2};

    \node[reasoning point]
        at (r3)
        {3};

    \node[reasoning point]
        at (r4)
        {4};

    \node[reasoning point]
        at (r5)
        {5};

    \node[reasoning point]
        at (r6)
        {6};

    \node[goal point]
        at (rg)
        {$g$};
\end{scope}

\end{tikzpicture}
}
\caption{Reference trajectories from various MDPs consist of \textit{reasoning points}, which can be arranged in \textit{reasoning graphs}. By construction, every point is a prerequisite of the goal.}
    \label{fig:dags}
\end{figure}

% \begin{figure}
%     \centering
%     % \includegraphics[width=\linewidth]{imgs/dags3.png}
%     % \input{dags.tex}
%     \caption{Enter Caption}
%     \label{fig:dags3}
% \end{figure}
%%AK.final: I like the above presentation, but just that some of the terms are not defined and that causes some confusion tbh.
% \kf{Aug25: Still debating whether to to present the alg this way, or just define phi(s) as the distance required to reach g from s, which makes shortcutting/obsoletion fall out naturally.}

\textbf{Optimality equivalence.} %\pf{let's change this to instead focus on optimality equivalence.}
Training on \methodname\ rewards $\hat r$ directly can lead to suboptimal policies under outcome rewards. For example, a trajectory that reaches many intermediate reasoning points but fails to reach the goal may receive more reward than a trajectory that successfully reaches the goal via another path in the MDP.

% However, pure \methodname\ reward can result in undesired behavior -- a trajectory that reaches many intermediate reasoning points but fails to reach the goal may receive more reward than a trajectory that reaches the goal via another path. 

Our solution to this problem is to give full credit for intermediate reasoning points that have been made obsolete by reaching further reasoning points. Specifically, we construct a latent \textit{reasoning graph} of points (\Cref{fig:dags}), within which many paths (including the reference trajectory) can lead to the answer. The graph has vertices that are reasoning points, and the edge $(u, v)$ is present if point $u$ is a prerequisite of $v$. This notion of ``shortcutting'' is derived from the structure of the reasoning graph: a point is considered reached if all points that depend on this point have already been reached. In particular, reaching the goal makes \textit{all} reasoning points reached. This framework allows \methodname\ to satisfy a key property:
%%AK.final: Preston asked me if we should keep obsoletion -- I think we should, just maybe lets call it shortcutting?

\vspace{5pt}
\begin{prop}%[\METHODNAME\ Optimality Equivalence]
\label{prop:optimality-equivalence}
Assume there exists a policy that reaches the goal $g$ almost surely. Then the optimal policies under the \methodname\ and the sparse outcome rewards coincide.
\end{prop}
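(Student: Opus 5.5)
The plan is to reduce both objectives to statements about the terminal state $s_T$ and then compare their maximizers. Since $\gamma = 1$ and $\hat r$ is defined as a difference of $\phi$, the return under \methodname\ telescopes. Along any trajectory,
\[
\sum_{t=0}^{T-1} \hat r(s_t,a_t) = \phi(s_T) - \phi(s_0).
\]
Here $\phi$ is understood with shortcutting: a reference point counts as reached if every point depending on it has been reached. The term $\phi(s_0)$ does not depend on the policy, because $p(s_0)$ is fixed. Hence maximizing the \methodname\ return is equivalent to maximizing $\EE_\pi[\phi(s_T)]$. Similarly, the outcome objective is $\EE_\pi[\mathbf{1}(g \in s_T)] = \PP_\pi(g \in s_T)$.

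The next step is to show that $\phi$ takes its maximum value $n$ exactly on goal-containing states, i.e.\ $\phi(s) = n \iff g \in s$.
\begin{itemize}
\item[($\Leftarrow$)] If $g \in s$, then under shortcutting every reference point is counted. By the construction in \Cref{fig:dags}, every point is a prerequisite of $g$, so reaching the goal makes all reasoning points reached. Hence $\phi(s) = n$.
\item[($\Rightarrow$)] If $\phi(s) = n$, then in particular $s_n^{\text{ref}} = g$ is counted. The goal has no dependents, so the shortcutting rule can only count it if it is literally contained in $s$. Hence $g \in s$.
\end{itemize}
It follows that $\phi(s_T) \le n$ always, with equality iff the trajectory succeeds.

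Now combine this with the assumption that some policy reaches $g$ almost surely. Under that policy $\phi(s_T) = n$ a.s., so
\[
\max_\pi \EE_\pi[\phi(s_T)] = n \quad\text{and}\quad \max_\pi \PP_\pi(g \in s_T) = 1.
\]
Since $n - \phi(s_T) \ge 0$, a policy $\pi$ attains $\EE_\pi[\phi(s_T)] = n$ iff $\phi(s_T) = n$ a.s. By the previous step this holds iff $g \in s_T$ a.s., which is exactly outcome-optimality. Both sets of optimal policies are therefore the set of policies that succeed almost surely, and so they coincide.

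The main obstacle is making the shortcutting definition of $\phi$ precise enough that the equivalence $\phi(s) = n \iff g \in s$ holds. This needs the goal to be counted only when actually reached, and every reference point to be an ancestor of $g$ in the reasoning graph. Without the almost-sure assumption, the two objectives can trade off differently. For example, a policy with a slightly lower success rate might attain higher expected partial progress. So the argument must rely on the maximum being attained with probability one, not only in expectation.
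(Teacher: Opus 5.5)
Your proof is correct and follows the same route as the paper's. You telescope $\hat r$ to the terminal measure (the paper uses $\phi(s_0)=0$, whereas you only need $\phi(s_0)$ to be policy-independent), then use $0\le\phi(s_T)\le n$ with shortcutting to conclude that expected return $n$ is attained iff $g$ is reached almost surely, which under the assumption is exactly the outcome-optimal set. Your explicit check that $\phi(s)=n \iff g\in s$, including that the goal must not be credited vacuously by the shortcut rule since it has no dependents, makes precise a step the paper's proof leaves implicit.
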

\vspace{-10pt}
\begin{proof}
Since $\hat{r}$ telescopes and $\phi(s_0)=0$,
\begin{equation}
\sum_{t=0}^{T-1} \hat{r}(s_t,a_t) = \phi(s_T).
\end{equation}
Because $0 \le \phi(s_T) \le n$ and we are using shortcutting, the maximum expected return $n$ is achieved iff the goal is reached almost surely. The sparse outcome reward has exactly the same set of optimal policies.
\end{proof}
\vspace{-5pt}

\section{Empirical Study on Language Model Training}

We now conduct a systematic study of language model RL, including our proposed \methodname\ method, specifically focusing on \textit{scalability with task horizon}. To isolate this axis, we focus on reasoning tasks that are constructed out of a set of sub-problems that must first be solved before the final answer can be derived.

We instantiate \methodname\ via plug-and-play modifications to an off-the-shelf policy gradient algorithm. We consider the family of policy gradients that generally follow the REINFORCE objective with token-level advantages:
\begin{equation}
    \Delta \theta = \EE_{x \sim \pi_\theta}[\nabla \log\pi_\theta(x_t | x_{<t}) \cdot A_t ],
\end{equation}
where $x_t$ is a sequence of tokens, and $A_t$ is the token-level advantage. The common baseline of GRPO is a specific case of the above formulation in which a single advantage is assigned to all tokens in a given trajectory. We base our hyperparameter choices off GRPO unless otherwise described.

In the experiments below, we construct rewards by splitting each sampled trajectory into segments. These segments are constructed via natural indicators (e.g.\ turns in multi-turn environments), or simply via equal-length chunks. We then calculate the \methodname\ measure over every segment-wise \textit{trajectory prefix}. 
Segment-level rewards are then calculated as the improvement between two trajectory prefixes, as described in \Cref{eq:reward}.

\begin{algorithm}[t]
\caption{\METHODNAME}
\label{algo:method}
\begin{algorithmic}
   \REQUIRE Policy parameters $\theta$, reference trajectory $x^{\text{ref}}$
   \WHILE{not converged}
        \STATE Sample on-policy trajectory $x = x_{0..T} \sim \pi_\theta$
        \STATE Split into segments $x = (c_1, c_2, \ldots, c_K)$ (by turn boundaries or equal-length chunks)
        \FOR{each segment $k = 1, \ldots, K$}
            \STATE Compute PPM measure over trajectory prefix: $m_k = \phi((c_1, \ldots, c_k),\, x^{\text{ref}})$
        \ENDFOR
        \STATE Compute segment-level rewards: $r_k = m_k - m_{k-1}$, with $m_0 = 0$
        \STATE Compute token-level advantages: $A_t = f(r_1, \ldots, r_K)$ (e.g.\ GRPO group normalization)
        \STATE Update parameters: $\theta \leftarrow \theta + \alpha \, \mathbb{E}_{\pi_\theta} \left[ \nabla \log \pi_\theta(x_t | x_{<t}) \cdot A_t \right]$
   \ENDWHILE
\end{algorithmic}
\end{algorithm}
% \end{figure}

% (We just modify the definition of A.)

% The main design choice is how to estimate $r(s,a)$, which as defined above, involves a state-matching comparison. In the case of language models, we assume full observability and define the historical list of tokens as the state. While it is possible to do a 1-1 comparison of exact token matching, this is clearly a weird thing to do, as many token sequences may differ trivially but result in the same semantic meaning.

% Rather, we assert that the state-matching \textit{judge model} is precisely where the well-trained prior of language models can play its strength. Rather than define a strict algorithmic comparison operator, we instead query a language model judge with both the expert trajectory, and a prefix of the student trajectory, such that:
% \begin{equation}
%     R(s_{0..t}) = LLM(s_{expert}, s_{0...t}).
% \end{equation}

%%AK.5.4: maybe we should put the contents below into an experimental section. And put the theoretical analysis in the previous section?

% \subsection{How do \methodname\ and sparse outcome reward compare as tasks increase in horizon?}\label{sec:turn-based-horizon}
\subsection{How does \methodname\ compare to baselines as tasks increase in horizon?}\label{sec:turn-based-horizon}

% \begin{figure}
%     \centering
%     \includegraphics[width=1.0\linewidth]{imgs/scaling_v2.pdf}
%     % \includegraphics[width=1.0\linewidth]{imgs/reward_variants_vs_n_clean.pdf}
%     \caption{\textbf{As tasks increase in horizon, \methodname\ results in a larger performance improvement over sparse reward.} We examine RL performance on Multi-Countdown (shown) and Matrix Manipulation (see appendix) tasks with increasing numbers of steps ($n$). As $n$ increases, the gap in training speed between \methodname\ variants increases as well. Per-turn rewards learn faster than partial outcome rewards, and both outperform a naive sparse reward. 
%     % \pf{the x-axis is perhaps misleading, these are trajectories sampled right? maybe we can come up with a better name}
%     }
%     \label{fig:multicountdown}
% \end{figure}

As one of our main motivations is to train on long-horizon tasks, we now experimentally and analytically analyze the effect of \methodname\ on synthetic environments as we increase horizon. %We consider two multi-turn domains which naturally provide a way to increase task horizon and difficulty.
% We consider two turn-based domains, which naturally provide per-turn rewards. 
Each represents a different reasoning graph structure:
\begin{itemize}
    \item \textbf{Multi-Countdown.} The standard Countdown \citep{gandhi2024stream} task requires constructing an arithmetic from four given numbers that evaluates to a given target value. In Multi-Countdown, an agent is instead tasked with solving $n$ such problems \textit{independently}, with an outcome reward awarded if all problems are solved correctly.
    \item \textbf{Matrix Manipulation.} In this environment, adapted from Reasoning Gym \cite{stojanovski2025reasoninggymreasoningenvironments}, the policy is provided with an initial matrix and a sequence of operations such as rotation, reflection, and element-wise operations, with one operation per turn. Each operation must be correctly performed in sequential order to progress to the next operation. This represents an opposite extreme, where each sub-problem is completely dependent on all preceding sub-problems.
\end{itemize}

Across these tasks, we consider three main classes of reward signal:
\begin{itemize}
    \item \textbf{Sparse outcome} rewards broadcasted over the trajectory that indicate whether \textit{all} sub-problems have been solved correctly.
    \item \textbf{\Methodname\ trajectory-level} rewards broadcasted over the trajectory that indicate the \textit{fraction} of sub-problems that have been solved correctly.
    \item \textbf{\Methodname\ turn-level} rewards calculated via \Cref{eq:reward}.
\end{itemize}

% The two domains described above are examples of the multi-turn setting where the horizon $T = n$ matches the number of reference reasoning points, and at time step $t$ we always attempt task $t$.
\textbf{Theoretical analysis.} As the settings above can be seen as simple reasoning graphs, we can analyze the expected performance of various methods from the lens of signal-to-noise ratio (SNR) of the policy gradient estimator.

\begin{theorem}[Informal]\label{thm:snr}
Let $G$ be a reasoning graph on points $V = \set{\varnothing, 1, \dots, n, g}$, where $\varnothing$ is an initial null point and $g$ is the goal. For each $v$, conditioned on reaching all prerequisites of $v$, suppose that $v$ is reached independently with probability $p$. The policy gradient estimators have signal-to-noise ratios (SNRs):
\begin{itemize}
    \item Sparse outcome reward: $\Theta(p^{n/2})$.
    \item \Methodname\ trajectory-level: $\Theta(n^{-1/2})$, under regularity conditions.
    \item \Methodname\ turn-level: letting $C$ denote the number of reached reasoning points, the SNR scales as $\Theta\big(\EE[C] / \sqrt{\Var(C)}\big)$. Under regularity conditions, this is bounded below by $\Omega(1/\log n)$.
    % \item \Methodname\ per-turn: $\Omega(1/\log n)$, under the same regularity conditions.
% \vspace{-1em}/
%     \item \Methodname\ per-turn: 
% $
% \Theta\!\left( \frac{M_n}{\sqrt{M_n + \Delta_n}} \right)
% $,
% where
%     \begin{align}
%         M_n &= \sum_{v=1}^n p^{d_v - 1}, \quad
%         \Delta_n = \sum_{v \neq w} \left( p^{|P_v \cup P_w| - 2} - p^{d_v + d_w - 2} \right).
%     \end{align}
% % $\Omega\!\left(
% %         \frac{1}{1 + \log_{1/p}(n/M_n)}
% %     \right)
% %     = \Omega\!\left(\frac{1}{\log n}\right)$,
%     $M_n$ is (up to a constant multiplier $p$) the expected number of reachable intermediate points in the graph, and $\Delta_n$ 
\end{itemize}
\end{theorem}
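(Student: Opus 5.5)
We first fix a concrete model in which the SNR can be computed. Each point $v$ has an independent Bernoulli gate $B_v \sim \mathrm{Bern}(p)$. Its success probability $p=\sigma(\theta_v)$ is controlled by its own logit. Point $v$ is reached iff $B_v=1$ and all prerequisites of $v$ are reached, so $R_v=\prod_{u\in P_v} B_u$. Here $P_v$ is the set of ancestors of $v$, including $v$ itself, and $d_v=|P_v|$. The score function for $\theta_u$ is $S_u=(B_u-p)\,\mathbf 1(\text{$u$ attempted})$. We measure SNR as $\|\EE[\hat g]\|/\sqrt{\EE\|\hat g-\EE\hat g\|^2}$, or coordinatewise with the worst/typical coordinate reported. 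We will use the structural fact from \Cref{fig:dags}: every point is a prerequisite of $g$, so $d_g=n+1$ up to constants.

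\textbf{Sparse outcome.} The reward is $R_g=\prod_{u}B_u$. Then $\EE[R_g S_u]=p^{n}(1-p)\cdot\Theta(1)$, so the signal is $\Theta(p^{n})$. The second moment is $\EE[R_g^2S_u^2]=\Theta(p^{n})$, which dominates the squared mean. The ratio is therefore $\Theta(p^{n}/p^{n/2})=\Theta(p^{n/2})$. This is the routine step.

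\textbf{Trajectory-level PPM.} The reward is $\phi/n$ with $\phi=\sum_v R_v$, broadcast to all coordinates, and shortcutting gives full credit if $g$ is reached. The gradient coordinate for $u$ is $\EE[\phi S_u]=\sum_{v\succeq u}\mathrm{Cov}(R_v,B_u)\cdot(\ldots)$, which is nonzero only through descendants of $u$. The noise, however, includes $\Var(\phi)\,\Var(S_u)$ from all $n$ points. The ``regularity conditions'' will be that $\EE\phi=\Theta(n)$ (most points have bounded depth) and that $\Var(\phi)=\Theta(n)$ (weak dependence). Under these the signal per coordinate is $\Theta(1)$ after normalization. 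The noise is $\Theta(\sqrt{n})$, because each coordinate's score multiplies a reward summing $n$ roughly independent terms. This gives $\Theta(n^{-1/2})$.

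\textbf{Turn-level PPM.} The advantage of the segment attempting $v$ is $\hat r_v=R_v$ (its increment), so the estimator is $\sum_v R_v S_v$ with the reward-to-go or the local reward. The total signal is $\sum_v \EE[R_vS_v]=(1-p)\sum_v p^{d_v}\propto \EE[C]$. The total variance is $\Var(\sum_v R_vS_v)$. Cross terms for $v\neq w$ only survive through shared ancestors, and bounding them reduces to $\Var(C)$ up to constants in $p$. This gives $\Theta(\EE C/\sqrt{\Var C})$.

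\textbf{Bounding the turn-level ratio.} For the $\Omega(1/\log n)$ bound, the plan is to group points by depth. At most $\Theta(\log_{1/p} n)$ depth levels contribute non-negligibly to $\EE C$, since deeper points contribute $p^{d}$ summed geometrically. Then $\Var(C)\le(\#\text{levels})\sum_{\ell}\Var(C_\ell)$ by Cauchy--Schwarz. Within a level, points are conditionally independent given the ancestors, and the regularity condition controls that dependence, so $\Var(C_\ell)\lesssim \EE C_\ell\cdot\EE C$. Combining gives $\EE C/\sqrt{\Var C}\gtrsim 1/\log n$. One checks that the chain graph, whose $\EE C=\Theta(1)$ with $\Var C=\Theta(1)$, and the star graph, with ratio $\Theta(\sqrt n)$, sit within the bound.

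The main obstacle is this last step: controlling the correlated cross terms $p^{|P_v\cup P_w|}-p^{d_v+d_w}$ for general DAGs. I would first try the depth-layer Cauchy--Schwarz argument above. If it is too lossy, I would instead state the regularity condition directly as $\sum_{v\ne w}\mathrm{Cov}(R_v,R_w)\le \EE[C]^2\log^2 n$.
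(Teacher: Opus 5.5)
Your sparse-outcome computation is fine. Your turn-level identity is in fact exact in your model: with $S_v=B_v-p$ (the attempt indicators are absorbed into $R_vR_w$) one gets $\EE[R_vR_wS_vS_w]=(1-p)^2p^{|P_v\cup P_w|}$, hence $\Var(\sum_vR_vS_v)=(1-p)^2\Var(C)$. This is the paper's formula $\snr(Y_3)=\EE[C]/\sqrt{\Var(C)+\kappa\EE[C]}$ with $\kappa=0$.

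\textbf{Trajectory-level step.} This step fails as written. Since $S_u$ has mean zero and $\phi$ is not centered, $\Var(\phi S_u)\approx\EE[\phi^2]\Var(S_u)=\bigl((\EE\phi)^2+\Var\phi\bigr)\Var(S_u)$. Under your own condition $\EE\phi=\Theta(n)$, the term you dropped, $(\EE\phi)^2\Var(S_u)=\Theta(n^2)$, dominates. The trajectory-level SNR in Multi-Countdown is then $\Theta(1/n)$, not $\Theta(n^{-1/2})$, both per coordinate and for the full-norm SNR you defined. A mean baseline would rescue your heuristic, but it changes the estimator.

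Without a baseline, the $n^{-1/2}$ rate appears only in the convention you silently switch to for the turn-level case, which the paper uses throughout. There you project onto one direction along which all turn scores add, $Y_2=\frac1n C\sum_v x_v$. You need one convention throughout: under your stated full-norm convention, even the turn-level Multi-Countdown SNR would be $\Theta(1)$ rather than $\EE[C]/\sqrt{\Var(C)}=\Theta(\sqrt n)$.

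In that projection:
\begin{itemize}
\item The signal is $\frac{1-p}{n}\EE[D]$ with $D=\sum_v d_vR_v$, because a reached point credits all $d_v$ of its prerequisites.
\item Under the regularity conditions, the noise is dominated by $\EE[\phi^2]\Var(\sum_uS_u)/n^2=\Theta(\EE[C^2]/n)$. This is exactly the mean-driven term, and $\Var\phi$ plays no role.
\end{itemize}
Your condition $\EE\phi=\Theta(n)$ then suffices, since it forces $\EE[D]=\Theta(n)$ and $\EE[C^2]=\Theta(n^2)$. However, it excludes Matrix Manipulation, where $\EE[C]=\Theta(1)$. The paper's conditions, $\EE[C^2]=\Theta(\EE[D]^2)$ and $\sum_{v,w}d_{vw}^2p^{d_{vw}}=O(\EE[C^2])$, cover both cases.

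\textbf{Turn-level lower bound.} The key step of your depth-layer argument, $\Var(C_\ell)\lesssim\EE[C_\ell]\,\EE[C]$, does not follow from conditional independence within a level. Conditioning on the gates of shallower points only controls $\EE[\Var(C_\ell\mid\mathcal F)]\le\EE[C_\ell]$. The remaining term $\Var(\EE[C_\ell\mid\mathcal F])$ is exactly the shared-ancestor correlation that must be bounded.

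The step fails for the paper's Hydrangea graph, which satisfies $\EE[C^2]=O(\EE[D]^2)$: there, tree level $k$ has $\Var(C_k)\asymp k\,\EE[C_k]\,\EE[C]$. Indeed, if your bound held, Cauchy--Schwarz over $O(\log n)$ levels would give $\Omega(1/\sqrt{\log n})$, whereas that graph has $\snr(Y_3)=\Theta(1/\log n)$. Your fallback hypothesis $\sum_{v\ne w}\mathrm{Cov}(R_v,R_w)\le\EE[C]^2\log^2 n$ is just the conclusion restated.

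The missing idea is to reduce to a depth ratio and bound it by entropy. Under $\EE[C^2]=O(\EE[D]^2)$ you have $\Var(C)\le\EE[C^2]=O(\EE[D]^2)$, so $\snr=\Omega(\EE[C]/\EE[D])$. With the probability vector $r_v=p^{d_v}/\EE[C]$,
\[
\frac{\EE[D]}{\EE[C]}=\sum_v r_vd_v=\frac{\sum_v r_v\log(1/r_v)-\log\EE[C]}{\log(1/p)}\le\log_{1/p}\!\Big(\frac{n}{\EE[C]}\Big)=O(\log n),
\]
using $\sum_v r_v\log(1/r_v)\le\log n$ and $\EE[C]\ge p$ (a source point has $d_v=1$).
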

The full assumptions, proof, and more general version are given in Appendix~\ref{app:theory}. We can interpret Theorem~\ref{thm:snr} as follows: sparse outcome rewards become exponentially sparse with horizon, leading to exponentially small SNR in the policy gradient estimator. On the other hand, turn-level \methodname\ can yield \textit{improving SNR with task horizon}.

As we show in Appendix~\ref{app:theory}, the best case SNR of $\Theta(\sqrt N)$ for \methodname\ occurs when each reasoning point is reached independently (Multi-Countdown MDP), resulting in independent gradients across turns. The sequential Matrix Manipulation MDP, despite having high correlations between reachability events, has a turn-level SNR of $\Theta(1)$, significantly outperforming trajectory-level \methodname.

\begin{wrapfigure}[23]{r}{0.6\textwidth}
    \centering
    \vspace{-1.6em}
    \includegraphics[width=\linewidth]{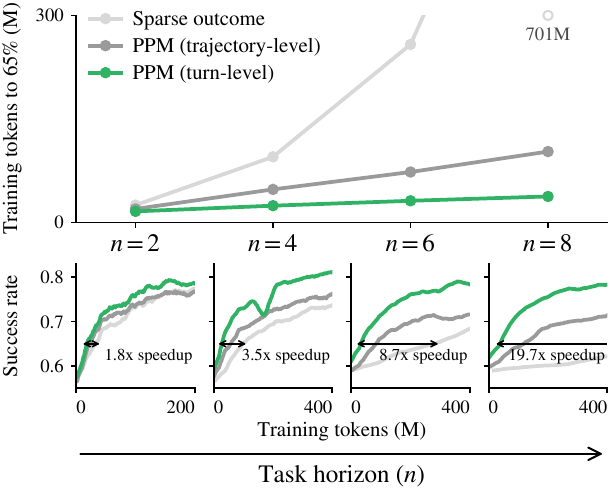}
    \vspace{-1.9em}
    \caption{As the Multi-Countdown task horizon increases, \methodname\ results in larger performance improvements over trajectory-level partial credit  and sparse outcome rewards. Theoretically and empirically, the training speed improves \textit{exponentially} over sparse outcome rewards.% \textbf{As tasks increase in horizon, \methodname\ results in a larger performance improvement over sparse outcome rewards.} We examine RL performance on Multi-Countdown (shown) and Matrix Manipulation (see appendix \pf{link this}) tasks with increasing task horizon ($n$). As $n$ increases, the gap in training speed between \methodname\ variants increases as well. Per-turn rewards learn faster than partial outcome rewards, and both outperform a naive sparse reward. 
    % \pf{the x-axis is perhaps misleading, these are trajectories sampled right? maybe we can come up with a better name}
    }
    \label{fig:multicountdown}
\end{wrapfigure}

\textbf{Empirical analysis.} We train Qwen3 models \cite{yang2025qwen3technicalreport} on the two tasks above across all three reward designs, using GRPO as the base method. Hyperparameters are kept the same across trials. We use Qwen3-1.7B for Multi-Countdown, and Qwen3-4B-Instruct for Matrix Manipulation. In both cases, the environment consists of $n$ interactions with subproblems, and the model is given a budget of 512 tokens per turn to think and provide a final verifiable answer. 

To instantiate \methodname, we utilize the symbolic structure of the tasks to identify the reference reasoning points. For Multi-Countdown, we check whether the solution to each subtask is correctly derived. For Matrix Manipulation, we check whether each matrix operation has been correctly performed. We intentionally chose easy-to-verify reasoning points to disentangle verification from the impact of \methodname\ on policy improvement -- in the following section, we examine how to verify intermediate reasoning on less structured settings.

As shown in \Cref{fig:multicountdown} (Multi-Countdown) and \Cref{fig:matrix-manipulation} (Matrix Manipulation), our main finding is that, across the board, \methodname\ consistently outperforms sparse outcome rewards. More importantly, \textbf{the margin between sparse outcome rewards and \methodname\ grows exponentially with the task horizon}. This matches our theoretical results and highlights the importance of dense partial credit assignment. 
% Sparse outcome rewards display an exponential increase in required training trajectories before an equivalent performance can be reached; \methodname\ reduces this to sublinear requirement.

% Additionally, we find that it is critical to utilize \alternativeterm{\textit{per-turn} rewards} \canonicalterm{[\methodname\ segment-level rewards]} to reach the highest efficiency. When \methodname\ is used to calculate partial progress, but these rewards are collapsed into a \confusingterm{single trajectory-level advantage}, we see that performance scales less favorably than the \alternativeterm{per-turn case} \canonicalterm{[\methodname\ segment-level]} -- although better than the \alternativeterm{naive sparse reward} \canonicalterm{[sparse outcome reward]} baseline.
% Additionally, we find that \textit{per-turn} rewards are critical to boost the efficiency of the RL algorithm. In \Cref{fig:multicountdown}, it is shown that per-turn rewards even decrease the amount of total rollouts needed, as each rollout contains $n$ independent sources of signal.

% \begin{figure}
%     \centering
    
%     \caption{Enter Caption \pf{if needed I can rerun the orange n=16 line. the n=24 results are not good because I have not tuned the hyperparameters at all, training has length 15k tokens. the plot is also a bit ugly and should perhaps have a different takeaway than the previous figure.}}
%     \label{fig:manipulate-matrix}
% \end{figure}

\subsection{Can \methodname\ be adapted to general reasoning without explicit turns?}

The environments in \Cref{sec:turn-based-horizon} benefit from (i) natural turn-based structures that enable decomposition into subtasks and (ii) ground truth rewards per subtask. However, real-world long-horizon tasks typically do not admit this structure. Instead, many tasks implicitly demand extended reasoning \cite{research2026composer}, with only the final answer directly verifiable. We investigate whether \methodname\ can yield stable improvements without environment-provided per-segment rewards.

We use GSM-Infinite \cite{zhou2025gsminfinitellmsbehaveinfinitely} as an environment for long-horizon reasoning. In this environment, math problems are generated synthetically from graphs, in which nodes represent quantities and edges encode relationships between them. The task horizon is parameterized by the length of the shortest path between the given information and the desired quantity. GSM-Infinite also represents an intermediate reasoning graph between the two extremes of Multi-Countdown and Matrix Manipulation, and serves as a controllable proxy for more general reasoning tasks.

We compare against a representative set of on-policy RL baselines, focusing on methods that use reference trajectories:
% We compare against a representative set of on-policy RL baselines, focusing on methods that assign per-segment rewards or per-token advantages via external judge models or by utilizing a reference solution:
\begin{itemize}
    \item \textbf{GRPO} \citep{shao2024deepseekmath} with sparse outcome rewards.
    \item \textbf{Verifree} \citep{zhou2025reinforcing}, a representative likelihood-based method that rewards reasoning based on the per-token likelihood of the solution conditioned on that reasoning.
    \item \textbf{On-policy self distillation (OPSD)} \citep{zhao2026self, hubotter2026reinforcement, shenfeld2026self}, a representative context distillation method which places the solution in the context of the model to form a teacher distribution, whose logits are then distilled into the current policy.
    % \item \textbf{POPE} \citep{qu2026pope}, a methodology which trains by placing partial prefixes of reference trajectories in context during training.
    \item A \textbf{process reward} baseline, whereby we utilize an off-the-shelf Gemini model to reward correctness by categorizing whether each segment is logically consistent \citep{miao2023selfcheck, khalifa2025process}.
    % \pf{maybe highlight that the only thing different between our method and process rewards is the prompt and chunk selection.}
    % \item \pf{opsd, sft, etc -- also add more details in the appendix}
\end{itemize}

\begin{wraptable}[12]{o}{0.5\textwidth}
    \centering
    \vspace{-1.9em}
    \caption{GSM-Infinite success rates at 400M training tokens.}
    \vspace{0.2em}
    \setlength{\tabcolsep}{3pt}
\begin{tabular}{@{}lccc@{}}
    \toprule
    Method & $n=8$ & $n=16$ & $n=24$ \\
    \midrule
    % Sparse outcome reward & 0.632 & 0.070 & 0.090 \\
    % Verifree & \textbf{0.794} & 0.084 & 0.084 \\
    % OPSD & 0.237 & 0.068 & 0.051 \\
    % Process rewards & 0.268 & 0.071 & 0.070 \\
    % PPM (trajectory-level) & 0.690 & 0.327 & 0.089 \\
    % PPM (segment-level) & 0.588 & \textbf{0.384} & \textbf{0.141} \\
Base model & 0.134 & 0.056 & 0.056 \\
Sparse outcome reward & 0.684 & 0.084 & 0.091 \\
Verifree & \textbf{0.828} & 0.126 & 0.098 \\
OPSD & 0.283 & 0.102 & 0.078 \\
Process reward & 0.329 & 0.075 & 0.080 \\
PPM (trajectory-level) & 0.758 & 0.334 & 0.089 \\
PPM (segment-level) & 0.652 & \textbf{0.415} & \textbf{0.191} \\
    \bottomrule
    \end{tabular}
    \label{tab:gsminf}
  \end{wraptable}

All baseline approaches are trained using Qwen3-1.7B as the base policy. We use GRPO with similar configurations across the board, as described in \Cref{tab:experiment-details-rl}.

We then implement \methodname, considering both the per-trajectory and per-segment variants. We utilize a text rendering of the ground-truth solution graph as the set of intermediate reasoning points. An off-the-shelf Gemini judge, which we evaluate in Appendix~\ref{sec:instantiating-judge}, is then used to measure how many of these points a given trajectory reaches. In the segment-level variant, we split each sampled trajectory into four uniformly sliced chunks, and pass each chunk independently (including all previous chunks) into the judge model.

We measure performance on GSM-Infinite tasks generated from increasingly long-horizon graphs. As shown in \Cref{tab:gsminf}, while many baselines are comparable in the regime of easy problems ($n=8$), \textbf{\methodname\ enables larger performance gains with task horizon}. At the hardest difficulty level ($n=24$), there is a clear benefit from assigning rewards at a segment level over trajectory level.
% We instantiate \methodname\ in three variants -- in the first (exact order), reward is only assigned when trajectories match the reasoning points in exactly the same order that the reference trajectory does. In the second (no graph), reasoning points can be reached in any order, but we do not utilize the reasoning graph to allow for receiving credit for shortcuts. Finally, the main setting utilizes full \methodname, including the reasoning graph.

% (figure on prompt climbing -- show a naive prompt, old prompt, new prompt)

\subsection{How should one instantiate the \methodname\ judge $\phi$?}\label{sec:q-judge}

% Language models are powerful tools for tasks such as grading, but used naively, they can produce high-variance numerical scores that are difficult to train on. We provide concrete guidance on how to evaluate the base model as a judge \textit{without the need for training}.

% Language models are powerful tools for tasks such as grading, but instantiating an effective \methodname\ judge . Part of the motivation for the state-matching formalization is to provide concrete guidance on how to evaluate the effectiveness of a judge prompt \textit{without the need for training.}

% \Methodname\ depends on a judge model that can effectively judge intermediate states $s_t$, which correspond to trajectory prefixes. The resulting \methodname\ measure $\phi(s_t)$ must be non-decreasing in $t$.

In synthetic environments such as Multi-Countdown or Matrix Manipulation, one
can instantiate a ground-truth $\phi$ by matching substrings directly. However,
in open-ended reasoning tasks such as GSM-Infinite, there are no ground-truth
labels indicating whether trajectory segments match reasoning points without
annotations from a human or a stronger model. We provide concrete guidance on
how to instantiate $\phi$ as the \methodname\ measure \textit{without  annotations}.

We make the following observation: conditioned on reaching $s_t$, the Monte
Carlo success rate $\bar r(s_t)$ of subsequent reasoning should grow \textit{linearly} with the \methodname\ measure $\phi(s_t)$. By definition, conditioning on
the reasoning points in $s_t$ is equivalent to conditioning on any trajectory prefix that
reaches them. We can therefore estimate $\bar r(s_t)$ by placing \textit{prefixes of the reference trajectory}
in the prompt and sampling continuations from the policy.

In our experiments, we interpolate between the following two settings: (i) on
hard problems where the policy cannot successfully solve the task, both the
Monte Carlo return and the mean terminal \methodname\ measure should be low; and
(ii) after providing the full reference trajectory in context, asking the
policy to reproduce the solution should almost always succeed, and both
quantities should be near one. We sweep between these settings by providing
reference prefixes of increasing length and compare the Monte Carlo return at
each prefix with the corresponding mean terminal \methodname\ measure
(Figure~\ref{fig:gemini-prefix}). Our \methodname\ judge, which we instantiate as an off-the-shelf Gemini model with a specific prompt that maximizes the above criterion, yields a Kendall rank
correlation of $0.856$, compared with $0.768$ for a standard rubric prompt (see Appendix~\ref{app:prompts}).

% One heuristic is as follows: we measure the Kendall rank correlation coefficient between the \textit{length of an oracle solution prefix} provided in context and the judge rubric score (Figure~\ref{fig:gemini-prefix}). Our \methodname\ judge yields a score of 0.856, compared to a naive rubric prompt's score of 0.768. 

%A rough approximation, which we posit approximately holds at long horizons, is that linearly increasing the \textit{length of an oracle solution prefix} provided in context should correspond to linearly improving judge scores.

% Specifically, our strategy relies on a particular structure in state-matching rewards -- given an expert trajectory, and a \textit{prefix} of that trajectory, the total state-matching reward should be proportional to the prefix ratio. This is a rough heuristic, as tokens don't translate directly into semantic states, however at long horizons this property should roughly hold.

We can additionally couple the above criterion with a small set of hand-designed validation trajectories, where manual inspection reveals where the trajectory deviates from empirical returns. A strong judge prompt should correctly show this deviation in its \methodname\ measure. %The full prompts are shown in Appendix~\ref{app:prompts}.

% \begin{figure}
%     \centering
% \includegraphics[width=\linewidth]{imgs/gsminf.png}
%     \caption{GSM-Infinite \Methodname\ improves over sparse }
%     \label{fig:gsminf}
% \end{figure}

% \begin{table}[t]
%     \centering
%     \setlength{\tabcolsep}{5pt}
%     \caption{GSM-Infinite success rates at 240M training tokens.}
%     \begin{tabular}{@{}lcccccc@{}}
%     \toprule
%     & \multicolumn{2}{c}{$n=8$}
%     & \multicolumn{2}{c}{$n=16$}
%     & \multicolumn{2}{c}{$n=24$} \\
%     \cmidrule(lr){2-3} \cmidrule(lr){4-5} \cmidrule(l){6-7}
%     Method
%     & p@1 & p@8
%     & p@1 & p@8
%     & p@1 & p@8 \\
%     \midrule
%     Sparse outcome reward & 0.404 & 0.867 & 0.047 & 0.191 & 0.082 & \textbf{0.309} \\
%     OPSD & 0.309 & 0.750 & 0.048 & 0.266 & 0.048 & 0.242 \\
%     Verifree & \textbf{0.680} & \textbf{0.933} & 0.060 & 0.235 & 0.059 & 0.243 \\
%     Process reward & 0.174 & 0.594 & 0.053 & 0.224 & 0.055 & 0.226 \\
%     Progressive point matching (outcome) & 0.417 & 0.817 & 0.130 & \textbf{0.521} & 0.040 & 0.208 \\
%     Progressive point matching & 0.368 & 0.895 & \textbf{0.154} & 0.480 & \textbf{0.228} & 0.292 \\
%     \bottomrule
%     \end{tabular}
%     \label{tab:gsminf}
%   \end{table}

\subsection{Does \methodname\ support extrapolation to longer inference budgets?}\label{sec:extrapolation}

% \pf{The table does not really answer this question. Perhaps we can then focus on the POPE results for both this and the below subsection? \Cref{tab:polaris-dataset} is important to show that our method is stable on a standard dataset, but is separate from the main point of the paper.}

\begin{table}[t]
  \centering
  \setlength{\tabcolsep}{5pt}
  \caption{Qwen3-4B trained on Polaris dataset at 8K token budget and evaluated at 16K.}
  \begin{tabular}{@{}lcccccccc@{}}
  \toprule
  & \multicolumn{2}{c}{Train}
  & \multicolumn{2}{c}{Test}
  & \multicolumn{2}{c}{AIME25}
  & \multicolumn{2}{c}{HMMT25} \\
  \cmidrule(lr){2-3} \cmidrule(lr){4-5} \cmidrule(lr){6-7} \cmidrule(l){8-9}
  Method
  & p@1 & p@8
  & p@1 & p@8
  & p@1 & p@8
  & p@1 & p@8 \\
  \midrule
  Base model & 0.156 & 0.292 & 0.416 & 0.745 & 0.510 & 0.740 & 0.346 & 0.569 \\
  % SFT & 0.189 & 0.581 & 0.218 & 0.585 & 0.083 & 0.195 & \\
  SFT & 0.210 & \textbf{0.609} & 0.264 & 0.651 & 0.098 & 0.246 & 0.060 & 0.175 \\
  GRPO & 0.203 & 0.401 & 0.441 & 0.806 & \textbf{0.560} & 0.727 & 0.354 & 0.556 \\
  % GRPO + POPE (needs rerun) & \textbf{0.313} & \textbf{0.549} & \textbf{0.467} & 0.820 & 0.535 & 0.713 & 0.356 & 0.583 \\
  Verifree & 0.238 & 0.412 & \textbf{0.461} & 0.799 & \textbf{0.569} & \textbf{0.757} & 0.358 & 0.578 \\
  Process rewards & 0.195 & 0.396 & 0.451 & \textbf{0.832} & 0.542 & \textbf{0.756} & 0.367 & 0.537 \\
  OPSD & 0.215 & 0.433 & 0.436 & 0.815 & 0.529 & 0.729 & 0.358 & \textbf{0.597} \\
  % PPM (outcome) & 0.253 & 0.440 & 0.443 & 0.802 & 0.556 & 0.709 & 0.365 & 0.583 \\
  % PPM (4 chunks, no graph) & 0.255 & 0.443 & 0.461 & \textbf{0.880} & 0.512 & 0.744 & \textbf{0.373} & 0.561 \\
  % PPM (4 chunks, sparse graph) & 0.293 & 0.495 & 0.416 & 0.785 & 0.535 & 0.746 & \textbf{0.371} & 0.588 \\
  % PPM (4 chunks, medium graph) & 0.275 & 0.502 & \textbf{0.477} & \textbf{0.871} & 0.506 & 0.727 & \textbf{0.371} & 0.581 \\
  % PPM (4 chunks, dense graph) & 0.258 & 0.452 & 0.459 & 0.807 & 0.535 & \textbf{0.748} & \textbf{0.379} & \textbf{0.598} \\
  \Methodname & \textbf{0.258} & 0.452 & \textbf{0.459} & 0.807 & 0.535 & \textbf{0.748} & \textbf{0.379} & \textbf{0.598} \\

  % PPM + POPE (outcome) \\
  % PPM + POPE (4 chunks, no graph) & 0.261 & 0.464 & 0.426 & 0.804 & 0.550 & 0.738 & \textbf{0.379} & 0.584 \\
  % PPM + POPE (4 chunks, sparse graph) & 0.277 & 0.498 & 0.436 & 0.783 & 0.546 & 0.745 & 0.338 & 0.540 \\
  % PPM + POPE (4 chunks, medium graph) & 0.245 & 0.472 & 0.422 & 0.808 & 0.542 & 0.730 & 0.360 & \textbf{0.628} \\
  % PPM + POPE (4 chunks, dense graph) & 0.284 & 0.501 & 0.424 & 0.792 & \textbf{0.573} & 0.730 & 0.369 & 0.548 \\
  % Base model & -- & -- & 0.416 & 0.745 & 0.510 & 0.740 & 0.346 & 0.569 \\
  % SFT & -- & -- & -- & -- & -- & -- & -- & -- \\
  % GRPO & 0.203 & 0.407 & 0.443 & 0.809 & \textbf{0.560} & \textbf{0.757} & 0.355 & 0.559 \\
  % GRPO + POPE & 0.343 & 0.660 & 0.467 & 0.820 & 0.554 & 0.765 & 0.367 & 0.573 \\
  % Verifree & 0.238 & 0.425 & 0.457 & 0.819 & \textbf{0.559} & 0.745 & 0.352 & 0.562 \\
  % Process rewards & 0.214 & 0.396 & 0.451 & 0.814 & 0.529 & 0.741 & 0.364 & 0.536 \\
  % OPSD & 0.215 & 0.433 & 0.436 & 0.815 & 0.529 & 0.729 & 0.358 & \textbf{0.597} \\
  % Prog.\ point matching (not chunk) \\
  % Prog.\ point matching (no graph) & 0.255 & 0.443 & 0.461 & \textbf{0.880} & 0.526 & 0.744 & 0.373 & 0.561 \\
  % Prog.\ point matching (sparse graph) & \textbf{0.293} & \textbf{0.495} & 0.453 & \textbf{0.871} & 0.521 & 0.736 & 0.371 & \textbf{0.593} \\
  % Prog.\ point matching (medium graph) & 0.275 & \textbf{0.502} & \textbf{0.477} & \textbf{0.871} & 0.514 & 0.742 & 0.371 & 0.581 \\
  % Prog.\ point matching (dense graph) & 0.258 & 0.452 & 0.455 & 0.784 & 0.548 & 0.744 & \textbf{0.385} & \textbf{0.597} \\
  \bottomrule
  \end{tabular}
  \label{tab:polaris-dataset}
  \end{table}

% \kf{I guess this entire section needs reworking}

We now investigate whether \methodname\ can be applied to real math reasoning, and we specifically focus on setting in which we desire test-time extrapolation of the learned policies. We use the Polaris dataset \cite{Polaris2025} of math problems with verifiable numerical answers and oracle reference solutions generated by gemini-3-flash \cite{gemini3flash2025}.

In addition to the baselines utilized in the previous section, we also compare against SFT, which maximizes the likelihood of reference trajectories.

As shown in \Cref{tab:polaris-dataset}, \methodname\ reliably extracts signal from hard-to-solve math problems. In particular, policies trained via \methodname\ extrapolate cleanly when they are evaluated at 16K tokens, even though the policies themselves are trained at much lower budgets. As an on-policy RL method, \methodname\ does not suffer from catastrophic collapse, and maintains consistent entropy to achieve considerable improvement at pass@8.

\subsection{Can \methodname\ learn from near-impossible task sets?}\label{sec:near-impossible-tasks}

\begin{wrapfigure}[16]{r}{0.5\textwidth}
    \centering
    \vspace{-2em}
    \includegraphics[width=\linewidth]{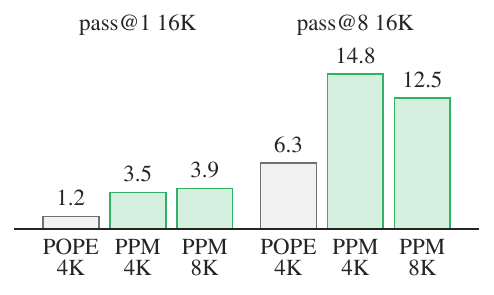}
    \vspace{-2.2em}
    \caption{We train on the POPE-hard dataset \cite{qu2026pope} at length 4K or 8K tokens and evaluate the learned policy at 16K tokens. \Methodname\ outperforms the best baseline (POPE). Moreover, the test-time performance is \textit{comparable or better} when training on shorter sequences (4K).}
    \label{fig:pope-dataset}
\end{wrapfigure}

We now investigate whether \methodname\ can provide partial credit signal to learn from, even in the case where ground-truth reward signal is near zero. We train on a subset of the POPE-hard dataset \cite{qu2026pope} with verifiable numerical answers and with oracle solutions generated by gemini-2.5-pro. Prompts are filtered such that when evaluated at an 8K token budget at 16 trials, the base Qwen3-4B-Instruct model never produces a successful trajectory (see Appendix~\ref{app:environments}. As shown in \Cref{tab:pope-dataset}, the aggregate base model pass rate at 8K tokens is 0.004, and pass rate at 4K tokens is zero.

The best-performing baseline in this setting is the Privileged On-Policy Exploration (POPE) \cite{qu2026pope} algorithm, which modifies tasks by conditioning on prefixes of reference trajectories and instructing the model to continue solving the problem without restating any part of the reference trajectory. In our experiments (\Cref{fig:pope-dataset}, \Cref{tab:pope-dataset}), POPE improves over GRPO and other baselines, but PPM displays the strongest performance overall.

\textbf{Pitfall of greedy optimization.} As shown in \Cref{fig:pope-dataset}, the \methodname\ policy trained at 4K tokens surprisingly outperforms the policy trained at 8K tokens, when evaluated at a test-time budget of 16K. We find that this is due to a collapse in output length -- the policy trained at 8K length tends to greedily guess at the answer and end its reasoning process early. We hypothesize this is due to solve the task within the training length budget, biasing the policy toward greedy behavior. On the other hand, the policy trained at 4K length has no such pressure as its success rate is entirely zero, and thus all signal comes from reaching \textit{intermediate} reasoning points. This finding highlights a surprising phenomenon where training at lower sequence lengths may be \textit{beneficial}, and is a promising direction for future focus.

\section{Discussion}\label{sec:conclusion}

The standard approach to reinforcement learning for LLMs requires exponentially many samples with the task horizon, manifesting in slow learning. To overcome this limitation, we introduce \textbf{\METHODNAME}, a framework that bridges RL and imitation learning by guiding exploration over the reasoning graph induced by a \textit{single} reference trajectory per prompt. By rewarding the size of the set of reached intermediate reasoning points, \methodname\ can converge to the same set of optimal policies as with outcome rewards, learn diverse strategies not present in reference trajectories, and learn from hard math problems where existing RL approaches fail.

\Methodname\ enables strong performance improvements over prior approaches when evaluated at longer inference token budgets, as shown for math reasoning in both the standard (\Cref{sec:extrapolation}) and near-impossible (\Cref{sec:near-impossible-tasks}) data regimes. However, as we briefly discuss in \Cref{sec:near-impossible-tasks}, we hypothesize that there are three regimes based on the training token budgets. Specifically, \textbf{(i)} at very short sequences it is impossible for the policy to complete the task within the budget, hence the optimal behavior is to maximize partial progress; \textbf{(ii)} at medium-length sequences we experience greedy  behavior due to length pressure; \textbf{(iii)} at long budgets there is no length pressure, hence the optimal behavior is to maximize task completion (\Cref{prop:optimality-equivalence}). Our synthetic environments (Multi-Countdown, Matrix Manipulation, GSM-Infinite) generally operate within regime \textbf{(iii)}, while our math reasoning tasks operate in regime \textbf{(i)} or \textbf{(ii)}.

\textbf{Limitations.} We expect that, for each training dataset, there exists a \emph{compute-optimal} training sequence length that maximizes evaluation performance at larger test-time token budgets. Eliciting extrapolation may therefore depend on training at an appropriate sequence length. For highly heterogeneous datasets, this could be achieved either by subsampling problems of suitable difficulty, as in \citet{setlur2025e3learningexploreenables}, or by using our approach to reward partial progress in regimes \textbf{(i)} and \textbf{(iii)}. Standard outcome-reward RL cannot effectively train in regime \textbf{(i)}, because the policy never receives a nonzero reward signal; it must filter data as in prior work. In regime \textbf{(ii)}, however, partial-progress rewards may also be less effective, as both partial-progress and pure outcome-reward training may collapse to greedy behavior and thereby reduce exploration. 

While this work covers such phenomena, empirical results are limited by scale, and would benefit from larger models, larger datasets to properly test generalization and scalability of reference trajectories, and longer-horizon tasks such as extended multi-turn interactions or agentic coding, in which the test-time token budget may be multiple orders of magnitude longer than the training token budget. We encourage future work to study the interplay between partial rewards and response lengths. 
% However, understanding these empirical phenomena at scale will require
% \begin{itemize}
    % \item 
%     Larger models -- we use models with at most 4B parameters. \item Larger datasets -- we have a limited ability to generate sufficiently high-quality reference trajectories at scale. Large open datasets of extremely challenging math problems do not exist to our knowledge, forcing us to train for up to 200 epochs on the POPE-hard \cite{qu2026pope} dataset.
%     \item Longer-horizon tasks -- extended multi-turn interactions or agentic coding, in which the test-time token budget may be multiple orders of magnitude longer than the training token budget.
% \end{itemize}
% We encourage future work to study the interplay between partial rewards and response lengths. 

% Additionally, our framework introduces the concepts of \textit{reasoning points} and the \textit{\methodname\ measure}. One may evaluate them via \Cref{sec:q-judge}, and we instantiate reasonable baselines in each case as an off-the-shelf Gemini model with prompts in Appendix~\ref{app:prompts}. However, this prompt design procedure is somewhat domain-specific and may require multiple iterations of manual inspection. An end-to-end procedure involving finetuning an LLM may be desirable.
% Use bigger model and data \pf{this dataset does not exist, would need to evaluate very long horizon and also have more prompts as POPE is only 200 prompts}

\textbf{Future work.} Practitioners should more rigorously characterize the three regimes described above and identify the interplay between length budget for RL, the prompt (data) mixture, and partial rewards. Another exciting direction -- assuming access to sufficiently long-horizon tasks -- is establishing scaling laws for partial credit assignment methods, with the objective of maximizing success rates at much larger test-time token budgets. Additionally, our framework introduces \textit{reasoning points} and a method for evaluating them via \Cref{sec:q-judge}; future work may explore training an LLM to improve reasoning point generation rather than following a manual and domain-specific procedure. Finally, we highlight that our approach relies \textit{solely} on reference trajectories, from which we implicitly determine goal points. Therefore, \methodname\ and other methods improving on imitation learning may enable training in non-verifiable domains.

\newpage
\appendix

\section*{Acknowledgements}

We thank the TPU research cloud (TRC) program for the compute resources that powered this work, and Google Cloud for the Gemini API.

\bibliography{neurips_2025}

@inproceedings{Pomerleau1988ALVINN,
  title = {{ALVINN}: An Autonomous Land Vehicle in a Neural Network},
  author = {Pomerleau, Dean A.},
  booktitle = {Advances in Neural Information Processing Systems},
  pages = {305--313},
  year = {1988},
  url = {https://papers.nips.cc/paper/1988/hash/812b4ba287f5ee0bc9d43bbf5bbe87fb-Abstract.html}
}

@misc{stojanovski2025reasoninggymreasoningenvironments,
      title={Reasoning Gym: Reasoning Environments for Reinforcement Learning with Verifiable Rewards}, 
      author={Zafir Stojanovski and Oliver Stanley and Joe Sharratt and Richard Jones and Abdulhakeem Adefioye and Jean Kaddour and Andreas Köpf},
      year={2025},
      eprint={2505.24760},
      archivePrefix={arXiv},
      primaryClass={cs.LG},
      url={https://arxiv.org/abs/2505.24760}, 
}

@article{research2026composer,
  title={Composer 2 Technical Report},
  author={Research, Cursor and Chan, Aaron and Shalaby, Ahmed and Wettig, Alexander and Sanger, Aman and Zhai, Andrew and Ajay, Anurag and Nair, Ashvin and Snell, Charlie and Lu, Chen and others},
  journal={arXiv preprint arXiv:2603.24477},
  year={2026}
}

@article{gandhi2024stream,
  title={Stream of search (sos): Learning to search in language},
  author={Gandhi, Kanishk and Lee, Denise and Grand, Gabriel and Liu, Muxin and Cheng, Winson and Sharma, Archit and Goodman, Noah D},
  journal={arXiv preprint arXiv:2404.03683},
  year={2024}
}

@misc{zhou2025gsminfinitellmsbehaveinfinitely,
      title={GSM-Infinite: How Do Your LLMs Behave over Infinitely Increasing Context Length and Reasoning Complexity?}, 
      author={Yang Zhou and Hongyi Liu and Zhuoming Chen and Yuandong Tian and Beidi Chen},
      year={2025},
      eprint={2502.05252},
      archivePrefix={arXiv},
      primaryClass={cs.CL},
      url={https://arxiv.org/abs/2502.05252}, 
}

@inproceedings{Ross2011Reduction,
  title = {A Reduction of Imitation Learning and Structured Prediction to No-Regret Online Learning},
  author = {Ross, Stephane and Gordon, Geoffrey and Bagnell, Drew},
  booktitle = {Proceedings of the Fourteenth International Conference on Artificial Intelligence and Statistics},
  pages = {627--635},
  year = {2011},
  editor = {Gordon, Geoffrey and Dunson, David and Dud{\'i}k, Miroslav},
  volume = {15},
  series = {Proceedings of Machine Learning Research},
  address = {Fort Lauderdale, FL, USA},
  month = {11--13 Apr},
  publisher = {PMLR},
  url = {https://proceedings.mlr.press/v15/ross11a.html}
}

@misc{luo2024improvemathematicalreasoninglanguage,
      title={Improve Mathematical Reasoning in Language Models by Automated Process Supervision}, 
      author={Liangchen Luo and Yinxiao Liu and Rosanne Liu and Samrat Phatale and Meiqi Guo and Harsh Lara and Yunxuan Li and Lei Shu and Yun Zhu and Lei Meng and Jiao Sun and Abhinav Rastogi},
      year={2024},
      eprint={2406.06592},
      archivePrefix={arXiv},
      primaryClass={cs.CL},
      url={https://arxiv.org/abs/2406.06592}, 
}

@misc{andrychowicz2018hindsightexperiencereplay,
      title={Hindsight Experience Replay}, 
      author={Marcin Andrychowicz and Filip Wolski and Alex Ray and Jonas Schneider and Rachel Fong and Peter Welinder and Bob McGrew and Josh Tobin and Pieter Abbeel and Wojciech Zaremba},
      year={2018},
      eprint={1707.01495},
      archivePrefix={arXiv},
      primaryClass={cs.LG},
      url={https://arxiv.org/abs/1707.01495}, 
}

@inproceedings{ng1999policy,
author = {Ng, Andrew Y. and Harada, Daishi and Russell, Stuart J.},
title = {Policy Invariance Under Reward Transformations: Theory and Application to Reward Shaping},
year = {1999},
isbn = {1558606122},
publisher = {Morgan Kaufmann Publishers Inc.},
address = {San Francisco, CA, USA},
booktitle = {Proceedings of the Sixteenth International Conference on Machine Learning},
pages = {278–287},
numpages = {10},
series = {ICML '99}
}

@misc{garg2022iqlearninversesoftqlearning,
      title={IQ-Learn: Inverse soft-Q Learning for Imitation}, 
      author={Divyansh Garg and Shuvam Chakraborty and Chris Cundy and Jiaming Song and Matthieu Geist and Stefano Ermon},
      year={2022},
      eprint={2106.12142},
      archivePrefix={arXiv},
      primaryClass={cs.LG},
      url={https://arxiv.org/abs/2106.12142}, 
}

@misc{lightman2023letsverifystepstep,
      title={Let's Verify Step by Step}, 
      author={Hunter Lightman and Vineet Kosaraju and Yura Burda and Harri Edwards and Bowen Baker and Teddy Lee and Jan Leike and John Schulman and Ilya Sutskever and Karl Cobbe},
      year={2023},
      eprint={2305.20050},
      archivePrefix={arXiv},
      primaryClass={cs.LG},
      url={https://arxiv.org/abs/2305.20050}, 
}

@misc{yang2025qwen3technicalreport,
      title={Qwen3 Technical Report}, 
      author={An Yang and Anfeng Li and Baosong Yang and Beichen Zhang and Binyuan Hui and Bo Zheng and Bowen Yu and Chang Gao and Chengen Huang and Chenxu Lv and Chujie Zheng and Dayiheng Liu and Fan Zhou and Fei Huang and Feng Hu and Hao Ge and Haoran Wei and Huan Lin and Jialong Tang and Jian Yang and Jianhong Tu and Jianwei Zhang and Jianxin Yang and Jiaxi Yang and Jing Zhou and Jingren Zhou and Junyang Lin and Kai Dang and Keqin Bao and Kexin Yang and Le Yu and Lianghao Deng and Mei Li and Mingfeng Xue and Mingze Li and Pei Zhang and Peng Wang and Qin Zhu and Rui Men and Ruize Gao and Shixuan Liu and Shuang Luo and Tianhao Li and Tianyi Tang and Wenbiao Yin and Xingzhang Ren and Xinyu Wang and Xinyu Zhang and Xuancheng Ren and Yang Fan and Yang Su and Yichang Zhang and Yinger Zhang and Yu Wan and Yuqiong Liu and Zekun Wang and Zeyu Cui and Zhenru Zhang and Zhipeng Zhou and Zihan Qiu},
      year={2025},
      eprint={2505.09388},
      archivePrefix={arXiv},
      primaryClass={cs.CL},
      url={https://arxiv.org/abs/2505.09388}, 
}

@misc{wang2024mathshepherdverifyreinforcellms,
      title={Math-Shepherd: Verify and Reinforce LLMs Step-by-step without Human Annotations}, 
      author={Peiyi Wang and Lei Li and Zhihong Shao and R. X. Xu and Damai Dai and Yifei Li and Deli Chen and Y. Wu and Zhifang Sui},
      year={2024},
      eprint={2312.08935},
      archivePrefix={arXiv},
      primaryClass={cs.AI},
      url={https://arxiv.org/abs/2312.08935}, 
}

@inproceedings{zhang2025lessons,
  title={The lessons of developing process reward models in mathematical reasoning},
  author={Zhang, Zhenru and Zheng, Chujie and Wu, Yangzhen and Zhang, Beichen and Lin, Runji and Yu, Bowen and Liu, Dayiheng and Zhou, Jingren and Lin, Junyang},
  booktitle={Findings of the Association for Computational Linguistics: ACL 2025},
  pages={10495--10516},
  year={2025}
}

@misc{cui2025processreinforcementimplicitrewards,
      title={Process Reinforcement through Implicit Rewards}, 
      author={Ganqu Cui and Lifan Yuan and Zefan Wang and Hanbin Wang and Yuchen Zhang and Jiacheng Chen and Wendi Li and Bingxiang He and Yuchen Fan and Tianyu Yu and Qixin Xu and Weize Chen and Jiarui Yuan and Huayu Chen and Kaiyan Zhang and Xingtai Lv and Shuo Wang and Yuan Yao and Xu Han and Hao Peng and Yu Cheng and Zhiyuan Liu and Maosong Sun and Bowen Zhou and Ning Ding},
      year={2025},
      eprint={2502.01456},
      archivePrefix={arXiv},
      primaryClass={cs.LG},
      url={https://arxiv.org/abs/2502.01456}, 
}

@techreport{gemini3flash2025,
  author      = {{Google DeepMind}},
  title       = {Gemini 3 Flash Model Card},
  institution = {Google},
  year        = {2025},
  url         = {https://storage.googleapis.com/deepmind-media/Model-Cards/Gemini-3-Flash-Model-Card.pdf}
}

@techreport{gemini25flashlite2025,
  author      = {{Google DeepMind}},
  title       = {Gemini 2.5 Flash-Lite Model Card},
  institution = {Google},
  year        = {2025},
  url         = {https://storage.googleapis.com/deepmind-media/Model-Cards/Gemini-2-5-Flash-Lite-Model-Card.pdf}
}

@misc{Polaris2025,
    title = {POLARIS: A Post-Training Recipe for Scaling Reinforcement Learning on Advanced Reasoning Models},
    url = {https://hkunlp.github.io/blog/2025/Polaris},
    author = {An, Chenxin and Xie, Zhihui and Li, Xiaonan and Li, Lei and Zhang, Jun and Gong, Shansan and Zhong, Ming and Xu, Jingjing and Qiu, Xipeng and Wang, Mingxuan and Kong, Lingpeng},
    year = {2025}
}

@misc{yu2025dapoopensourcellmreinforcement,
      title={DAPO: An Open-Source LLM Reinforcement Learning System at Scale}, 
      author={Qiying Yu and Zheng Zhang and Ruofei Zhu and Yufeng Yuan and Xiaochen Zuo and Yu Yue and Weinan Dai and Tiantian Fan and Gaohong Liu and Lingjun Liu and Xin Liu and Haibin Lin and Zhiqi Lin and Bole Ma and Guangming Sheng and Yuxuan Tong and Chi Zhang and Mofan Zhang and Wang Zhang and Hang Zhu and Jinhua Zhu and Jiaze Chen and Jiangjie Chen and Chengyi Wang and Hongli Yu and Yuxuan Song and Xiangpeng Wei and Hao Zhou and Jingjing Liu and Wei-Ying Ma and Ya-Qin Zhang and Lin Yan and Mu Qiao and Yonghui Wu and Mingxuan Wang},
      year={2025},
      eprint={2503.14476},
      archivePrefix={arXiv},
      primaryClass={cs.LG},
      url={https://arxiv.org/abs/2503.14476}, 
}

@misc{setlur2025e3learningexploreenables,
      title={e3: Learning to Explore Enables Extrapolation of Test-Time Compute for LLMs}, 
      author={Amrith Setlur and Matthew Y. R. Yang and Charlie Snell and Jeremy Greer and Ian Wu and Virginia Smith and Max Simchowitz and Aviral Kumar},
      year={2025},
      eprint={2506.09026},
      archivePrefix={arXiv},
      primaryClass={cs.LG},
      url={https://arxiv.org/abs/2506.09026}, 
}

@misc{Cheng2026isocompute,
author={Cheng*, Zhoujun and Xie*, Yutao and Qu*, Yuxiao and Setlur*, Amrith and Hao, Shibo and Pimpalkhute, Varad and Liang, Tongtong and Yao, Feng and Liu, Hector and Xing, Eric and Smith, Virginia and Salakhutdinov, Ruslan and Hu, Zhiting and Killian, Taylor and Kumar, Aviral},
title={IsoCompute Playbook: Optimally Scaling Sampling Compute for RL Training of LLMs},
url={https://compute-optimal-rl-llm-scaling.github.io/},
year={2026},
}

@misc{Ross2014Reinforcement,
  title = {Reinforcement and Imitation Learning via Interactive No-Regret Learning},
  author = {Ross, Stephane and Bagnell, J. Andrew},
  year = {2014},
  eprint = {1406.5979},
  archivePrefix = {arXiv},
  primaryClass = {cs.LG},
  doi = {10.48550/arXiv.1406.5979},
  url = {https://arxiv.org/abs/1406.5979}
}

@inproceedings{Sun2017DeeplyAggreVaTeD,
  title = {Deeply {AggreVaTeD}: Differentiable Imitation Learning for Sequential Prediction},
  author = {Sun, Wen and Venkatraman, Arun and Gordon, Geoffrey J. and Boots, Byron and Bagnell, J. Andrew},
  booktitle = {Proceedings of the 34th International Conference on Machine Learning},
  pages = {3309--3318},
  year = {2017},
  editor = {Precup, Doina and Teh, Yee Whye},
  volume = {70},
  series = {Proceedings of Machine Learning Research},
  month = {06--11 Aug},
  publisher = {PMLR},
  url = {https://proceedings.mlr.press/v70/sun17d.html}
}

@inproceedings{Ho2016Generative,
  title = {Generative Adversarial Imitation Learning},
  author = {Ho, Jonathan and Ermon, Stefano},
  booktitle = {Advances in Neural Information Processing Systems},
  pages = {4565--4573},
  year = {2016},
  editor = {Lee, D. and Sugiyama, M. and Luxburg, U. and Guyon, I. and Garnett, R.},
  volume = {29},
  publisher = {Curran Associates, Inc.},
  url = {https://proceedings.neurips.cc/paper/2016/hash/cc7e2b878868cbae992d1fb743995d8f-Abstract.html}
}

@inproceedings{Ke2021Imitation,
  title = {Imitation Learning as {$f$}-Divergence Minimization},
  author = {Ke, Liyiming and Choudhury, Sanjiban and Barnes, Matt and Sun, Wen and Lee, Gilwoo and Srinivasa, Siddhartha},
  booktitle = {Algorithmic Foundations of Robotics XIV},
  pages = {313--329},
  year = {2021},
  editor = {LaValle, Steven M. and Lin, Ming and Ojala, Timo and Shell, Dylan and Yu, Jingjin},
  publisher = {Springer International Publishing},
  address = {Cham},
  doi = {10.1007/978-3-030-66723-8_19},
  isbn = {978-3-030-66723-8},
  url = {https://link.springer.com/chapter/10.1007/978-3-030-66723-8_19}
}

@inproceedings{Zhang2020FGAIL,
  title = {{f}-{GAIL}: Learning {$f$}-Divergence for Generative Adversarial Imitation Learning},
  author = {Zhang, Xin and Li, Yanhua and Zhang, Ziming and Zhang, Zhi-Li},
  booktitle = {Advances in Neural Information Processing Systems},
  pages = {12805--12815},
  year = {2020},
  editor = {Larochelle, H. and Ranzato, M. and Hadsell, R. and Balcan, M. F. and Lin, H.},
  volume = {33},
  publisher = {Curran Associates, Inc.},
  url = {https://proceedings.neurips.cc/paper/2020/hash/967990de5b3eac7b87d49a13c6834978-Abstract.html}
}

@inproceedings{Schroecker2017StateAware,
  title = {State Aware Imitation Learning},
  author = {Schroecker, Yannick and Isbell, Charles},
  booktitle = {Advances in Neural Information Processing Systems},
  pages = {2911--2920},
  year = {2017},
  url = {https://proceedings.neurips.cc/paper_files/paper/2017/hash/08e6bea8e90ba87af3c9554d94db6579-Abstract.html}
}

@inproceedings{Sun2019ProvablyEfficient,
  title = {Provably Efficient Imitation Learning from Observation Alone},
  author = {Sun, Wen and Vemula, Anirudh and Boots, Byron and Bagnell, Drew},
  booktitle = {Proceedings of the 36th International Conference on Machine Learning},
  pages = {6036--6045},
  year = {2019},
  editor = {Chaudhuri, Kamalika and Salakhutdinov, Ruslan},
  volume = {97},
  series = {Proceedings of Machine Learning Research},
  month = {09--15 Jun},
  publisher = {PMLR},
  url = {https://proceedings.mlr.press/v97/sun19b.html}
}

@inproceedings{Nachum2019DualDICE,
  author = {Nachum, Ofir and Chow, Yinlam and Dai, Bo and Li, Lihong},
  title = {DualDICE: Behavior-Agnostic Estimation of Discounted Stationary Distribution Corrections},
  booktitle = {Advances in Neural Information Processing Systems},
  editor = {Wallach, Hanna and Larochelle, Hugo and Beygelzimer, Alina and d'Alch{\'e}-Buc, Florence and Fox, Emily and Garnett, Roman},
  publisher = {Curran Associates, Inc.},
  volume = {32},
  year = {2019},
  url = {https://proceedings.neurips.cc/paper_files/paper/2019/file/cf9a242b70f45317ffd281241fa66502-Paper.pdf}
}

@article{Kostrikov2019ValueDICE,
  author = {Kostrikov, Ilya and Nachum, Ofir and Tompson, Jonathan},
  title = {Imitation Learning via Off-Policy Distribution Matching},
  journal = {CoRR},
  volume = {abs/1912.05032},
  year = {2019},
  url = {http://arxiv.org/abs/1912.05032},
  eprinttype = {arXiv},
  eprint = {1912.05032}
}

@inproceedings{Ma2022SMODICE,
  title = {Versatile Offline Imitation from Observations and Examples via Regularized State-Occupancy Matching},
  author = {Ma, Yecheng and Shen, Andrew and Jayaraman, Dinesh and Bastani, Osbert},
  booktitle = {Proceedings of the 39th International Conference on Machine Learning},
  pages = {14639--14663},
  year = {2022},
  editor = {Chaudhuri, Kamalika and Jegelka, Stefanie and Song, Le and Szepesvari, Csaba and Niu, Gang and Sabato, Sivan},
  volume = {162},
  series = {Proceedings of Machine Learning Research},
  month = {17--23 Jul},
  publisher = {PMLR},
  url = {https://proceedings.mlr.press/v162/ma22a.html}
}

@inproceedings{Reddy2020SQIL,
  title = {{SQIL}: Imitation Learning via Reinforcement Learning with Sparse Rewards},
  author = {Reddy, Siddharth and Dragan, Anca D. and Levine, Sergey},
  booktitle = {International Conference on Learning Representations},
  year = {2020},
  url = {https://openreview.net/forum?id=S1xKd24twB}
}

@inproceedings{Chiang2024ExpertProximity,
  title = {Expert Proximity as Surrogate Rewards for Single Demonstration Imitation Learning},
  author = {Chiang, Chia-Cheng and Lan, Li-Cheng and Sun, Wei-Fang and Feng, Chien and Hsieh, Cho-Jui and Lee, Chun-Yi},
  booktitle = {Proceedings of the 41st International Conference on Machine Learning},
  pages = {8336--8358},
  year = {2024},
  editor = {Salakhutdinov, Ruslan and Kolter, Zico and Heller, Katherine and Weller, Adrian and Oliver, Nuria and Scarlett, Jonathan and Berkenkamp, Felix},
  volume = {235},
  series = {Proceedings of Machine Learning Research},
  month = {21--27 Jul},
  publisher = {PMLR},
  url = {https://proceedings.mlr.press/v235/chiang24a.html}
}

@article{setlur2024rewarding,
  title={Rewarding progress: Scaling automated process verifiers for llm reasoning},
  author={Setlur, Amrith and Nagpal, Chirag and Fisch, Adam and Geng, Xinyang and Eisenstein, Jacob and Agarwal, Rishabh and Agarwal, Alekh and Berant, Jonathan and Kumar, Aviral},
  journal={arXiv preprint arXiv:2410.08146},
  year={2024}
}

@article{ladosz2022exploration,
  title={Exploration in deep reinforcement learning: A survey},
  author={Ladosz, Pawel and Weng, Lilian and Kim, Minwoo and Oh, Hyondong},
  journal={Information Fusion},
  volume={85},
  pages={1--22},
  year={2022},
  publisher={Elsevier}
}

@article{schulman2025lora,
  author = {John Schulman and Thinking Machines Lab},
  title = {LoRA Without Regret},
  journal = {Thinking Machines Lab: Connectionism},
  year = {2025},
  note = {https://thinkingmachines.ai/blog/lora/},
  doi = {10.64434/tml.20250929},
}

@article{yue2025vapo,
  title={Vapo: Efficient and reliable reinforcement learning for advanced reasoning tasks},
  author={Yue, Yu and Yuan, Yufeng and Yu, Qiying and Zuo, Xiaochen and Zhu, Ruofei and Xu, Wenyuan and Chen, Jiaze and Wang, Chengyi and Fan, TianTian and Du, Zhengyin and others},
  journal={arXiv preprint arXiv:2504.05118},
  year={2025}
}

@article{snell2022offline,
  title={Offline rl for natural language generation with implicit language q learning},
  author={Snell, Charlie and Kostrikov, Ilya and Su, Yi and Yang, Mengjiao and Levine, Sergey},
  journal={arXiv preprint arXiv:2206.11871},
  year={2022}
}

@inproceedings{agarwal2024policy,
  title={On-policy distillation of language models: Learning from self-generated mistakes},
  author={Agarwal, Rishabh and Vieillard, Nino and Zhou, Yongchao and Stanczyk, Piotr and Garea, Sabela Ramos and Geist, Matthieu and Bachem, Olivier},
  booktitle={The twelfth international conference on learning representations},
  year={2024}
}

@article{schulman2017proximal,
  title={Proximal policy optimization algorithms},
  author={Schulman, John and Wolski, Filip and Dhariwal, Prafulla and Radford, Alec and Klimov, Oleg},
  journal={arXiv preprint arXiv:1707.06347},
  year={2017}
}

@article{shao2024deepseekmath,
  title={Deepseekmath: Pushing the limits of mathematical reasoning in open language models},
  author={Shao, Zhihong and Wang, Peiyi and Zhu, Qihao and Xu, Runxin and Song, Junxiao and Bi, Xiao and Zhang, Haowei and Zhang, Mingchuan and Li, YK and Wu, Yang and others},
  journal={arXiv preprint arXiv:2402.03300},
  year={2024}
}

@inproceedings{schaul2015universal,
  title={Universal value function approximators},
  author={Schaul, Tom and Horgan, Daniel and Gregor, Karol and Silver, David},
  booktitle={International conference on machine learning},
  pages={1312--1320},
  year={2015},
  organization={PMLR}
}

@article{uesato2022solving,
  title={Solving math word problems with process-and outcome-based feedback},
  author={Uesato, Jonathan and Kushman, Nate and Kumar, Ramana and Song, Francis and Siegel, Noah and Wang, Lisa and Creswell, Antonia and Irving, Geoffrey and Higgins, Irina},
  journal={arXiv preprint arXiv:2211.14275},
  year={2022}
}

@inproceedings{zheng2025processbench,
  title={Processbench: Identifying process errors in mathematical reasoning},
  author={Zheng, Chujie and Zhang, Zhenru and Zhang, Beichen and Lin, Runji and Lu, Keming and Yu, Bowen and Liu, Dayiheng and Zhou, Jingren and Lin, Junyang},
  booktitle={Proceedings of the 63rd Annual Meeting of the Association for Computational Linguistics (Volume 1: Long Papers)},
  pages={1009--1024},
  year={2025}
}

@article{miao2023selfcheck,
  title={Selfcheck: Using llms to zero-shot check their own step-by-step reasoning},
  author={Miao, Ning and Teh, Yee Whye and Rainforth, Tom},
  journal={arXiv preprint arXiv:2308.00436},
  year={2023}
}

@article{khalifa2025process,
  title={Process reward models that think},
  author={Khalifa, Muhammad and Agarwal, Rishabh and Logeswaran, Lajanugen and Kim, Jaekyeom and Peng, Hao and Lee, Moontae and Lee, Honglak and Wang, Lu},
  journal={arXiv preprint arXiv:2504.16828},
  year={2025}
}

@inproceedings{zhang2023wisdom,
  title={The wisdom of hindsight makes language models better instruction followers},
  author={Zhang, Tianjun and Liu, Fangchen and Wong, Justin and Abbeel, Pieter and Gonzalez, Joseph E},
  booktitle={International Conference on Machine Learning},
  pages={41414--41428},
  year={2023},
  organization={PMLR}
}

@article{qu2026pope,
  title={POPE: Learning to Reason on Hard Problems via Privileged On-Policy Exploration},
  author={Qu, Yuxiao and Setlur, Amrith and Smith, Virginia and Salakhutdinov, Ruslan and Kumar, Aviral},
  journal={arXiv preprint arXiv:2601.18779},
  year={2026}
}

@article{qu2025rlad,
  title={RLAD: Training LLMs to Discover Abstractions for Solving Reasoning Problems},
  author={Qu, Yuxiao and Singh, Anikait and Lee, Yoonho and Setlur, Amrith and Salakhutdinov, Ruslan and Finn, Chelsea and Kumar, Aviral},
  journal={arXiv preprint arXiv:2510.02263},
  year={2025}
}

@article{chen2025nudging,
  title={Nudging the boundaries of llm reasoning},
  author={Chen, Justin Chih-Yao and Peng, Becky Xiangyu and Choubey, Prafulla Kumar and Huang, Kung-Hsiang and Zhang, Jiaxin and Bansal, Mohit and Wu, Chien-Sheng},
  journal={arXiv preprint arXiv:2509.25666},
  year={2025}
}

@article{zhao2026self,
  title={Self-Distilled Reasoner: On-Policy Self-Distillation for Large Language Models},
  author={Zhao, Siyan and Xie, Zhihui and Liu, Mengchen and Huang, Jing and Pang, Guan and Chen, Feiyu and Grover, Aditya},
  journal={arXiv preprint arXiv:2601.18734},
  year={2026}
}

@article{hubotter2026reinforcement,
  title={Reinforcement Learning via Self-Distillation},
  author={H{\"u}botter, Jonas and L{\"u}beck, Frederike and Behric, Lejs and Baumann, Anton and Bagatella, Marco and Marta, Daniel and Hakimi, Ido and Shenfeld, Idan and Buening, Thomas Kleine and Guestrin, Carlos and others},
  journal={arXiv preprint arXiv:2601.20802},
  year={2026}
}

@article{shenfeld2026self,
  title={Self-Distillation Enables Continual Learning},
  author={Shenfeld, Idan and Damani, Mehul and H{\"u}botter, Jonas and Agrawal, Pulkit},
  journal={arXiv preprint arXiv:2601.19897},
  year={2026}
}

@article{zelikman2022star,
  title={Star: Bootstrapping reasoning with reasoning},
  author={Zelikman, Eric and Wu, Yuhuai and Mu, Jesse and Goodman, Noah},
  journal={Advances in Neural Information Processing Systems},
  volume={35},
  pages={15476--15488},
  year={2022}
}

@article{zhou2025reinforcing,
  title={Reinforcing general reasoning without verifiers},
  author={Zhou, Xiangxin and Liu, Zichen and Sims, Anya and Wang, Haonan and Pang, Tianyu and Li, Chongxuan and Wang, Liang and Lin, Min and Du, Chao},
  journal={arXiv preprint arXiv:2505.21493},
  year={2025}
}

@article{tang2025learning,
  title={Learning to chain-of-thought with Jensen's evidence lower bound},
  author={Tang, Yunhao and Wang, Sid and Munos, R{\'e}mi and others},
  journal={arXiv e-prints},
  pages={arXiv--2503},
  year={2025}
}

@article{cai2025escaping,
  title={Escaping the verifier: Learning to reason via demonstrations},
  author={Cai, Locke and Provilkov, Ivan},
  journal={arXiv preprint arXiv:2511.21667},
  year={2025}
}

@article{chen2024language,
  title={Language models are hidden reasoners: Unlocking latent reasoning capabilities via self-rewarding},
  author={Chen, Haolin and Feng, Yihao and Liu, Zuxin and Yao, Weiran and Prabhakar, Akshara and Heinecke, Shelby and Ho, Ricky and Mui, Phil and Savarese, Silvio and Xiong, Caiming and others},
  journal={arXiv preprint arXiv:2411.04282},
  year={2024}
}
% \bibliographystyle{neurips_2025}
%%%%%%%%%%%%%%%%%%%%%%%%%%%%%%%%%%%%%%%%%%%%%%%%%%%%%%%%%%%%

% \part*{Appendices}

% \section{Technical Appendices and Supplementary Material}
% % Technical appendices with additional results, figures, graphs and proofs may be submitted with the paper submission before the full submission deadline (see above), or as a separate PDF in the ZIP file below before the supplementary material deadline. There is no page limit for the technical appendices.

% \section{Theoretical Analysis}

% Proofs of various theorems

\section{\METHODNAME\ as Variance Reduction}\label{app:theory}

% We state the assumptions of the theorem and introduce a variety of notation.

Assume that the task horizon is $T = n$, and at turn $t$, the policy always attempts to perform the reasoning at point $t$. Let $P_v$ be the set of prerequisites of $v$, including $v$ and excluding $0$, and $d_v = |P_v|$. Let $B_v \sim\mathrm{Bernoulli}(p)$ be independent across $v$ where $p \in (0,1)$ is fixed, and define $I_v = \prod_{j \in P_v} B_j$ to be an indicator for whether $v$ has been reached. Similarly define $I_g$.

Since the score $z_t = \nabla_\theta \log \pi_\theta(a_t | s_t)$ is high-dimensional (say $z_t, \theta \in \RR^k$), we instead analyze the variance of its projection $x_t = u\T z_t$ along an arbitrary unit vector $u$. However, we consider only the asymptotic behavior of each estimator with $n$, which does not depend on the choice of $u$.

We assume that pairs $(B_v, x_v)$ are independent over $v$. Define constants $\mu$, $\lambda$, $\eta$ independent of $v$ and $n$ satisfying
\begin{equation}
\EE[x_v]=0,\quad
\EE[B_vx_v]=\mu\neq0,\quad
\EE[B_vx_v^2]=\lambda,\quad
\EE[x_v^2]=\eta.
\end{equation}
By Cauchy--Schwarz, we have
\begin{equation}
    \mu^2 = \EE[B_v x_v]^2 \le \EE[B_v] \EE[B_v x_v^2] = p\lambda,
\end{equation}
so $\mu \neq 0$ and $p < 1$ imply $\lambda \ge \mu^2/p > \mu^2$.

We are now ready to define the three gradient estimates:
\begin{align}
    \text{Sparse outcome reward:} \quad Y_1 &= u\T \hat g_1 = I_g \sum_{v=1}^n x_v \\
    \text{\Methodname\ trajectory-level:} \quad Y_2 &= u\T \hat g_2 = \frac 1n \parens{\sum_{v=1}^n I_v} \parens{\sum_{v=1}^n x_v} \\
    \text{\Methodname\ segment-level:} \quad Y_3 &= u\T \hat g_3 = \frac 1n \sum_{v=1}^n I_v x_v.
\end{align}

In this appendix we characterize the relationship between the signal-to-noise ratio, $\snr(Y) := |\EE[Y]| / \sqrt{\Var(Y)}$, and the task horizon $n$ of each of these estimators. Coarsely, our main theoretical results can be summarized as
\begin{equation}
\snr(Y_1) \ll \snr(Y_2) \ll \snr(Y_3),
\end{equation}
where the latter inequality is shown in \Cref{prop:dense-beats-sparse}. Additionally, we provide worst-case analyses for the trajectory-level and turn-level \methodname\ gradient estimators. The general case lower bounds increase \textit{exponentially} from $Y_1$ to $Y_2$ and as $\operatorname{poly}(n)$ from $Y_2$ to $Y_3$, as shown in \Cref{prop:outcome-universal,prop:turn-level}. Subject to additional regularity conditions that upper-bound the correlations between the reachability of points in the graph, these lower bounds can be improved by $\operatorname{poly}(n)$. Under these assumptions, \Cref{prop:sparse,prop:dense-outcome,cor:turn-level-bounded-variance} comprise \Cref{thm:snr}.

We used OpenAI's GPT-5.6 Sol to assist in developing and refining portions of the mathematical proofs. We subsequently checked, revised, and independently verified all resulting arguments.

\subsection{SNR via sparse outcome rewards}

\begin{prop}
$\operatorname{SNR}(Y_1) = \Theta(p^{n/2}).$\label{prop:sparse}
\end{prop}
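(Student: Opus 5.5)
The plan is to compute $\EE[Y_1]$ and $\Var(Y_1)=\EE[Y_1^2]-\EE[Y_1]^2$ in closed form, using the independence of the pairs $(B_v,x_v)$ across $v$, and then read off the asymptotics in $n$.

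\textbf{Step 1: reduce $I_g$ to a product.} By construction every point is a prerequisite of the goal, so $P_g \supseteq \{1,\dots,n\}$. Hence $I_g = B_g\prod_{j=1}^n B_j$, where $B_g$ is an independent $\mathrm{Bernoulli}(p)$ factor whose contribution is only the constant $p$. If one prefers $P_g=\{1,\dots,n\}$, simply drop this factor.

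\textbf{Step 2: the mean.} For each $v$, factor $\EE[I_g x_v]$ over the independent blocks:
\begin{equation}
\EE[I_g x_v] = p\,\EE[B_v x_v]\prod_{j\neq v}\EE[B_j] = \mu\, p^{n}.
\end{equation}
Summing over $v$ gives $\EE[Y_1] = n\mu p^{n}$.

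\textbf{Step 3: the second moment.} Expand $(\sum_v x_v)^2$ and use $I_g^2=I_g$. Split into diagonal and off-diagonal terms.

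For the diagonal terms,
\begin{equation}
\EE[I_g x_v^2] = p\,\lambda\, p^{n-1} = \lambda p^n .
\end{equation}

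For $v\neq w$,
\begin{equation}
\EE[I_g x_v x_w] = p\,\mu^2 p^{n-2} = \mu^2 p^{n-1} .
\end{equation}

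Therefore
\begin{equation}
\EE[Y_1^2] = n\lambda p^{n} + n(n-1)\mu^2 p^{n-1},
\end{equation}
\begin{equation}
\Var(Y_1) = n\lambda p^{n} + n(n-1)\mu^2p^{n-1} - n^2\mu^2p^{2n}.
\end{equation}

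\textbf{Step 4: asymptotics.} This is the only step needing care. The natural worry is whether the subtracted square could cancel the leading term. It cannot: $n^2\mu^2 p^{2n}$ is smaller than the $n(n-1)\mu^2p^{n-1}$ term by a factor $p^{n+1}\cdot n/(n-1)\to 0$. So $\Var(Y_1)=\Theta(n^2p^{n-1})$, with the $n^2\mu^2$ term dominating. This uses $\mu\neq 0$ and $p\in(0,1)$ fixed.

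For the record, $\Var(Y_1)>0$ for every $n$. Indeed, $\Var(Y_1)\ge n\lambda p^n - n\mu^2 p^{2n} + n(n-1)\mu^2(p^{n-1}-p^{2n})$, and both pieces are positive because $\lambda>\mu^2$ (shown above via Cauchy--Schwarz) and $p<1$.

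Consequently
\begin{equation}
\snr(Y_1)^2 = \frac{n^2\mu^2p^{2n}}{n^2\mu^2 p^{n-1}\,(1+O(1/n))} = \Theta(p^{n+1}) = \Theta(p^{n}),
\end{equation}
since $p$ is a fixed constant. Taking square roots gives $\snr(Y_1)=\Theta(p^{n/2})$.

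I expect no real obstacle beyond this bookkeeping. The key observation is that the $n^2$ growth in the cross terms of the variance exactly matches the $n^2$ in the squared mean, which leaves only the exponential factor $p^{n}$ in the squared SNR.
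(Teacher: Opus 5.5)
Your proof is correct and follows essentially the same argument as the paper. Both compute $\EE[Y_1]$ and the diagonal and off-diagonal parts of $\EE[Y_1^2]$ by factoring over the independent pairs $(B_v,x_v)$, and both note that the $n(n-1)\mu^2$ cross term dominates $\Var(Y_1)$, so the $n^2$ factors cancel in $\snr(Y_1)^2$. The only difference is your extra independent factor $B_g$. The paper effectively takes $I_g=\prod_{j=1}^n B_j$, which gives $\EE[Y_1]=n\mu p^{n-1}$ and $\Var(Y_1)=\Theta(n^2p^{n-2})$; your factor multiplies $\EE[Y_1]$ and $\EE[Y_1^2]$ by $p$ and leaves $\snr(Y_1)^2=\Theta(p^n)$ unchanged.
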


\begin{proof}
First, for every $v\in\{1,\ldots,n\}$,
\begin{equation}
\EE[I_g x_v] = \EE[B_v x_v] \prod_{\substack{j=1\\j\ne v}}^n \EE[B_j] = \mu p^{n-1},
\end{equation}
hence
$\mathbb E[Y_1] = n\mu p^{n-1}.$
Next, we compute the second moment:
\begin{equation}
\mathbb E[Y_1^2] = \sum_{v=1}^n \mathbb E[I_g x_v^2] + \sum_{v \neq w} \mathbb E[I_g x_v x_w].
\end{equation}
For the diagonal terms,
\begin{equation}
\mathbb E[I_g x_v^2] = \mathbb E[B_v x_v^2] \prod_{\substack{1 \le j \le n\\j\ne v}} \mathbb E[B_j] = \lambda p^{n-1}.
\end{equation}
For the off-diagonal terms,
\begin{equation}
\mathbb E[I_g x_v x_w] = \mathbb E[B_v x_v] \mathbb E[B_w x_w] \prod_{\substack{1 \le j \le n\\j\notin\{v,w\}}} \mathbb E[B_j] = \mu^2 p^{n-2}.
\end{equation}
Subtracting $\EE[Y_1]^2$ from the total, we obtain
\begin{equation}
\operatorname{Var}(Y_1)=n\lambda p^{n-1}+n(n-1)\mu^2p^{n-2}-n^2\mu^2p^{2n-2} = \Theta(n^2 p^{n-2}).
\end{equation}
Finally,
\begin{equation}
\operatorname{SNR}(Y_1)^2 = \frac{\mathbb E[Y_1]^2}{\operatorname{Var}(Y_1)} = \frac{\Theta(n^2 p^{2n-2})}{\Theta(n^2 p^{n-2})} = \Theta(p^n),
\end{equation}
as claimed.
\end{proof}

\subsection{SNR via trajectory-level \methodname}\label{app:snr-ours-outcome}

We analyze the estimator
\begin{equation}
    \quad Y_2 = \frac 1n \parens{\sum_{v=1}^n I_v} \parens{\sum_{v=1}^n x_v}.
\end{equation}
We define random variables
\begin{equation}
C:=\sum_{v=1}^n I_v,
\quad
D:=\sum_{v=1}^n d_vI_v,
\end{equation}
where $C$ is the count of reached points, and $D$ is the depth over reached points. For pairs of points, also define $d_{vw} := |P_v \cup P_w|$. Throughout the proofs, we will leverage the following facts:
\begin{equation}
\EE[C]=\sum_{v=1}^n p^{d_v},\quad \EE[D]=\sum_{v=1}^n d_vp^{d_v},\quad \EE[C^2]=\sum_{v=1}^n\sum_{w=1}^n p^{d_{vw}}.\label{eq:progress-moments-revision}
\end{equation}
Additionally, observe that since every DAG has a source node $v$ with $d_v = 1$, we have 
\begin{equation}
\quad \EE[D] \ge \EE[C] \ge p. \label{eq:ec-ed-lower-bounds}
\end{equation}

\vspace{2pt}

\begin{prop}
\label{prop:outcome-universal}
$\snr(Y_2) = \Omega(\sqrt{\log n} / n)$.
\end{prop}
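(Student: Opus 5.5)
The plan is to compute $\EE[Y_2]$ exactly, upper-bound $\Var(Y_2)\le \EE[Y_2^2]$ by expanding moments, and then minimize the resulting ratio over all admissible depth profiles $(d_v)$ of the DAG.

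\textbf{Step 1 (signal).} Write $S:=\sum_{w=1}^n x_w$, so that $Y_2=CS/n$. Because the pairs $(B_j,x_j)$ are independent and $\EE[x_w]=0$, the term $\EE[I_v x_w]$ vanishes unless $w\in P_v$. When $w\in P_v$ it equals $\EE[B_w x_w]\,p^{d_v-1}=\mu p^{d_v-1}$. Summing over $w$ and $v$ gives
\begin{equation}
\EE[Y_2]=\frac{\mu}{np}\sum_{v=1}^n d_v p^{d_v}=\frac{\mu\,\EE[D]}{np}.
\end{equation}
In particular $|\EE[Y_2]|\ge |\mu|/n$ by \eqref{eq:ec-ed-lower-bounds}.

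\textbf{Step 2 (noise).} I expand $\EE[C^2S^2]=\sum_{v,v'}\sum_{w,w'}\EE[I_vI_{v'}x_wx_{w'}]$ and classify each term by where $w,w'$ fall relative to $P_v\cup P_{v'}$.
\begin{itemize}
\item If $w=w'$, the term is at most $\max(\lambda,\eta)\,p^{d_{vv'}-1}$.
\item If $w\neq w'$, the term is nonzero only when both indices lie in $P_v\cup P_{v'}$. In that case it equals $\mu^2p^{d_{vv'}-2}$.
\end{itemize}
Counting the contributions, this should give
\begin{equation}
\EE[C^2S^2]\lesssim n\,\EE[C^2]+\sum_{v,v'} d_{vv'}^2\,p^{d_{vv'}}.
\end{equation}
I would then compare each piece against $\EE[D]^2$ and $\EE[C]$. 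The tools are $\EE[C^2]=\sum p^{d_{vv'}}$ from \eqref{eq:progress-moments-revision}, the inequality $d_{vv'}\le d_v+d_{v'}$, and the elementary bound $k^2p^{k}\le c_p\,p^{k/2}$.

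\textbf{Step 3 (optimizing the profile).} The SNR squared is then bounded below by
\begin{equation}
\frac{\EE[D]^2/p^2}{n\,\EE[C^2]+\sum_{v,v'}d_{vv'}^2p^{d_{vv'}}}.
\end{equation}
I would lower-bound this over all DAGs. The crude estimate $\EE[C^2]\le n\,\EE[C]\le n\,\EE[D]$ together with $\EE[D]\ge p$ only gives $\Omega(1/n)$. The extra $\sqrt{\log n}$ factor must come from a structural trade-off, which I would argue as follows.
\begin{itemize}
\item If many points are shallow, then $\EE[D]$ is large. For example, $m$ points of depth $O(1)$ give $\EE[D]\gtrsim m$, and this grows the numerator quadratically while the denominator grows only linearly.
\item If most points are deep, then $\EE[C^2]$ is tiny. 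A point of depth $d$ forces at least $d-1$ distinct prerequisites, so the number of points at depth $\le k$ is constrained.
\end{itemize}
Concretely, I would group points by depth and let $m_k=\#\{v:d_v=k\}$. I would then optimize $\bigl(\sum_k k m_k p^k\bigr)^2\big/\bigl(n\sum_{k,k'}\dots\bigr)$ subject to $\sum_k m_k=n$. I expect the minimizer to balance the depths at $k\approx\log_{1/p}n$, and this is where $\log n$ enters.

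This last optimization is the main obstacle, since it requires the right bound on the cross terms $p^{d_{vv'}}$ for overlapping prerequisite sets. My first attempt would be a Cauchy--Schwarz or layer-wise argument bounding $\EE[C^2]\lesssim \EE[C]\cdot\max_v\sum_{v'}p^{|P_{v'}\setminus P_v|}$. I would then split into the two cases $\EE[D]\ge\sqrt{\log n}$ versus $\EE[D]<\sqrt{\log n}$ and check that each case yields $\Omega(\log n/n^2)$ for the SNR squared.
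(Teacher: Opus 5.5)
Your Steps 1 and 2 are sound and agree with the paper. You get $\EE[Y_2]=\mu\EE[D]/(pn)$ and $\EE[Y_2^2]\lesssim \EE[C^2]/n + n^{-2}\sum_{v,w} d_{vw}^2p^{d_{vw}}$. The gap is that the whole $\sqrt{\log n}$ gain comes from bounding $\EE[C^2]$, and you leave that step open. Your route needs $\EE[C^2]=O(n\EE[D]^2/\log n)$ whenever $\EE[D]\lesssim\log n$. The tool you list for this, $d_{vv'}\le d_v+d_{v'}$, points the wrong way, since it only lower-bounds $p^{d_{vv'}}$.

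The idea you are missing is the opposite containment $d_{vw}\ge\max(d_v,d_w)$. With $q_v=p^{d_v}$ it gives $p^{d_{vw}}\le\min(q_v,q_w)$, so $\EE[C^2]$ is bounded by a function of the depth profile alone:
\[
\sum_{v,w}\min(q_v,q_w)=\int_0^1F(t)^2\,dt,\qquad F(t)=|\{v:q_v\ge t\}|.
\]
Because $g(t)=t(1+\log(1/t))$ is increasing, $F(t)\le\min\bigl(n,\,A_n/g(t)\bigr)$, where $A_n=\sum_v g(q_v)\asymp\EE[D]$. Splitting the integral at $t_0=A_n/(n(1+\log(n/A_n)))$ gives $\EE[C^2]=O(n\EE[D]/\ell_n)$ with $\ell_n=1+\log_+(n/\EE[D])$. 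Hence $\snr(Y_2)^2=\Omega(\EE[D]\ell_n/n^2)$. Finally, $\EE[D]\ell_n=\Omega(\log n)$ because $t\mapsto t(1+\log(n/t))$ is increasing and $\EE[D]\ge p$, so no case split is needed.

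Your $m_k$ optimization would turn into exactly this if you replaced the unspecified cross terms by $p^{\max(k,k')}$. As written, the denominator of that objective is precisely the part you have not determined. You also do not need any constraint on how many points sit at small depth. The only structural inputs are $d_{vw}\ge\max(d_v,d_w)$ and $\EE[D]\ge p$.

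Your fallback $\EE[C^2]\le\EE[C]\cdot\max_v\sum_{v'}p^{|P_{v'}\setminus P_v|}$ is a valid inequality, but it fails on exactly the extremal instance. Take the dandelion graph with stem length $r=\lfloor\log_{1/p}(n\log n)\rfloor$.
\begin{itemize}
\item There $\EE[C]\le\EE[D]=\Theta(1)$.
\item When $v$ is the stem top, every leaf contributes $p$ to the inner sum, so the max is at least $(n-r)p=\Theta(n)$.
\item You therefore only obtain $\EE[C^2]=O(n)$, while the required and true value is $\Theta(n/\log n)$.
\end{itemize}
The max throws away the fact that such $v$ carry weight $p^{d_v}=\Theta(1/(n\log n))$.

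Your case threshold is also off. In the case where you would use the crude bound $\EE[C^2]\le n\EE[D]$, you only get $\snr(Y_2)^2\gtrsim\EE[D]/n^2$. That case must therefore be $\EE[D]\gtrsim\log n$, not $\EE[D]\ge\sqrt{\log n}$.

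The $\sum d_{vw}^2p^{d_{vw}}$ term is not the obstacle. The paper bounds it by $O(n\EE[D]\log n)$ using $d_{vw}^2p^{d_{vw}}\lesssim d_v^2p^{d_v}+d_w^2p^{d_w}$ and a truncation at depth $\lceil 3\log_{1/p}n\rceil$.
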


\vspace{2pt}

\begin{prop}\label{prop:dense-outcome}
Suppose that
\begin{equation}
\EE[C^2]=\Theta\!\left(\EE[D]^2\right)
\end{equation}
and
\begin{equation}
\sum_{v=1}^n\sum_{w=1}^n d_{vw}^2p^{d_{vw}}=O\!\left(\EE[C^2]\right).
\end{equation}
Then $\operatorname{SNR}(Y_2)=\Theta(n^{-1/2})$.
\end{prop}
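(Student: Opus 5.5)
The plan is to compute $\EE[Y_2]$ and $\Var(Y_2)$ exactly from the sums over pairs of points, and then use the two regularity conditions to identify the leading order of each. The factor $1/n$ cancels in $\snr$, so I work with $CS$, where $S:=\sum_{a=1}^n x_a$.

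\textbf{Step 1 (mean).} I compute $\EE[CS]=\sum_{v,a}\EE[I_v x_a]$. Since $(B_j,x_j)$ are independent across $j$ and $\EE[x_a]=0$, the term $\EE[I_v x_a]$ vanishes unless $a\in P_v$. In that case it equals $\EE[B_a x_a]\prod_{j\in P_v\setminus\{a\}}\EE[B_j]=\mu p^{d_v-1}$. Summing over $a\in P_v$ and then over $v$ gives
\begin{equation}
\EE[CS]=\frac{\mu}{p}\sum_{v=1}^n d_v p^{d_v}=\frac{\mu}{p}\,\EE[D],
\end{equation}
using \eqref{eq:progress-moments-revision}.

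\textbf{Step 2 (second moment).} I expand $\EE[C^2S^2]=\sum_{v,w}\sum_{a,b}\EE[I_vI_w x_a x_b]$ and note that $I_vI_w=\prod_{j\in P_v\cup P_w}B_j$. For fixed $(v,w)$ there are three cases.
\begin{itemize}
\item Diagonal $a=b\in P_v\cup P_w$: each term is $\lambda p^{d_{vw}-1}$, and there are $d_{vw}$ such terms.
\item Diagonal $a=b\notin P_v\cup P_w$: each term is $\eta p^{d_{vw}}$, and there are $n-d_{vw}$ such terms.
\item Off-diagonal $a\ne b$: the term is nonzero only when both $a,b\in P_v\cup P_w$. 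Then it equals $\mu^2p^{d_{vw}-2}$, and there are $d_{vw}(d_{vw}-1)$ such terms.
\end{itemize}
Hence
\begin{equation}
\EE[C^2S^2]=n\eta\,\EE[C^2]+\sum_{v,w}\Big(\tfrac{\lambda}{p}d_{vw}-\eta d_{vw}+\tfrac{\mu^2}{p^2}d_{vw}(d_{vw}-1)\Big)p^{d_{vw}}.
\end{equation}

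\textbf{Step 3 (variance asymptotics).} Since $d_{vw}\ge1$, we have $d_{vw}\le d_{vw}^2$, so the second regularity hypothesis bounds the remainder sum in absolute value by $O(\EE[C^2])$; here $p$, $\lambda$, $\eta$, $\mu$ are constants. The subtracted square $(\mu/p)^2\EE[D]^2$ is also $O(\EE[C^2])$ by the first hypothesis. Therefore
\begin{equation}
\Var(CS)=n\eta\,\EE[C^2]+O(\EE[C^2])=\Theta\big(n\,\EE[C^2]\big).
\end{equation}
This step needs $\eta>0$, which holds because $\eta\ge\lambda\ge\mu^2/p>0$. Combining Steps 1–3 with $\EE[C^2]=\Theta(\EE[D]^2)$ gives
\begin{equation}
\snr(Y_2)=\frac{(|\mu|/p)\,\EE[D]}{\Theta\big(\sqrt{n}\,\EE[D]\big)}=\Theta(n^{-1/2}).
\end{equation}

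\textbf{Main obstacle.} The main risk is the lower bound on the variance: I must be sure that the negative contributions, namely $-\eta\sum d_{vw}p^{d_{vw}}$ and the subtracted mean square, cannot cancel the leading $n\eta\,\EE[C^2]$. This is exactly where both hypotheses are used, since they make these contributions $O(\EE[C^2])$ and hence of lower order than $n\,\EE[C^2]$. The case analysis in Step 2 itself only requires careful bookkeeping of which indices fall in $P_v\cup P_w$.
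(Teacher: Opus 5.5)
Your proposal is correct and follows essentially the same route as the paper. You use the same pairwise expansion of the second moment (the paper's \eqref{eq:y2_squared_expansion}); the second hypothesis makes the $d_{vw}$-weighted remainder sums $O(\EE[C^2])$ so that $n\eta\,\EE[C^2]$ dominates, and the first hypothesis makes the squared mean lower order and turns $\sqrt{n\,\EE[C^2]}$ into $\Theta(\sqrt{n}\,\EE[D])$, with your explicit check $\eta\ge\lambda\ge\mu^2/p>0$ filling in a detail the paper leaves implicit.
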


Before providing a proof of \Cref{prop:outcome-universal,prop:dense-outcome},
we first provide some intuition for the assumptions, which encapsulate the
reasoning graphs induced by both Multi-Countdown and Matrix Manipulation MDPs.

The first condition can be rewritten as
\begin{equation}
\frac{\sqrt{\EE[C^2]}}{\EE[C]}
=
\Theta\!\left(\frac{\EE[D]}{\EE[C]}\right).
\end{equation}
The left hand side measures the ``burstiness'' of the total number of reached reasoning points, and the right hand side measures the average depth of reached reasoning points,
\begin{equation}
\frac{\EE[D]}{\EE[C]}
=
\frac{\sum_v d_vp^{d_v}}{\sum_v p^{d_v}}.
\end{equation}
By requiring this ratio to remain $\Theta(1)$, this condition prevents deep and jointly reachable pairs $(v,w)$ from dominating the second moment of $Y_2$.

As we will see in the following examples, the dandelion graph has highly correlated visitation events, producing bursty $C$. On the other hand, a complete binary tree of depth $h$ with $p = 1/2$ has $\EE[D] = \Theta(h^2)$ and $\EE[C] = \Theta(h)$, achieving high average depth.

For the second condition, observe that
$p^{d_{vw}}=\Pr(I_v=I_w=1)$. Consequently,
\begin{equation}
\frac{\sum_{v,w}d_{vw}^2p^{d_{vw}}}{\EE[C^2]}
=
\frac{\sum_{v,w}d_{vw}^2\Pr(I_v=I_w=1)}
{\sum_{v,w}\Pr(I_v=I_w=1)}.
\end{equation}
The condition requires that this ratio is $\Theta(1)$, which prevents deep and jointly reachable pairs $(v,w)$ from dominating the second moment of $Y_2$.

\begin{itemize}
    \item The \textbf{Multi-Countdown} case satisfies the assumptions of
    \Cref{prop:dense-outcome}. Here $P_v=\set{v}$, so
    \begin{equation}
    \EE[D]=np=\Theta(n),\quad \EE[C^2]=np+n(n-1)p^2=\Theta(n^2)=\Theta\!\left(\EE[D]^2\right).
    \end{equation}
    Moreover,
    \begin{equation}
    \sum_{v,w}d_{vw}^2p^{d_{vw}}=np+4n(n-1)p^2=\Theta(n^2)=O\!\left(\EE[C^2]\right).
    \end{equation}
    \item The \textbf{Matrix Manipulation} case also satisfies the assumptions of \Cref{prop:dense-outcome}.
    Here $P_v=\set{1,\dots,v}$ and $d_{vw}=\max(v,w)$, so
    \begin{equation}
    \EE[D]=\sum_{v=1}^n vp^v=\Theta(1),\quad \EE[C^2]=\sum_{k=1}^n(2k-1)p^k=\Theta(1)=\Theta\!\left(\EE[D]^2\right).
    \end{equation}
    The second condition follows from
    \begin{equation}
    \sum_{v,w}d_{vw}^2p^{d_{vw}}=\sum_{k=1}^n(2k-1)k^2p^k=\Theta(1)=O\!\left(\EE[C^2]\right).
    \end{equation}
    \item The lower bound in \Cref{prop:outcome-universal} is achieved by the \textbf{dandelion graph}. Fix $r = \lfloor\log_{1/p} (n \log n)\rfloor$, and construct the graph $0 \to 1 \to \cdots \to r$, and for each leaf $v > r$ create a dependency $r \to v$. Intuitively, conditioned on passing point $r$, $\Theta(n)$ leaves become reachable together, which produces $\Theta(n^2)$ highly correlated leaf pairs, corresponding to a low SNR. We show in \Cref{ex:dandelion-outcome} that
    \begin{equation}
    \EE[D]=\Theta(1),\quad \EE[C^2]=\Theta\!\left(\frac{n}{\log n}\right),
    \end{equation}
    which notably do not satisfy the assumptions of \Cref{prop:dense-outcome}.
\end{itemize}

In preparation for the proof of \Cref{prop:outcome-universal}, we first bound
the second moment of reached progress $\EE[C^2]$ in terms of expected cumulative depth $\EE[D]$ (\Cref{cor:csquared-to-d}), with the help of the following analytic lemma. This relation appears when combining on-diagonal and off-diagonal terms in the expansion of $\EE[Y_2^2]$ (\Cref{eq:y2_squared_cd_decomp}).

% \begin{lemma}
%     Let $\ell_n = 1 + \log_+ (n / K_n)$, where $\log_+(x) = \max(0, \log x)$. Then $U_n = O(nK_n / \ell_n)$.
% \end{lemma}
% \begin{proof}
% Let $q_v = p^{d_v-1}$, so that $p^{d_{vw}} \le p^{\max(d_v,d_w)} = p \min(p^{d_v-1}, p^{d_w-1}) = p\min(q_v, q_w)$. Thus, $U_n \le p \sum_{v,w} \min(q_v, q_w)$. Since $d_v = 1 + \frac{\log (1/q_v)}{\log (1/p)}$, we have
% \[ K_n = \Theta\!\parens{\sum_{v=1}^n q_v (1 + \log(1/q_v))} \]
% Since $q_v$ decays geometrically with $d_v$, observe that $K_n$ is primarily determined by the largest values of $q_v$. To quantify this, let $F(t) = |\set{v : q_v \ge t}|$, so that
% \[ \sum_{v,w} \min(q_v, q_w) = \int_0^1 \parens{\sum_v \ind\set{q_v \ge t}}^2 dt = \int_0^1 F(t)^2 dt \]
% We next bound $F(t)$. The function 
% \end{proof}

\begin{lemma}
\label{lemma:pairwise-overlap}
Let $q_1,\ldots,q_n\in(0,1]$, and define
\begin{equation}
A_n = \sum_{v=1}^n q_v(1+\log(1/q_v)).
\end{equation}
Then
\begin{equation}
\sum_{v,w}\min(q_v,q_w)
=
O\!\left(
\frac{nA_n}{1+\log(n/A_n)}
\right).
\end{equation}
\end{lemma}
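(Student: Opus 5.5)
The plan is to reduce the pairwise sum to a one-dimensional sum by counting, for each $v$, how many $w$ have $q_w \ge q_v$. We then bound that count using $A_n$ together with the monotonicity of $x \mapsto x(1+\log(1/x))$, and finally split the $v$'s at a threshold of order $\sqrt{A_n/n}$. We also record that $x(1+\log(1/x)) \le 1$ on $(0,1]$, so $A_n \le n$ and $L_n := 1+\log(n/A_n) \ge 1$.

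\textbf{Step 1 (symmetrization and tail count).} Since $\min(q_v,q_w)$ equals the smaller of the two, we have
\begin{equation*}
\sum_{v,w}\min(q_v,q_w) \le 2\sum_{v} q_v\, N(q_v), \qquad N(t) := \#\{w : q_w \ge t\}.
\end{equation*}
The function $f(x)=x(1+\log(1/x))$ is increasing on $(0,1]$, since $f'(x)=\log(1/x)\ge 0$. Hence each $w$ with $q_w\ge t$ contributes at least $f(t)$ to $A_n$, which gives $N(t) \le A_n/\bigl(t(1+\log(1/t))\bigr)$. Combined with the trivial bound $N(t)\le n$, this yields
\begin{equation*}
\sum_{v,w}\min(q_v,q_w) \le 2\sum_v \min\!\left(n q_v,\ \frac{A_n}{1+\log(1/q_v)}\right).
\end{equation*}

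\textbf{Step 2 (threshold split).} Set $\tau = \sqrt{A_n/n}\le 1$.

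For $v$ with $q_v < \tau$, we have $\log(1/q_v) > \tfrac12\log(n/A_n)$, so $1+\log(1/q_v) \ge L_n/2$. Using the second branch of the minimum, each such term is at most $2A_n/L_n$. Summing over at most $n$ such $v$ gives $O(nA_n/L_n)$.

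For $v$ with $q_v \ge \tau$, we bound each term by $A_n$, since $1+\log(1/q_v)\ge 1$. The number of such $v$ is $N(\tau) \le A_n/\tau = \sqrt{nA_n}$, because $f(\tau)\ge \tau$. This part therefore contributes at most $A_n\sqrt{nA_n} = nA_n\sqrt{x}$, where $x := A_n/n \in (0,1]$.

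\textbf{Step 3 (comparison with the target).} It remains to check that $\sqrt{x} = O(1/(1+\log(1/x)))$, i.e. that $\sqrt{x}\,(1+\log(1/x))$ is bounded on $(0,1]$. This holds because $\sqrt{x}\log(1/x)\le 2/e$. Hence the large-$q_v$ part is also $O(nA_n/L_n)$, and the lemma follows.

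The main obstacle is Step 2: choosing the threshold so that both regimes lose only a constant factor. The naive bound $n\sum_v q_v \le nA_n$ misses the $\log$ gain entirely. The logarithmic weight in $A_n$ is what makes large $q_v$ rare (Step 1), and this rarity is exactly what lets the threshold $\sqrt{A_n/n}$ work. If that choice turned out to be too lossy, I would fall back to a dyadic decomposition of $q_v$ into levels $[2^{-k-1},2^{-k})$ and sum the two branches level by level.
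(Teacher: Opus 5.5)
Your proof is correct. It shares the paper's key ingredient, the tail bound $\#\{w : q_w \ge t\} \le A_n/\bigl(t(1+\log(1/t))\bigr)$ obtained from the monotonicity of $x(1+\log(1/x))$, but the rest of the argument is organized differently.

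The paper writes the pairwise sum exactly as the layer-cake integral $\int_0^1 N(t)^2\,dt$ (with your $N$). It then splits at the crossover point $t_0 = A_n/(nL_n)$, where the bounds $N\le n$ and $N \le A_n/(t(1+\log(1/t)))$ meet. Finally it invokes, without proof, the asymptotic $\int_1^X (1+\log x)^{-2}\,dx = O\bigl(X/(1+\log X)^2\bigr)$.

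You instead use the discrete symmetrization $\sum_{v,w}\min(q_v,q_w) \le 2\sum_v q_v N(q_v)$, split at the larger threshold $\tau=\sqrt{A_n/n}$, and close with the elementary inequality $\sqrt{x}\log(1/x)\le 2/e$. Incidentally, you never use the $nq_v$ branch of the minimum. What you actually use is the count of at most $n$ indices below $\tau$ and the bound $N(\tau)\le\sqrt{nA_n}$ above it.

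Your version is fully self-contained and gives an explicit constant, about $6+4/e$, at the cost of a harmless factor of $2$. The paper's version is the more natural continuous formulation. The two are closer than they look: a natural proof of the paper's omitted integral estimate splits at $x=\sqrt{X}$, i.e.\ at $t=\sqrt{t_0}$, which is your $\tau$ up to a factor $\sqrt{L_n}$. The dyadic fallback you mention is not needed.
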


\begin{proof}
    Define $F(t) = |\!\set{v : q_v \ge t}\!|$, so that
    \begin{equation}
    \sum_{v,w} \min(q_v, q_w) = \int_0^1 \parens{\sum_v \ind\set{q_v \ge t}}^2 dt = \int_0^1 F(t)^2 dt.
    \end{equation}
    It remains to bound $F(t)$. Let $g(t) = t(1 + \log(1/t))$ be increasing. If $q_v \ge t$, then 
    \begin{equation}
    q_v (1 + \log(1/q_v)) = g(q_v) \ge g(t).
    \end{equation}
    Summing over the $F(t)$ values of $v$ with $q_v \ge t$, we obtain
    \begin{equation}
    A_n \ge F(t) t (1 + \log(1/t)),
    \end{equation}
    hence
    \begin{equation}
    F(t) \le \min \parens{n, \frac{A_n}{t (1 + \log(1/t))}}.
    \end{equation}
    Take $\ell_n = 1 + \log(n/A_n)$ and $t_0 = A_n / (n\ell_n)$. Since $A_n \le n$, we have $0 < t_0 \le 1$, so
    \begin{equation}
    \int_0^1 F(t)^2 dt \le n^2 t_0 + A_n^2 \int_{t_0}^1 \frac{dt}{t^2 (1 + \log(1/t))^2}.
    \end{equation}
    A change of variables $x = 1/t$ gives
    \begin{equation}
    \int_{t_0}^1 \frac{dt}{t^2 (1 + \log(1/t))^2} = \int_1^{1/t_0} \frac{dx}{(1 + \log x)^2} = O\!\parens{\frac 1{t_0 (1 + \log(1/t_0))^2}}.
    \end{equation}
    It follows that
    \begin{equation}
    \int_0^1 F(t)^2 dt \le n^2 t_0 + O\!\parens{\frac{A_n^2}{t_0 \ell_n^2}} =  O\!\parens{\frac{nA_n}{\ell_n}},
    \end{equation}
    as desired.
\end{proof}

\begin{corollary}\label{cor:csquared-to-d}
Let $\ell_n = 1 + \log_+(n/\EE[D])$, where $\log_+(x) = \max(0, \log x)$. Then
\begin{equation}
\EE[C^2]=O\!\left(\frac{n\EE[D]}{\ell_n}\right).
\end{equation}
\end{corollary}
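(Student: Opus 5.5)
We apply \Cref{lemma:pairwise-overlap} with the choice $q_v := p^{d_v-1}\in(0,1]$. The point is that $\EE[C^2]$ is a sum of pairwise joint-reachability probabilities, and each such probability is controlled by $\min(q_v,q_w)$. Meanwhile, the lemma's quantity $A_n$ turns out to be comparable to $\EE[D]$ up to constants depending only on $p$.

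\textbf{Step 1: bound the pairwise terms.} Since $P_v\cup P_w\supseteq P_v,P_w$, we have $d_{vw}\ge\max(d_v,d_w)$. Using \eqref{eq:progress-moments-revision},
\begin{equation}
\EE[C^2]=\sum_{v,w}p^{d_{vw}}\le\sum_{v,w}p^{\max(d_v,d_w)}=p\sum_{v,w}\min(q_v,q_w).
\end{equation}
The lemma then gives $\EE[C^2]=O\big(nA_n/(1+\log(n/A_n))\big)$.

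\textbf{Step 2: compare $A_n$ with $\EE[D]$.} Write $L:=\log(1/p)>0$. Then $\log(1/q_v)=(d_v-1)L$, so
\begin{equation}
A_n=\frac1p\sum_v p^{d_v}\big(1+(d_v-1)L\big).
\end{equation}
For every integer $d\ge1$ we have
\begin{equation}
\min(1,L)\,d\;\le\;1+(d-1)L\;\le\;\max(1,L)\,d.
\end{equation}
Hence $c_1\EE[D]\le A_n\le c_2\EE[D]$, with $c_1=\min(1,L)/p$ and $c_2=\max(1,L)/p$, both depending only on $p$. In addition, $A_n\le n$, because $g(t)=t(1+\log(1/t))\le1$ on $(0,1]$.

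\textbf{Step 3: transfer the bound to $\ell_n$.} This is the only slightly delicate step, because the logarithmic denominator must be converted. For fixed $n$, the function $x\mapsto x/(1+\log(n/x))$ is increasing on $(0,n]$, so $A_n\le c_2\EE[D]$ lets us control the numerator. For the denominator, $A_n\le c_2\EE[D]$ gives
\begin{equation}
1+\log(n/A_n)\ \ge\ 1+\log(n/\EE[D])-\log c_2.
\end{equation}
We also always have $1+\log(n/A_n)\ge1$, since $A_n\le n$.

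These two lower bounds combine as follows. The quantity $1+\log_+(n/\EE[D])$ differs from $1+\log(n/\EE[D])$ only when $\EE[D]>n$, in which case it equals $1$. Combining with the bound $\ge 1$, we get
\begin{equation}
1+\log(n/A_n)\ \ge\ c\,\ell_n
\end{equation}
for a constant $c>0$ depending only on $c_2$. The reason is that whenever $a\ge1$ and $a\ge b-K$ with $b\ge1$, one has $a\ge b/(1+K)$.

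Putting Steps 1–3 together,
\begin{equation}
\EE[C^2]=O\!\left(\frac{n\cdot c_2\EE[D]}{c\,\ell_n}\right)=O\!\left(\frac{n\EE[D]}{\ell_n}\right),
\end{equation}
with all hidden constants depending only on $p$. This is the claimed bound.
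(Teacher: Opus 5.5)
Your proof is correct and follows essentially the same route as the paper: you bound $\EE[C^2]$ by a sum of $\min(q_v,q_w)$ using $d_{vw}\ge\max(d_v,d_w)$, show $A_n=\Theta(\EE[D])$ with constants depending only on $p$, and invoke the pairwise-overlap lemma. The differences are cosmetic. You take $q_v=p^{d_v-1}$ and absorb an extra factor $p$, where the paper takes $q_v=p^{d_v}$ directly, and you explicitly verify the lower bound $1+\log(n/A_n)\ge c\,\ell_n$, which the paper simply asserts as $1+\log(n/A_n)=\Theta(\ell_n)$.
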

\begin{proof}
Set $q_v:=\PP(I_v=1)=p^{d_v}$. Since
$d_{vw}\ge\max(d_v,d_w)$,
\begin{equation}
\EE[C^2]=\sum_{v,w}p^{d_{vw}}\le\sum_{v,w}\min(q_v,q_w).
\end{equation}
For the quantity $A_n$ in \Cref{lemma:pairwise-overlap},
\begin{equation}
A_n=\sum_vq_v\left(1+\log(1/q_v)\right)=\Theta\!\left(\sum_vd_vq_v\right)=\Theta\!\left(\EE[D]\right),
\end{equation}
where the constants depend only on the fixed $p$. It follows that
$1+\log(n/A_n)=\Theta(\ell_n)$. Therefore, \Cref{lemma:pairwise-overlap} gives
\begin{equation}
\EE[C^2]\le\sum_{v,w}\min(q_v,q_w)=O\!\left(\frac{nA_n}{1+\log(n/A_n)}\right)=O\!\left(\frac{n\EE[D]}{\ell_n}\right),
\end{equation}
which proves the result.
\end{proof}

\begin{proof}[Proof of \Cref{prop:outcome-universal}]
As shown in the proof of \Cref{prop:sparse},
\begin{equation}
\EE[I_vx_i]=\begin{cases}\mu p^{d_v-1}, & i\in P_v,\\0, & i\notin P_v.\end{cases}
\end{equation}
Therefore
\begin{equation}
\EE[Y_2]=\frac{\mu}{pn}\EE[D].
\end{equation}

For the second moment, let $P_{vw}:=P_v\cup P_w$, whose cardinality is
$d_{vw}$. Since $I_vI_w=\prod_{j\in P_{vw}}B_j$, expanding
$(\sum_i x_i)^2$ gives diagonal terms
\begin{equation}
\EE[I_vI_wx_i^2]=\begin{cases}\lambda p^{d_{vw}-1}, & i\in P_{vw},\\ \eta p^{d_{vw}}, & i\notin P_{vw},\end{cases}
\end{equation}
and, for $i\ne j$, off-diagonal terms
\begin{equation}
\EE[I_vI_wx_ix_j]=\begin{cases}\mu^2p^{d_{vw}-2}, & i,j\in P_{vw},\\ 0, & \text{otherwise}.\end{cases}
\end{equation}
There are respectively $d_{vw}$, $n-d_{vw}$, and
$d_{vw}(d_{vw}-1)$ terms of these three types. Therefore
\begin{equation}
\EE[Y_2^2]
=\frac1{n^2}\sum_{v,w}
\bigl[
d_{vw}\lambda p^{d_{vw}-1}
+(n-d_{vw})\eta p^{d_{vw}} +d_{vw}(d_{vw}-1)\mu^2p^{d_{vw}-2}
\bigr].\label{eq:y2_squared_expansion}
\end{equation}
The middle term is $O(\EE[C^2]/n)$. For the first and third terms, we can bound
\begin{equation}
d_{vw} \lambda p^{d_{vw}-1} + d_{vw} (d_{vw}-1) \mu^2 p^{d_{vw}-2} \le O(d_{vw}^2 p^{d_{vw}-2}) \le O(d_v^2p^{d_v-1}+d_w^2p^{d_w-1}),
\end{equation}
where the second inequality follows from $d_{vw} \ge \max(d_v, d_w)$. Consequently, their total contribution $T$ satisfies
\begin{equation}
T=O\!\left(\frac1{n^2}\sum_{v,w}d_{vw}^2p^{d_{vw}-2}\right)=O\!\left(\frac1n\sum_vd_v^2p^{d_v-1}\right).
\end{equation}
To bound the remaining sum, take
$L=\lceil3\log_{1/p}n\rceil$. points with $d_v\le L$ contribute at most
$O(L\EE[D])$.
Since $d^2p^d$ decreases exponentially beyond $L$, the at most $n$ points
with $d_v>L$ contribute $O(nL^2p^L)=o(1)=O(\EE[D])$, where the last equality follows from \Cref{eq:ec-ed-lower-bounds}. Therefore
\begin{equation}
\sum_vd_v^2p^{d_v-1}=O\!\left(\EE[D]\log n\right),
\end{equation}
and hence $T=O(\EE[D]\log n/n)$. It follows that
\begin{equation}
\EE[Y_2^2]=O\!\left(\frac{\EE[C^2]}n+\frac{\EE[D]\log n}n\right)=O\!\left(\frac{\EE[D]}{\ell_n}\right),\label{eq:y2_squared_cd_decomp}
\end{equation}
where the final equality uses the preceding corollary together with
$\ell_n\log n=O(n)$.
Consequently,
\begin{equation}
\snr(Y_2)^2\ge\frac{\EE[Y_2]^2}{\EE[Y_2^2]}=\Omega\!\left(\frac{\EE[D]\,\ell_n}{n^2}\right).
\end{equation}

\Cref{eq:ec-ed-lower-bounds} gives $\EE[D]\ge p$. If
$\EE[D]\le n$, monotonicity of
$t\mapsto t(1+\log(n/t))$ on $[p,n]$ gives
$\EE[D]\ell_n=\Omega(\log n)$. If $\EE[D]>n$, then $\ell_n=1$ and the same
conclusion is immediate. Thus
$\snr(Y_2)^2=\Omega(\log n/n^2)$ as claimed.
\end{proof}

\begin{proof}[Proof of \Cref{prop:dense-outcome}]
The middle term in \Cref{eq:y2_squared_expansion} satisfies
\begin{equation}
\sum_{v,w}(n-d_{vw})p^{d_{vw}}=n\EE[C^2]-\sum_{v,w}d_{vw}p^{d_{vw}}=\Theta\!\left(n\EE[C^2]\right).
\end{equation}
The second regularity condition makes the remaining terms lower order, so
\begin{equation}
\EE[Y_2^2]=\Theta\!\left(\frac{\EE[C^2]}n\right)=\Theta\!\left(\frac{\EE[D]^2}n\right).
\end{equation}
Since $\EE[Y_2]=\mu\EE[D]/(pn)$, its squared mean is lower order and
\begin{equation}
\snr(Y_2)^2=\frac{\Theta(\EE[D]^2/n^2)}{\Theta(\EE[D]^2/n)}=\Theta(n^{-1})
\end{equation}
as desired.
\end{proof}

\begin{example}
The dandelion graph with $r = \lfloor\log_{1/p} (n \log n)\rfloor$ achieves the $\Theta(\sqrt{\log n} / n)$ lower bound of \Cref{prop:outcome-universal}.\label{ex:dandelion-outcome}
\end{example}
\begin{proof}
In this graph,
\begin{equation}
\EE[D]=\sum_{v=1}^r vp^v+\sum_{v=r+1}^n(r+1)p^{r+1}=\Theta(1).
\end{equation}
Therefore $\EE[Y_2]=\mu\EE[D]/(pn)=\Theta(1/n)$.

By the choice of $r$, we have $r=\Theta(\log n)$ and
$p^r=\Theta(1/(n\log n))$. Decomposing the sum in
$\EE[C^2]$ gives
\begin{align}
\sum_{v,w\le r}p^{d_{vw}}&=\sum_{k=1}^r(2k-1)p^k=\Theta(1),\\
2\sum_{\substack{v\le r\\w>r}}p^{d_{vw}}&=2r(n-r)p^{r+1}=\Theta(1),\\
\sum_{v>r}p^{d_{vv}}&=(n-r)p^{r+1}=\Theta(1/\log n),\\
\sum_{\substack{v,w>r\\v\ne w}}p^{d_{vw}}&=(n-r)(n-r-1)p^{r+2}=\Theta(n/\log n).
\end{align}
Thus $\EE[C^2]=\Theta(n/\log n)$. It follows from \Cref{eq:y2_squared_cd_decomp} that
\begin{equation}
\EE[Y_2^2]=O\!\left(\frac{\EE[C^2]}n+\frac{\EE[D]\log n}n\right)=O(1/\log n).
\end{equation}
We can additionally lower-bound $\EE[Y_2^2]$ via \Cref{eq:y2_squared_expansion},
\begin{equation}
\EE[Y_2^2]\ge\frac1{n^2}\sum_{\substack{v,w>r\\v\ne w}}(n-d_{vw})\eta p^{d_{vw}}=\Theta(1/\log n).
\end{equation}
Hence $\Var(Y_2)=\Theta(1/\log n)$ and
$\snr(Y_2)=\Theta(\sqrt{\log n}/n)$.
\end{proof}

\subsection{SNR via turn-level \methodname}\label{app:snr-ours-dense}

We analyze the estimator
\begin{equation}
    Y_3 = \frac 1n \sum_{v=1}^n I_v x_v.
\end{equation}

\begin{prop}\label{prop:turn-level}
We have
\begin{equation}
\snr(Y_3)=\frac{\EE[C]}{\sqrt{\Var(C)+\kappa\,\EE[C]}},\quad \kappa:=\frac{p\lambda}{\mu^2}-1\ge0.\label{eq:y3-progress-exact-revision}
\end{equation}
Consequently,
\begin{equation}
\snr(Y_3)=\Theta\!\left(\frac{\EE[C]}{\sqrt{\Var(C)}}\right)\ge\Omega(n^{-1/2}).\label{eq:y3-progress-theta-revision}
\end{equation}
% \begin{equation}
%     \operatorname{SNR}(Y_3)
%     = \Omega\!\left(
%         \frac{1}{1 + \log_{1/p}(n/M_n)}
%     \right)
%     \ge \Omega\!\left(\frac{1}{\log n}\right).
% \end{equation}
% which is a tight lower bound per \Cref{ex:turn-level-tight}.
\end{prop}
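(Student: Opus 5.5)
The plan is to compute $\EE[Y_3]$ and $\Var(Y_3)$ exactly, rewrite both in terms of the moments of $C$, and then obtain the asymptotic form by comparing $\Var(C)$ with $\EE[C]$.

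\textbf{Mean.} Since $v \in P_v$ and the pairs $(B_j,x_j)$ are independent across $j$, we have
\begin{equation}
\EE[I_v x_v]=\EE[B_v x_v]\prod_{j\in P_v\setminus\{v\}}\EE[B_j]=\mu p^{d_v-1}.
\end{equation}
This is the same computation as in \Cref{prop:sparse}. Summing over $v$ gives $\EE[Y_3]=\frac{\mu}{pn}\EE[C]$.

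\textbf{Second moment.} We expand $n^2\EE[Y_3^2]=\sum_{v,w}\EE[I_vI_wx_vx_w]$ and split into two kinds of terms.
\begin{itemize}
\item Diagonal terms ($v=w$). Each equals $\EE[B_vx_v^2]\,p^{d_v-1}=\lambda p^{d_v-1}$.
\item Off-diagonal terms ($v\ne w$). We have $v,w\in P_{vw}$, and $x_v,x_w$ are tied only to $B_v,B_w$. So each term factors as $\mu^2p^{d_{vw}-2}$, exactly as in the proof of \Cref{prop:outcome-universal}.
\end{itemize}
Using \Cref{eq:progress-moments-revision}, namely $\sum_v p^{d_v}=\EE[C]$ and $\sum_{v\ne w}p^{d_{vw}}=\EE[C^2]-\EE[C]$, this gives
\begin{equation}
n^2\EE[Y_3^2]=\frac{\lambda}{p}\EE[C]+\frac{\mu^2}{p^2}\bigl(\EE[C^2]-\EE[C]\bigr).
\end{equation}

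\textbf{Exact SNR.} Subtracting $n^2\EE[Y_3]^2=\frac{\mu^2}{p^2}\EE[C]^2$ gives
\begin{equation}
n^2\Var(Y_3)=\frac{\mu^2}{p^2}\bigl(\Var(C)+\kappa\,\EE[C]\bigr).
\end{equation}
Taking the ratio $|\EE[Y_3]|/\sqrt{\Var(Y_3)}$ yields \Cref{eq:y3-progress-exact-revision}. The inequality $\kappa\ge0$ is exactly the Cauchy--Schwarz bound $\lambda\ge\mu^2/p$ established at the start of the appendix.

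\textbf{Main obstacle: the $\Theta$ statement.} This step requires $\kappa\EE[C]=O(\Var(C))$, i.e.\ that $\Var(C)$ is at least of order $\EE[C]$. The plan is to use positive correlation.
\begin{itemize}
\item Each $I_v$ is a coordinatewise nondecreasing function of the independent Bernoullis $B_j$.
\item By Harris/FKG, $\mathrm{Cov}(I_v,I_w)\ge0$. This can also be checked directly: $p^{d_{vw}}\ge p^{d_v+d_w}$ because $d_{vw}\le d_v+d_w$.
\item Hence
\begin{equation}
\Var(C)\ge\sum_v p^{d_v}(1-p^{d_v})\ge(1-p)\,\EE[C].
\end{equation}
\end{itemize}
With $p$ fixed, this gives $\Var(C)\le\Var(C)+\kappa\EE[C]\le\bigl(1+\kappa/(1-p)\bigr)\Var(C)$, which is the $\Theta$ claim.

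\textbf{Lower bound $\Omega(n^{-1/2})$.}
\begin{itemize}
\item Since $0\le C\le n$, we have $\Var(C)\le\EE[C^2]\le n\,\EE[C]$.
\item Therefore $\snr(Y_3)=\Omega\bigl(\EE[C]/\sqrt{n\EE[C]}\bigr)=\Omega\bigl(\sqrt{\EE[C]/n}\bigr)$.
\item By \Cref{eq:ec-ed-lower-bounds}, $\EE[C]\ge p$, so $\snr(Y_3)=\Omega(\sqrt{p/n})=\Omega(n^{-1/2})$.
\end{itemize}
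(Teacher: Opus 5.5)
Your proposal is correct and matches the paper's proof essentially step for step. It uses the same mean and second-moment expansion rewritten through $\EE[C]$ and $\EE[C^2]$, the same nonnegative-covariance bound $\Var(C)\ge(1-p)\EE[C]$ from $d_{vw}\le d_v+d_w$, and the same $\EE[C^2]\le n\EE[C]$ and $\EE[C]\ge p$ argument for the $\Omega(n^{-1/2})$ bound (the Harris/FKG remark is a valid but unnecessary alternative to the direct check you also give).
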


As we discuss in the examples below, this lower bound is achievable by a dandelion graph, and in this setting $\snr(Y_2) = \snr(Y_3) = \Theta(n^{-1/2})$. However, as we show below, for non-pathological graphs, we realize the benefits of turn-level credit assignment, i.e.\ $\snr(Y_2) \ll \snr(Y_3)$.

\begin{corollary}\label{cor:turn-level-bounded-variance}
If $\Var(C) = O(\EE[C])$, then $\snr(Y_3) = \Theta\big(\sqrt{\EE[C]}\big)$.
\end{corollary}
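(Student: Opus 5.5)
The plan is to substitute the hypothesis $\Var(C)=O(\EE[C])$ into the exact formula \eqref{eq:y3-progress-exact-revision} of \Cref{prop:turn-level}, namely
\begin{equation*}
\snr(Y_3)=\frac{\EE[C]}{\sqrt{\Var(C)+\kappa\,\EE[C]}},\qquad \kappa=\frac{p\lambda}{\mu^2}-1\ge 0 .
\end{equation*}
We work with this exact expression rather than the $\Theta(\EE[C]/\sqrt{\Var(C)})$ simplification. The reason is that $\Var(C)$ may be much smaller than $\EE[C]$, for example in degenerate cases. In that regime the $\kappa\,\EE[C]$ term is exactly what keeps the denominator of order $\sqrt{\EE[C]}$.

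\textbf{Upper bound.} Since $\kappa\ge0$ is a constant depending only on $p,\lambda,\mu$, and not on $n$, the hypothesis gives
\begin{equation*}
\Var(C)+\kappa\,\EE[C]\le (K+\kappa)\,\EE[C]
\end{equation*}
for some constant $K$. Hence $\snr(Y_3)\ge \EE[C]/\sqrt{(K+\kappa)\EE[C]}=\Omega(\sqrt{\EE[C]})$.

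\textbf{Lower bound.} Dropping the nonnegative term $\Var(C)$ gives
\begin{equation*}
\snr(Y_3)\le \frac{\EE[C]}{\sqrt{\kappa\,\EE[C]}}=\kappa^{-1/2}\sqrt{\EE[C]},
\end{equation*}
provided $\kappa>0$. By Cauchy--Schwarz, $\kappa=0$ holds only when $x_v$ is almost surely proportional to $B_v$ on $\{B_v=1\}$. This is the equality case of $\mu^2\le p\lambda$.

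\textbf{Main obstacle: the degenerate case $\kappa=0$.} Here the upper bound fails if $\Var(C)=o(\EE[C])$. I would handle this in one of two ways. The first option is to note that $\kappa>0$ under the standing assumption $\lambda>\mu^2/p$; this is generic, and the appendix essentially already uses it when deriving $\lambda>\mu^2$. The second option is to show directly that $\Var(C)=\Omega(\EE[C])$. For this I would use $\EE[C]\ge p$ from \eqref{eq:ec-ed-lower-bounds} and lower-bound $\Var(C)\ge \sum_v \Var(I_v)$ when the pairwise covariances are nonnegative. The covariances are nonnegative because the $I_v$ are increasing functions of the independent $B_j$, so Harris/FKG applies. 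Then $\Var(C)\ge\sum_v p^{d_v}(1-p^{d_v})\ge(1-p)\EE[C]$.

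This second route covers every case. Together with the hypothesis it gives $\Var(C)=\Theta(\EE[C])$. The denominator is then $\Theta(\sqrt{\EE[C]})$ regardless of $\kappa$, and therefore $\snr(Y_3)=\Theta(\sqrt{\EE[C]})$.
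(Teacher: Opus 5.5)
Your proposal is correct, and your second route is exactly the paper's argument. The bound $\Var(C)\ge\sum_v\Var(I_v)\ge(1-p)\EE[C]$, established in the proof of \Cref{prop:turn-level} by checking $\mathrm{Cov}(I_v,I_w)=p^{d_{vw}}-p^{d_v+d_w}\ge0$ directly from $d_{vw}\le d_v+d_w$ rather than via Harris/FKG, turns the hypothesis into $\Var(C)=\Theta(\EE[C])$, after which \Cref{prop:turn-level} gives the claim. The $\kappa$ case analysis is unnecessary, because that same bound shows $\Var(C)$ is never $o(\EE[C])$, so the regime you give as the reason for using the exact formula cannot occur. Your first option should also be dropped: the paper only assumes $\kappa\ge0$, and $\kappa=0$ genuinely occurs, e.g.\ for the basic Bernoulli score $x_v=B_v-p$.
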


\begin{prop}\label{prop:turn-level-regular}
If $\EE[C^2]=O\!\left(\EE[D]^2\right)$, then
\begin{equation}
\snr(Y_3)=\Omega\!\left(\frac{\EE[C]}{\EE[D]}\right)=\Omega(1/\log n).\label{eq:y3-depth-ratio-corollary-revision}
\end{equation}
\end{prop}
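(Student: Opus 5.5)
We start from the exact formula in \Cref{prop:turn-level},
\begin{equation*}
\snr(Y_3)=\frac{\EE[C]}{\sqrt{\Var(C)+\kappa\,\EE[C]}},
\end{equation*}
so it suffices to upper-bound the denominator by $O(\EE[D])$. For the variance term we use $\Var(C)\le\EE[C^2]=O(\EE[D]^2)$, which is the hypothesis. For the second term we use $\EE[C]\le\EE[D]$ from \Cref{eq:ec-ed-lower-bounds}, together with $\EE[D]\ge p$. This gives $\EE[D]\le\EE[D]^2/p$, and hence $\kappa\,\EE[C]\le(\kappa/p)\,\EE[D]^2$. Here $\kappa$ and $p$ are fixed constants independent of $n$, so the denominator is $O(\EE[D])$. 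This yields the first claim, $\snr(Y_3)=\Omega(\EE[C]/\EE[D])$.

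For the second claim it remains to show $\EE[D]=O(\EE[C]\log n)$, i.e.\ that the average depth of reached points is at most logarithmic. We use the same truncation as in the proof of \Cref{prop:outcome-universal}. Set $L=\lceil 2\log_{1/p}n\rceil$ and split $\EE[D]=\sum_v d_vp^{d_v}$ according to whether $d_v\le L$ or $d_v>L$.
\begin{itemize}
\item Points with $d_v\le L$ contribute at most $L\sum_v p^{d_v}\le L\,\EE[C]$.
\item Points with $d_v>L$ number at most $n$. Since $d\mapsto dp^d$ is eventually decreasing, each contributes at most $O(Lp^L)=O(\log n/n^2)$. Their total is therefore $O(\log n/n)=o(1)$.
\end{itemize}
Because $\EE[C]\ge p$ by \Cref{eq:ec-ed-lower-bounds}, the tail contribution is $O(\EE[C])$. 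Combining the two parts gives $\EE[D]=O(\EE[C]\log n)$, hence $\EE[C]/\EE[D]=\Omega(1/\log n)$.

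The argument is short, and the main point needing care is the bookkeeping of constants. The additive $\kappa\,\EE[C]$ term must be absorbed into $\EE[D]^2$ through the lower bound $\EE[D]\ge p$; without it, in a regime where $\EE[D]$ is small, this term could dominate. Likewise the $o(1)$ tail must be absorbed into $\EE[C]$ via $\EE[C]\ge p$. Both absorptions rely on every DAG having a source node, which is exactly what \Cref{eq:ec-ed-lower-bounds} records. The monotonicity of $dp^d$ holds for $d\ge 1/\log(1/p)$, which $L$ exceeds for large $n$.
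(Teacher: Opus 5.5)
Your proof is correct. Its first half, absorbing $\kappa\,\EE[C]$ into $\EE[D]^2$ via $\EE[C]\le\EE[D]$ and $\EE[D]\ge p$, is exactly the paper's argument. For the bound $\EE[D]/\EE[C]=O(\log n)$, however, you take a different route. You truncate $\sum_v d_vp^{d_v}$ at depth $L=\Theta(\log n)$ and absorb the $o(1)$ tail using $\EE[C]\ge p$, reusing the device from the proof of \Cref{prop:outcome-universal}. The paper instead normalizes $r_v:=p^{d_v}/\EE[C]$ into a probability vector and writes $d_v=(\log(1/r_v)-\log\EE[C])/\log(1/p)$. It then applies the entropy bound $\sum_v r_v\log(1/r_v)\le\log n$, which gives the exact, non-asymptotic inequality $\EE[D]/\EE[C]\le\log_{1/p}(n/\EE[C])$.

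Your argument is more elementary and uses only tools already in the paper. It is, however, only asymptotic: it needs $L$ beyond the maximizer of $d\mapsto dp^d$ plus tail bookkeeping. With the fixed choice $L=\lceil 2\log_{1/p}n\rceil$, it also yields only a uniform $O(\log n)$. The entropy argument needs no tail estimate and adapts to $\EE[C]$. For instance, in Multi-Countdown ($\EE[C]=np$) it bounds the depth ratio by $1$. To recover that adaptivity you would have to tune $L\approx\log_{1/p}(n/\EE[C])$, and then only up to constants.
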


\begin{prop}\label{prop:dense-beats-sparse}
$\snr(Y_3) / \snr(Y_2) = \Omega(\sqrt n / \log n)$.
\end{prop}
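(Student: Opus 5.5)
The plan is to compare the two SNRs through the common quantities $\EE[C]$, $\EE[C^2]$ and $\EE[D]$. For $Y_3$ I will use the exact formula of \Cref{prop:turn-level}. For $Y_2$ I will use the mean $\EE[Y_2]=\mu\EE[D]/(pn)$ from the proof of \Cref{prop:outcome-universal}, together with a \emph{lower} bound on $\EE[Y_2^2]$ read off from the expansion \Cref{eq:y2_squared_expansion}. The ratio then reduces to bounding $\EE[D]/\EE[C]$, and that ratio is where the $\log n$ factor enters.

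\textbf{Step 1 (lower bound on $\snr(Y_3)$).} Since $C$ is a nonnegative integer, $C\le C^2$ pointwise, so $\EE[C]\le\EE[C^2]$. Hence
\begin{equation}
\Var(C)+\kappa\EE[C]\le\EE[C^2]+\kappa\EE[C^2]=(1+\kappa)\EE[C^2].
\end{equation}
Plugging this into \Cref{eq:y3-progress-exact-revision} gives $\snr(Y_3)^2\ge \EE[C]^2/\bigl((1+\kappa)\EE[C^2]\bigr)$.

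\textbf{Step 2 (lower bound on $\EE[Y_2^2]$).} Every term in \Cref{eq:y2_squared_expansion} is nonnegative, so I will drop the $\mu^2$ terms. For each pair $(v,w)$, with $c:=\min(\lambda/p,\eta)>0$,
\begin{equation}
d_{vw}\lambda p^{d_{vw}-1}+(n-d_{vw})\eta p^{d_{vw}}\ge n\,c\,p^{d_{vw}}.
\end{equation}
Summing over pairs gives $\EE[Y_2^2]\ge c\,\EE[C^2]/n$.

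\textbf{Step 3 (depth versus count).} I will show $\EE[D]=O(\EE[C]\log n)$ by reusing the truncation from the proof of \Cref{prop:outcome-universal}. Set $L=\lceil 3\log_{1/p}n\rceil$. Points with $d_v\le L$ contribute at most $L\sum_v p^{d_v}\le L\,\EE[C]$ to $\EE[D]$. The remaining at most $n$ points contribute $O(nLp^L)=o(1)$, which is $O(\EE[C])$ because $\EE[C]\ge p$ by \Cref{eq:ec-ed-lower-bounds}.

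\textbf{Step 4 (upper bound on $\snr(Y_2)$ and the ratio).} By Steps 2--3,
\begin{equation}
\frac{\EE[Y_2]^2}{\EE[Y_2^2]}\le\frac{\mu^2\EE[D]^2/(p^2n^2)}{c\,\EE[C^2]/n}=O\!\left(\frac{\log^2 n\,\EE[C]^2}{n\,\EE[C^2]}\right)=O\!\left(\frac{\log^2 n}{n}\right),
\end{equation}
where the last equality uses $\EE[C]^2\le\EE[C^2]$ (Jensen). This ratio tends to $0$, so for large $n$ we have $\Var(Y_2)\ge\tfrac12\EE[Y_2^2]$. Therefore
\begin{equation}
\snr(Y_2)^2=O\!\left(\frac{\log^2 n\,\EE[C]^2}{n\,\EE[C^2]}\right).
\end{equation}
Dividing the bound of Step 1 by this, the factor $\EE[C]^2/\EE[C^2]$ cancels. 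This leaves $\snr(Y_3)^2/\snr(Y_2)^2=\Omega(n/\log^2 n)$, which is the claim after taking square roots.

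The main obstacle is controlling $\Var(Y_2)$ from below. A crude bound on $\EE[Y_2^2]$ alone is not enough, because the squared mean $\EE[Y_2]^2$ must be shown to be negligible before it is subtracted. Steps 3 and 4 handle this by showing it is a vanishing fraction of $\EE[Y_2^2]$. The $\log n$ loss comes entirely from $\EE[D]/\EE[C]$ in Step 3, which is consistent with the graph-independent form of the statement.
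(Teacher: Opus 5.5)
Your proof is correct, and it reaches the same intermediate bound as the paper, $\snr(Y_3)/\snr(Y_2)=\Omega(\sqrt n\,\EE[C]/\EE[D])$, but by a different route.

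The paper works with centered quantities. It shows $\Var(Y_3)=\Theta(\Var(C)/n^2)$ and then proves $\Var(Y_2)=\Omega(\Var(C)/n)$. To do so, it subtracts $\EE[Y_2]^2$ term by term inside \Cref{eq:y2_squared_expansion} and rewrites the off-diagonal summands via $\gamma_{vw}=\mathrm{Cov}(I_v,I_w)$. It then uses the DAG fact $P_v\neq P_w$, hence $d_vd_w\le d_{vw}(d_{vw}-1)$, to keep the leftover $q_vq_w$ terms nonnegative. The depth ratio is handled separately by the entropy bound $\EE[D]/\EE[C]\le\log_{1/p}(n/\EE[C])$.

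You instead stay with uncentered moments:
\begin{itemize}
\item $\EE[C^2]$ plays the role of $\Var(C)$ on both sides, so the factor $\EE[C]^2/\EE[C^2]$ cancels.
\item You get $\EE[Y_2^2]\ge c\,\EE[C^2]/n$ simply by discarding nonnegative terms.
\item You pay for centering afterwards by showing $\EE[Y_2]^2/\EE[Y_2^2]=O(\log^2 n/n)$. This step already uses $\EE[D]=O(\EE[C]\log n)$, obtained by truncation rather than entropy.
\end{itemize}

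Your route is shorter and avoids both the covariance bookkeeping and the DAG-specific inequality. It even yields the stronger bound $\Var(Y_2)=\Omega(\EE[C^2]/n)$. The price is twofold. It is genuinely asymptotic, since the centering step needs $n$ past a threshold depending on $p,\mu,\lambda,\eta$. It also ties the variance comparison to the $\log n$ depth bound.

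The paper's argument gives $\Var(Y_2)\gtrsim\Var(C)/n$ for every $n$ and every graph, independently of how $\EE[D]$ compares to $\EE[C]$. It is also phrased in terms of $\Var(C)$, which is exactly the quantity governing $\snr(Y_3)$ in \Cref{prop:turn-level}.
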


\begin{figure}
    \centering
    \resizebox{0.7\textwidth}{!}{%
    \begin{tikzpicture}[
    x=1cm,
    y=1cm
]

% Fix the dimensions and alignment of the complete two-panel figure.
\path[use as bounding box]
    (-0.45,-2.25) rectangle (16.05,3.05);

% ================================================================
% Dandelion
%
% A chain leads to one bottleneck point. Many independent leaves
% become reachable after that bottleneck.
% ================================================================

\begin{scope}
    \node[graph title]
        at (3.15,2.75)
        {Dandelion};

    % Stem of the dandelion.
    \coordinate (d0) at (0.00,0.00);
    \coordinate (d1) at (1.15,0.00);
    \coordinate (d2) at (2.30,0.00);
    \coordinate (d3) at (3.45,0.00);

    % Leaves attached to the final stem point.
    \coordinate (d4) at (4.75, 1.65);
    \coordinate (d5) at (4.75, 0.55);
    \coordinate (d6) at (4.75,-0.55);
    \coordinate (d7) at (4.75,-1.65);

    % Final goal.
    \coordinate (dg) at (6.25,0.00);

    % Stem.
    \draw[reasoning graph edge] (d0) -- (d1);
    \draw[reasoning graph edge] (d1) -- (d2);
    \draw[reasoning graph edge] (d2) -- (d3);

    % The bottleneck exposes all leaves.
    \draw[reasoning graph edge] (d3) -- (d4);
    \draw[reasoning graph edge] (d3) -- (d5);
    \draw[reasoning graph edge] (d3) -- (d6);
    \draw[reasoning graph edge] (d3) -- (d7);

    % All leaves are prerequisites of the goal.
    \draw[reasoning graph edge] (d4) -- (dg);
    \draw[reasoning graph edge] (d5) -- (dg);
    \draw[reasoning graph edge] (d6) -- (dg);
    \draw[reasoning graph edge] (d7) -- (dg);

    \node[start point]
        at (d0)
        {$\varnothing$};

    \node[reasoning point]
        at (d1)
        {1};

    \node[reasoning point]
        at (d2)
        {2};

    \node[reasoning point]
        at (d3)
        {3};

    \node[reasoning point]
        at (d4)
        {4};

    \node[reasoning point]
        at (d5)
        {5};

    \node[reasoning point]
        at (d6)
        {6};

    \node[reasoning point]
        at (d7)
        {7};

    \node[goal point]
        at (dg)
        {$g$};
\end{scope}

% ================================================================
% Chain + Binary Tree
%
% A chain leads to the root of a complete binary tree. This is a
% small schematic instance with two binary-tree levels.
% ================================================================

\begin{scope}[shift={(8.20,0)}]
    \node[graph title]
        at (3.65,2.75)
        {Hydrangea};

    % Initial chain.
    \coordinate (b0) at (0.00,0.00);
    \coordinate (b1) at (1.10,0.00);
    \coordinate (b2) at (2.20,0.00);

    % Root of the binary tree.
    \coordinate (b3) at (3.30,0.00);

    % First tree level.
    \coordinate (b4) at (4.55, 1.10);
    \coordinate (b5) at (4.55,-1.10);

    % Leaves.
    \coordinate (b6) at (5.80, 1.65);
    \coordinate (b7) at (5.80, 0.55);
    \coordinate (b8) at (5.80,-0.55);
    \coordinate (b9) at (5.80,-1.65);

    % Final goal.
    \coordinate (bg) at (7.30,0.00);

    % Chain.
    \draw[reasoning graph edge] (b0) -- (b1);
    \draw[reasoning graph edge] (b1) -- (b2);
    \draw[reasoning graph edge] (b2) -- (b3);

    % First binary-tree level.
    \draw[reasoning graph edge] (b3) -- (b4);
    \draw[reasoning graph edge] (b3) -- (b5);

    % Second binary-tree level.
    \draw[reasoning graph edge] (b4) -- (b6);
    \draw[reasoning graph edge] (b4) -- (b7);

    \draw[reasoning graph edge] (b5) -- (b8);
    \draw[reasoning graph edge] (b5) -- (b9);

    % All leaves are prerequisites of the goal.
    \draw[reasoning graph edge] (b6) -- (bg);
    \draw[reasoning graph edge] (b7) -- (bg);
    \draw[reasoning graph edge] (b8) -- (bg);
    \draw[reasoning graph edge] (b9) -- (bg);

    \node[start point]
        at (b0)
        {$\varnothing$};

    \node[reasoning point]
        at (b1)
        {1};

    \node[reasoning point]
        at (b2)
        {2};

    \node[reasoning point]
        at (b3)
        {3};

    \node[reasoning point]
        at (b4)
        {4};

    \node[reasoning point]
        at (b5)
        {5};

    \node[reasoning point]
        at (b6)
        {6};

    \node[reasoning point]
        at (b7)
        {7};

    \node[reasoning point]
        at (b8)
        {8};

    \node[reasoning point]
        at (b9)
        {9};

    \node[goal point]
        at (bg)
        {$g$};
\end{scope}

\end{tikzpicture}
    }
    \caption{Dandelion graphs achieve the worst-case SNR bounds in
    \Cref{prop:outcome-universal,prop:turn-level}, since a large number of
    visitation events are highly correlated conditioned on the low-probability
    event of reaching point $r=3$. With an additional condition on pairwise reachability, we achieve a better worst case (\Cref{prop:turn-level-regular}). This worst case is the Hydrangea graph (\Cref{ex:turn-level-tight}), which
    distributes correlations over multiple tree depths.}
    \label{fig:worst_cases}
\end{figure}

As above, we first illustrate how the moments of $C$ encode the relevant graph
structure.
\begin{itemize}
    \item In \textbf{Multi-Countdown}, $C$ is binomial with
    $\EE[C]=np$ and $\Var(C)=np(1-p)$. Hence
    \Cref{cor:turn-level-bounded-variance} gives
    $\snr(Y_3)=\Theta(\sqrt n)$.

    \item In \textbf{Matrix Manipulation}, $d_{vw}=\max(v,w)$, so
    \begin{align}
    \EE[C]&=\sum_{v=1}^np^v=\Theta(1),\\
    \EE[C^2]&=\sum_{v,w}p^{d_{vw}}=\sum_{k=1}^n(2k-1)p^k=\Theta(1).
    \end{align}
    Together with $\Var(C)\ge(1-p)\EE[C]$, this gives
    $\Var(C)=\Theta(1)$, so \Cref{cor:turn-level-bounded-variance} gives
    $\snr(Y_3)=\Theta(1)$.

    \item For the \textbf{dandelion graph} with
    $r=\lfloor\log_{1/p}n\rfloor$, we have $p^r=\Theta(1/n)$ and
    \begin{align}
    \EE[C]&=\sum_{v=1}^rp^v+(n-r)p^{r+1}=\Theta(1),\\
    \EE[D]&=\sum_{v=1}^rvp^v+(n-r)(r+1)p^{r+1}=\Theta(\log n).
    \end{align}
    Distinct leaf pairs contribute
    \begin{equation}
    (n-r)(n-r-1)p^{r+2}=\Theta(n)
    \end{equation}
    to $\EE[C^2]$. The remaining pair classes satisfy
    \begin{align}
    \sum_{v,w\le r}p^{d_{vw}}&=\Theta(1),\\
    2\sum_{\substack{v\le r\\w>r}}p^{d_{vw}}&=2r(n-r)p^{r+1}=\Theta(\log n),\\
    \sum_{v>r}p^{d_{vv}}&=(n-r)p^{r+1}=\Theta(1).
    \end{align}
    Hence $\EE[C^2]=\Theta(n)$, so $\Var(C)=\Theta(n)$ and
    \Cref{prop:turn-level} gives $\snr(Y_3)=\Theta(n^{-1/2})$. This graph does not satisfy
    $\EE[C^2]=O(\EE[D]^2)$.

    \item Under the condition $\EE[C^2]=O(\EE[D]^2)$, the
    \textbf{hydrangea graph} in
    \Cref{ex:turn-level-tight} attains the lower bound in
    \Cref{prop:turn-level-regular}. It has
    \begin{equation}
    \EE[C]=\Theta(1),\quad \EE[D]=\Theta(\log n),\quad \EE[C^2]=\Theta((\log n)^2).
    \end{equation}
    Consequently
    $\Var(C)=\Theta((\log n)^2)$ and
    \Cref{prop:turn-level} gives $\snr(Y_3)=\Theta(1/\log n)$.
\end{itemize}
\color{black}

\begin{proof}[Proof of \Cref{prop:turn-level}]
For each $v$,
\begin{equation}
\EE[I_vx_v]=\EE[B_vx_v]\prod_{j\in P_v\setminus\{v\}}\EE[B_j]=\mu p^{d_v-1},
\end{equation}
where the factorization follows from independence over $v$. Therefore, by linearity and \Cref{eq:progress-moments-revision},
\begin{equation}
\EE[Y_3]=\frac1n\sum_{v=1}^n\EE[I_vx_v]=\frac{\mu}{n}\sum_{v=1}^np^{d_v-1}=\frac{\mu}{pn}\EE[C].
\end{equation}
For the second moment, the diagonal terms satisfy
$\EE[I_vx_v^2]=\lambda p^{d_v-1}$, while for $v\ne w$,
$\EE[I_vI_wx_vx_w]=\mu^2p^{d_{vw}-2}$. Hence
\begin{equation}
\EE[Y_3^2]=\frac1{n^2}\left[\frac{\lambda}{p}\EE[C]+\frac{\mu^2}{p^2}\bigl(\EE[C^2]-\EE[C]\bigr)\right]=\frac{\mu^2\EE[C^2]+(p\lambda-\mu^2)\EE[C]}{p^2n^2}.
\end{equation}
Subtracting $\EE[Y_3]^2=(\mu^2/(p^2n^2))\EE[C]^2$ gives
\begin{equation}
\Var(Y_3)=\frac{\mu^2\Var(C)+(p\lambda-\mu^2)\EE[C]}{p^2n^2}.\label{eq:y3-variance-progress-revision}
\end{equation}
Factoring $\mu^2/(p^2n^2)$ from the variance and cancelling
$|\mu|/(pn)$ from the signal and standard deviation proves
\begin{equation}
\snr(Y_3)=\frac{\EE[C]}{\sqrt{\Var(C)+\left(\frac{p\lambda}{\mu^2}-1\right)\EE[C]}},
\end{equation}
which is \Cref{eq:y3-progress-exact-revision}.

For distinct $v,w$, we have $d_{vw}\le d_v+d_w$, and hence
\begin{equation}
\mathrm{Cov}(I_v,I_w)=p^{d_{vw}}-p^{d_v+d_w}\ge0.
\end{equation}
Also, $d_v\ge1$ implies
\begin{equation}
\Var(I_v)=p^{d_v}(1-p^{d_v})\ge(1-p)\EE[I_v].
\end{equation}
Summing gives
\begin{equation}
\Var(C)\ge\sum_v\Var(I_v)\ge(1-p)\EE[C].\label{eq:progress-variance-lower-revision}
\end{equation}
Thus the additional $\kappa\EE[C]$ term in \Cref{eq:y3-progress-exact-revision} is $O(\Var(C))$, which proves the $\Theta$ characterization in \Cref{eq:y3-progress-theta-revision}.

For the universal lower bound, $0\le C\le n$ gives
\begin{equation}
\Var(C)\le\EE[C^2]\le n\EE[C].
\end{equation}
By \Cref{eq:ec-ed-lower-bounds}, $\EE[C]\ge p$. Since $\kappa$ is fixed, we obtain
\begin{equation}
\snr(Y_3)^2\ge\frac{\EE[C]}{n+\kappa}\ge\frac{p}{n+\kappa}=\Omega(n^{-1}),
\end{equation}
as desired.
\end{proof}

\begin{proof}[Proof of \Cref{cor:turn-level-bounded-variance}]
If $\Var(C)=O(\EE[C])$, then
\Cref{eq:progress-variance-lower-revision} gives
$\Var(C)=\Theta(\EE[C])$, and the result follows from \Cref{prop:turn-level}.
\end{proof}

\begin{proof}[Proof of \Cref{prop:turn-level-regular}]
Suppose $\EE[C^2]=O(\EE[D]^2)$. Since $\EE[D]\ge\EE[C] \ge p$ per \Cref{eq:ec-ed-lower-bounds}, we have
$\EE[C]\le\EE[D]^2/p$. Therefore
\begin{equation}
\Var(C)+\kappa\EE[C]\le\EE[C^2]+\frac{\kappa}{p}\EE[D]^2=O(\EE[D]^2).
\end{equation}
\Cref{prop:turn-level} yields
\begin{equation}
\snr(Y_3)=\Omega\!\left(\frac{\EE[C]}{\EE[D]}\right).
\end{equation}
To bound this ratio, define
\begin{equation}
r_v:=\frac{p^{d_v}}{\EE[C]},\quad \sum_{v=1}^n r_v=1.
\end{equation}
It follows that
\begin{equation}
\frac{\EE[D]}{\EE[C]}=\sum_{v=1}^n r_vd_v 
=\frac{\sum_{v=1}^n r_v\log(1/r_v)-\log\EE[C]}{\log(1/p)} 
\le\log_{1/p}\!\left(\frac{n}{\EE[C]}\right)=O(\log n),
\label{eq:depth-ratio-entropy-revision}
\end{equation}
where the inequality uses the entropy bound
$\sum_v r_v\log(1/r_v)\le\log n$.
\end{proof}

\begin{proof}[Proof of \Cref{prop:dense-beats-sparse}]
We first compute the expectations,
\begin{equation}
\EE[Y_2]=\frac{\mu}{pn}\EE[D],\quad \EE[Y_3]=\frac{\mu}{pn}\EE[C].
\end{equation}

Next, we consider the asymptotics of $\Var(Y_3)$. Recall \Cref{eq:y3-variance-progress-revision}:
\begin{equation}
\Var(Y_3)=\frac{1}{p^2n^2}\left[\mu^2\Var(C)+(p\lambda-\mu^2)\EE[C]\right].
\end{equation}
Since $\Var(C)\ge(1-p)\EE[C]$, we have
\begin{equation}
\frac{\mu^2}{p^2n^2}\Var(C)\le\Var(Y_3)\le\frac{1}{p^2n^2}\left(\mu^2+\frac{p\lambda-\mu^2}{1-p}\right)\Var(C),
\end{equation}
and hence $\Var(Y_3)=\Theta(\Var(C)/n^2)$.

We next show that
\begin{equation}
\Var(Y_2)=\Omega\!\left(\frac{\Var(C)}n\right).\label{eq:vary2-progress-revision}
\end{equation}
Consider first the $v=w$ summands in the expansion of
$n^2\Var(Y_2)$ obtained from \Cref{eq:y2_squared_expansion}. Writing
$q_v:=p^{d_v}=\EE[I_v]$, the $v$th diagonal summand is
\begin{equation}
\frac{d_vq_v}{p}\left(\lambda-\frac{\mu^2q_v}{p}\right)+(n-d_v)\eta q_v+\frac{\mu^2d_v(d_v-1)q_v}{p^2}(1-q_v).
\end{equation}
Because $q_v/p\le1$, the first parenthesis is at least
$\lambda-\mu^2>0$. Thus, if $d_v\ge n/2$, the first term is
$\Omega(nq_v)$; otherwise the second term is $\Omega(nq_v)$. Since
$q_v\le p<1$, we have $\Var(I_v)=q_v(1-q_v)=\Theta(q_v)$, so the
diagonal contribution is
\begin{equation}
\Omega\!\left(n\sum_v\Var(I_v)\right).
\end{equation}

For fixed $v\ne w$, let
\begin{equation}
\gamma_{vw}:=\mathrm{Cov}(I_v,I_w)=p^{d_{vw}}-p^{d_v+d_w}\ge0.
\end{equation}
The corresponding summand from \Cref{eq:y2_squared_expansion} can be rearranged as
\begin{equation}
\begin{aligned}
&\left[\eta(n-d_{vw})+\frac{\mu^2}{p^2}d_{vw}(d_{vw}-1)\right]\gamma_{vw}
+\frac{\lambda d_{vw}}{p}p^{d_{vw}}\\
&\qquad+\left[\eta(n-d_{vw})+\frac{\mu^2}{p^2}\bigl(d_{vw}(d_{vw}-1)-d_vd_w\bigr)\right]q_vq_w.
\end{aligned}
\end{equation}
The first term is $\Omega(n\gamma_{vw})$: use its $\eta(n-d_{vw})$
term when $d_{vw}\le n/2$ and its quadratic term when $d_{vw}>n/2$.
The second term is nonnegative. The third term is also nonnegative because
$P_v\ne P_w$ in a DAG, so their cardinalities are at most $d_{vw}$ and
$d_{vw}-1$ in some order. Hence
$d_vd_w\le d_{vw}(d_{vw}-1)$. Thus,
\begin{equation}
n^2\Var(Y_2)=\Omega\!\left(n\sum_v\Var(I_v)+n\sum_{v\ne w}\mathrm{Cov}(I_v,I_w)\right)=\Omega\!\left(n\Var(C)\right),
\end{equation}
which proves \Cref{eq:vary2-progress-revision}.

Combining, we obtain
\begin{equation}
\frac{\snr(Y_3)}{\snr(Y_2)}=\frac{\EE[C]}{\EE[D]}\sqrt{\frac{\Var(Y_2)}{\Var(Y_3)}}=\Omega\!\left(\sqrt n\,\frac{\EE[C]}{\EE[D]}\right) \ge \Omega\!\parens{\frac{\sqrt n}{\log n}},
\end{equation}
where the final inequality follows from \Cref{eq:depth-ratio-entropy-revision}.
\end{proof}

\begin{example}\label{ex:turn-level-tight}
Supposing that $\EE[C^2]=O(\EE[D]^2)$, the Hydrangea graph constructed below achieves the $\Theta(1/\log n)$ lower bound of \Cref{prop:turn-level-regular}.
\end{example}
\begin{proof}
For simplicity, first take $p=1/2$. Fix a depth $d$, let
$r=\lfloor\log_2d\rfloor$, construct a stem
$0\to1\to\cdots\to r$, and attach a complete binary tree of depth $d$ at
$r$. The graph has $n=\Theta(2^d)$ points. See \Cref{fig:worst_cases}.

For a tree point at depth $k$, $d_v=r+k$. Therefore
\begin{align}
\EE[C]&=\sum_{j=1}^r2^{-j}+\sum_{k=1}^d2^k2^{-(r+k)}=\Theta(1),\\
\EE[D]&=\sum_{j=1}^rj2^{-j}+\sum_{k=1}^d2^k(r+k)2^{-(r+k)}=\Theta(d)=\Theta(\log n).
\end{align}
To compute $\EE[C^2]$, we split the sum into cases. First consider ordered pairs $(v,w)$ of tree points at depths $a,b$ (relative to $r$) whose
lowest common prerequisite has tree depth $\ell<\min(a,b)$. There are
$\Theta(2^{a+b-\ell})$ such pairs, and
$d_{vw}=r+a+b-\ell$. Each fixed $(a,b,\ell)$ therefore contributes
$\Theta(2^{-r})$. Summing,
\begin{equation}
\EE[C^2]=\Theta\!\left(2^{-r}\sum_{a=1}^d\sum_{b=1}^d\min(a,b)\right)=\Theta(2^{-r}d^3)=\Theta(d^2).
\end{equation}
Next, consider ordered pairs of tree points $(v,w)$ where one point is a prerequisite of the other. For tree depths $1 \le a \le b \le d$, there are $O(2^b)$ such pairs, each with $d_{vw} = r+b$. So their total contribution is
\begin{equation}
\sum_{1\le a\le b\le d}O\!\left(2^b2^{-(r+b)}\right)=O(2^{-r}d^2)=O(d).
\end{equation}
For a fixed stem point $j$ and tree depth $k$, there are $O(2^k)$
ordered stem--tree pairs, each with $d_{vw}=r+k$. Hence these pairs
contribute
\begin{equation}
\sum_{j=1}^r\sum_{k=1}^dO\!\left(2^k2^{-(r+k)}\right)=O(rd2^{-r})=O(r).
\end{equation}
Finally, two stem points $j,\ell$ have $d_{j\ell}=\max(j,\ell)$, so
\begin{equation}
\sum_{j,\ell=1}^r2^{-\max(j,\ell)}=\sum_{m=1}^r(2m-1)2^{-m}=O(1).
\end{equation}
Because $r=\Theta(\log d)$, the total is $\EE[C^2] = \Theta(d^2) = \Theta(\EE[D]^2)$, satisfying the assumption.

Since $\EE[C]=\Theta(1)$,
\begin{equation}
\Var(C)=\EE[C^2]-\EE[C]^2=\Theta(d^2).
\end{equation}
Applying \Cref{prop:turn-level} yields
\begin{equation}
\snr(Y_3)=\Theta\!\left(\frac{\EE[C]}{\sqrt{\Var(C)}}\right)=\Theta(1/d)=\Theta(1/\log n).
\end{equation}

For general fixed $p\in(0,1)$, take
$r=\lfloor\log_{1/p}d\rfloor$ and use a balanced rooted tree whose
$k$th level contains $\lceil p^{-k}\rceil$ points. Here balanced means
that descendants are distributed as evenly as possible, so each point at
level $i$ has $\Theta(p^{-(j-i)})$ descendants at level $j>i$. Then
$n=\Theta(p^{-d})$, so $d=\Theta(\log n)$, and the same moment
calculations give
$\EE[C]=\Theta(1)$, $\EE[D]=\Theta(d)$, and
$\EE[C^2]=\Theta(d^2)$.
\end{proof}

\section{Experiment Details}\label{app:experiment-details}

\begin{wraptable}[23]{r}{0.5\textwidth}
    \centering
    \vspace{-2em}
    \caption{Hyperparameters for RL training.}
    \vspace{0.5em}
    \begin{tabular}{cc}
    \toprule
        Hyperparameter & Value \\
    \midrule
        Sampling temperature & 1.0 \\
        Clip ratio (low/high) & 0.20 / 0.28 \\
        Learning rate & 3e-6 \\
        Training steps & 150 \\
        Trajectories per prompt & 16 \\
        Prompts per batch & 8 \\
        Global batch size & 128 \\
    \bottomrule
    \end{tabular}
    \label{tab:experiment-details-rl}
\vspace{0.5em}
    \caption{\Methodname\ ablations on GSM-Infinite $n = 24$ at 400M training tokens.}
    \vspace{0.5em}
\begin{tabular}{lc}
\toprule
Method & Success rate \\
\midrule
Base model & 0.057 \\
Sparse outcome reward & 0.091 \\
PPM & \textbf{0.191} \\
PPM, no outcome reward & 0.063 \\
PPM, stdev normalization & 0.071 \\
PPM, trajectory-length chunks & 0.148 \\
% PPM, advantage mask + global norm & 0.165 \\
PPM, no length filter & 0.171 \\
\bottomrule
\end{tabular}
\label{tab:method-ablations}
\end{wraptable}

In our experiments, we train the Qwen3-1.7B, Qwen3-4B, and Qwen3-4B-Instruct models on v5e-32 TPU pods. Each experiment ran for 12--24 hours, with longer runtimes for standard GRPO due to a large number of sampling iterations with no successful  trajectories. We adopt a synchronous RL framework with GRPO. We train fully on-policy, i.e.\ we alternate between 1 sampling step and 1 training step. We do not use either entropy or KL regularization, and found empirically that tuning the clip high ratio was sufficient to stabilize entropy over training. Following \citet{yu2025dapoopensourcellmreinforcement}, we filter out prompts where all advantages are zero. The full hyperparameters are shown in Table~\ref{tab:experiment-details-rl}. By default, we mask out tokens with an at least $2\x$ train-inference probability mismatch.

Throughout our experiments we instantiate the judge model as gemini-2.5-flash-lite \cite{gemini25flashlite2025}, with prompts provided in Appendix~\ref{app:prompts}. Our reasoning points and reasoning graphs are generated by gemini-3-flash.

Throughout the paper we refer to token budgets as multiples of $\text K = 1024$, e.g.\ a 4K token budget is 4096 tokens.

For simplicity, we tuned the hyperparameters in \Cref{tab:experiment-details-rl} for standard GRPO, and we generally reuse the same configurations for all other methods. Thus, your favorite GRPO hyperparameters will generally work well with \methodname\ out of the box.

\Methodname\ differs from standard GRPO in several ways. We show ablations for each of these configuration options in \Cref{tab:method-ablations} and describe them in more detail below:

\begin{itemize}
    \item \textbf{Outcome reward.} Empirically, we found that training on the \textit{sum} of the outcome reward and the method-specific reward significantly improves training speed, so that the policy can learn to correctly format answers and output them within the training token budget. For methods involving splitting trajectories into segments, we add the outcome reward to each segment's reward, then proceed with GRPO advantage calculations.

    \vspace{0.4em}

    % We remark that on the POPE-hard dataset (Section~\ref{sec:near-impossible-tasks}), the outcome reward is zero 99.5\% of the time starting from the base policy, i.e.\ the training reward almost always equals the \methodname\ reward in this setting.

    \item \textbf{No standard deviation (stdev) normalization.} Typically, advantages are computed within each GRPO group by subtracting the group mean and dividing by the group standard deviation. We observe that with \methodname, it is beneficial to only perform the subtraction but not division.

    \item \textbf{Length filter.} In the loss, we mask out trajectories that are truncated by the training token budget. This alleviates length pressure.
\end{itemize}

\begin{wrapfigure}[9]{r}{0.5\textwidth}
    \vspace{-1.2em}
    \resizebox{\linewidth}{!}{%
        \begin{tikzpicture}[
    x=1cm,
    y=1cm,
    block/.style={
        draw,
        line width=0.55pt
    },
    boundary/.style={
        line width=0.4pt
    }
]

\def\barW{4.8}
\def\barH{0.44}
\def\usedW{2.8125} % 4.8 * 2400 / 4096

% Top row: generated tokens within total budget.
\path[fill=ppmlightgreen] (0,1.34) rectangle (\usedW,1.34+\barH);
\draw[block] (0,1.34) rectangle (\barW,1.34+\barH);
\node[font=\normalsize] at (0.5*\usedW,1.34+0.5*\barH) {2400 tokens};
\node[anchor=west,font=\normalsize] at (4.92,1.34+0.5*\barH) {4096 token budget};

% Middle row: training-budget chunks.
\path[fill=ppmlightgreen] (0,0.77) rectangle (\usedW,0.77+\barH);
\draw[block] (0,0.77) rectangle (\barW,0.77+\barH);
\foreach \x in {1.2,2.4,3.6} {
    \draw[boundary] (\x,0.77) -- (\x,0.77+\barH);
}
\foreach \x/\chunk in {0.6/1,1.8/2,3.0/3,4.2/4} {
    \node[font=\normalsize] at (\x,0.77+0.5*\barH) {\chunk};
}
\node[anchor=west,font=\normalsize] at (4.92,0.77+0.5*\barH) {1024 chunk length};

% Bottom row: response-length chunks.
\path[fill=ppmlightgreen] (0,0.20) rectangle (\usedW,0.20+\barH);
\draw[block] (0,0.20) rectangle (\barW,0.20+\barH);
\foreach \x in {0.703125,1.40625,2.109375,2.8125} {
    \draw[boundary] (\x,0.20) -- (\x,0.20+\barH);
}
\foreach \x/\chunk in {0.3515625/1,1.0546875/2,1.7578125/3,2.4609375/4} {
    \node[font=\normalsize] at (\x,0.20+0.5*\barH) {\chunk};
}
\node[anchor=west,font=\normalsize] at (4.92,0.20+0.5*\barH) {600 chunk length};

\end{tikzpicture}%
    }
    \caption{\Methodname\ (middle) follows the heuristic of equal-length chunks regardless of sampled trajectory length.}
    \label{fig:chunking}
\end{wrapfigure}

When segmenting trajectories without natural turn-level boundaries, we use a simple baseline of \textbf{equal-length chunks}. As in \Cref{fig:chunking} (middle), we may have some empty chunks if the trajectory is considerably shorter than the training token budget, which are assigned \methodname\ rewards of zero for advantage calculations. Empirically, we find that this chunking scheme generally outperforms and leads to more stable training than trajectory-length chunks (\Cref{fig:chunking}, bottom). Trajectory-length chunks may result in length pressures, as token-level advantages are applied to chunks of different lengths within the same GRPO group.

\subsection{Methods and baselines}\label{app:baseline-impls}

We next describe the methods and baselines to which we compared \methodname. As above, we train on the sum of outcome rewards and \methodname\ rewards for Verifree and process rewards. We always use gemini-2.5-flash-lite as the judge for process rewards and \methodname; we evaluate this in Appendix~\ref{sec:instantiating-judge}.
\begin{itemize}
    \item \textbf{Supervised fine-tuning (SFT).} Throughout the paper, we assume access to human- or oracle-written \textit{solutions}, rather than full reasoning traces (after all, this is simply distillation from a stronger model). Thus, we directly fine-tune the model on sequences \small\texttt{[prompt, solution]}\normalsize, using the \textit{non-thinking} tokenizer mode in Qwen3 models.
    \item \textbf{Verifree \cite{zhou2025reinforcing}.} The original method  maximizes the likelihood of generating the dataset answer. Specifically, the model is prompted to generate responses formatted as \small

        \begin{center}\color{gray}\verb|[preceding tokens] <answer|\color{black}\verb|> \boxed{answer here} </answer> <eos> |\end{center}\normalsize
        
        where ``preceding tokens'' include the prompt, thinking, and all output tokens except for the answer.
        Then, the reward is the likelihood of the \small\texttt{black}\normalsize\ tokens, with \small\texttt{answer here}\normalsize\ replaced by the dataset answer, conditioned on the \small\color{gray}\texttt{gray}\normalsize\color{black}\ tokens.
        
        \vspace{0.4em}

        However, we found empirically that this instantiation of Verifree is sensitive to the format of the answer tokens and underperforms outcome rewards, even when clipping token logprobs to $[-10, 0]$ for stability or summing Verifree rewards over multiple LLM-reformatted dataset answers. For instance, the answer $-2/3$ can be written as \small\verb|-2/3|\normalsize, \small\verb|-\frac 23|\normalsize, or \small\verb|$\dfrac{-2}{3}$|\normalsize, among a variety of answer formats. This instantiation of Verifree matched outcome rewards only in the setting where the dataset was filtered to only integer answers, thus reducing variance due to answer formatting. But in this setting these rewards are conceptually identical: if, for example, the conditioning tokens end with
        
        \begin{center}\small\color{gray}\verb|The final answer is 60.\n\n<answer|\normalsize\color{black}\end{center}
        
        then the likelihood of

        \begin{center}\small\verb|> \boxed{dataset answer} </answer> <eos> |\normalsize\end{center}

        is effectively 1 if the dataset answer is 60 and 0 otherwise, matching outcome rewards.

        \vspace{0.4em}

        Instead, we observe empirically that this reward decreases in variance when averaging token logprobs over longer target sequences, simply by maximizing the likelihood of the reference solution:

        \begin{center}\small\color{gray}\verb|[prompt] <think> [thinking] |\color{black}\verb|</think> [reference solution] <eos> | \end{center}\normalsize

        Specifically, we decompose the sampled sequence as $x_{0:T}=[x^{\mathrm{prompt}},x^{\mathrm{think}},x^{\mathrm{out}}]$ and let $\tilde{x}=[x^{\mathrm{prompt}},x^{\mathrm{think}}]$ denote the sequence obtained by discarding the model's final output. For an oracle solution $x^{\mathrm{ref}}$ with length $L$ tokens, we maximize
    \begin{equation}\mathbb{E}_{x\sim\pi_\theta}\left[\exp\left(\frac{1}{L}\sum_{t=1}^{L}\log\pi_\theta\left(x^{\mathrm{ref}}_t\mid\tilde{x},x^{\mathrm{ref}}_{<t}\right)\right)\right]\end{equation}

    \item \textbf{On-policy self distillation (OPSD) \cite{zhao2026self}.} We disable group filtering and use a \textit{fixed} teacher initialized as the base policy, which we found empirically led to more stable training than using a teacher parameterized as a moving average of the policy weights. We otherwise follow the OPSD official code. The objective is
    % \small
\begin{equation}
\mathbb{E}_{x^{\mathrm{gen}}\sim\pi_\theta(\cdot\mid x^{\mathrm{prompt}})}\!\left[\frac{1}{T}\sum_{t=1}^{T}D_{\mathrm{JS}}\!\left(\pi_{\theta_0}\!\left(\cdot\mid x^{\mathrm{prompt}},x^{\mathrm{ref}},x^{\mathrm{gen}}_{<t}\right)\,\middle\|\,
  \pi_\theta\!\left(\cdot\mid x^{\mathrm{prompt}},x^{\mathrm{gen}}_{<t}\right)\right)\right]
\end{equation}
\normalsize
where $D_{\mathrm{JS}}$ denotes the Jensen--Shannon divergence.
% For OPSD, let $x^{\mathrm{gen}}=[x^{\mathrm{think}},x^{\mathrm{out}}]=(x^{\mathrm{gen}}_1,\ldots,x^{\mathrm{gen}}_N)$ and let $\pi_{\theta_0}$ denote the frozen base-model teacher. The teacher receives the reference solution as privileged context, while the student receives only the original prompt and its sampled prefix. OPSD minimizes

    \item \textbf{Process rewards.} We follow the training configuration for \methodname. However, following prior work in process rewards \cite{miao2023selfcheck, khalifa2025process}, there are three key differences: (i) the judge prompt evaluates correctness of each step rather than progress to the goal, (ii) no shortcutting, and (iii) rewards are obtained from judging each trajectory segment independently $r_k := \phi'(c_k, x^{\text{ref}})$ rather than taking finite differences on the full prefix $\phi((c_1, \dots, c_k), x^{\text{ref}})$. As we show in \Cref{prop:optimality-equivalence}, (i) and (ii) result in a \textit{biased} objective. Furthermore, in \methodname\ experiments, we have found that (iii) conditioning on full prefixes led to lower-variance segment-level rewards, as it is beneficial to have access to relevant context such as definitions and notation introduced early in the trajectory.

    \item \textbf{Privileged on-policy exploration (POPE) \cite{qu2026pope}.} We adapt the prompt in the original paper. In each batch, half of the prompts contain a prefix of the reference solution, and half of the prompts do not.

    \item \textbf{\Methodname, exact order baseline.} We provide a reward only if the reasoning points are visited in the same order as in the reference trajectory. This baseline provides an approximation to imitation learning (without teacher forcing). However, as we show in \Cref{tab:pope-dataset}, this baseline significantly reduces exploration compared to standard \methodname, perhaps because some reasoning points may be ``out of support'' for the policy.

    \item \textbf{\Methodname, no shortcutting baseline.} We use the reward in \Cref{eq:reward} without shortcutting. As we argue in the proof to \Cref{prop:optimality-equivalence}, shortcutting is required for the set of optimal policies to coincide with those from outcome rewards.
\end{itemize}

\subsection{Environment-specific training details}

We follow the defaults in \Cref{tab:experiment-details-rl} and as described above in Appendix~\ref{app:baseline-impls}. The environments and datasets themselves are described in Appendix~\ref{app:environments}.

\begin{itemize}
    \item \textbf{Multi-Countdown.} We train Qwen3-1.7B, and each sub-problem is given a 512 token budget (for example, in the case $n=8$, the policy generates $8 \cdot 512 = 4K$ tokens). We observed that when the agent failed to produce an answer within this budget, it would sometimes continue working on the unfinished sub-problem in subsequent turns. To prevent this effect, 50 tokens before the end of each turn's budget, we force the model to terminate its reasoning with \texttt{</think>} and begin its final response with \texttt{<answer>}, leaving sufficient budget to produce an answer.
    \item \textbf{Matrix Manipulation.} We train Qwen3-4B-Instruct-2507 with thinking mode disabled, and each manipulation operation is given a 512 token budget. We use gemini-2.5-flash-lite as the judge model with thinking disabled. We show horizon scaling results in \Cref{fig:matrix-manipulation}.
\begin{figure}
    \centering
    \includegraphics[width=\linewidth]{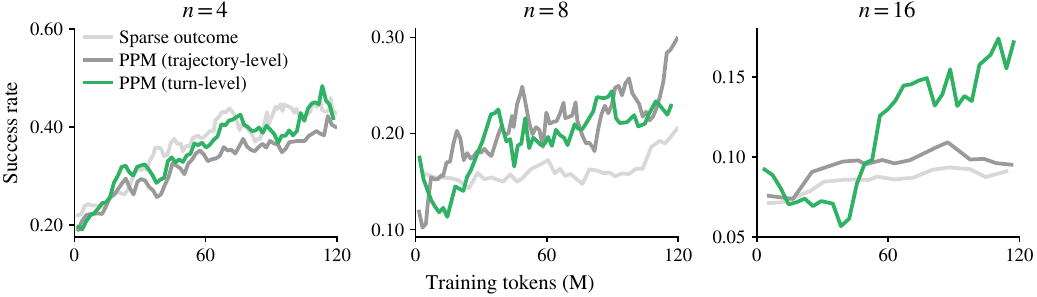}
    \caption{When training on the sequential Matrix Manipulation task, \methodname\ yields increasing performance improvements over sparse outcome rewards with the task horizon ($n$). However, agreeing with \Cref{thm:snr}, this gap increases at a slower rate than in the Multi-Countdown case (\Cref{fig:multicountdown}).}
    \label{fig:matrix-manipulation}
\end{figure}
    \item \textbf{GSM-Infinite.} We train Qwen3-1.7B with thinking disabled at a training budget of 4K tokens.
    \item \textbf{Polaris.} We train Qwen3-4B at a training budget of 8K tokens, and we evaluate at 16K tokens. Our RL runs use a learning rate of 1e-6.
    \begin{itemize}
        \item \textbf{SFT.} We train for 200 steps with a global batch size of 128 and learning rate of 5e-6.
    \end{itemize}
    \item \textbf{POPE-hard.} We train Qwen3-4B-Instruct at a training budget of 4K or 8K tokens, and we evaluate 16K tokens.
    \begin{itemize}
        \item \textbf{POPE.} We use 1-paragraph prefixes from oracle solutions. Note in \Cref{tab:pope-dataset} that the training reward is measured on the mixed dataset (both guided and non-guided), and thus cannot be directly compared to the other training rewards.
        \item \textbf{\Methodname.} We remark that our method makes it possible to train on almost all trajectories (around 95\% averaged over training) compared to almost none (around 0.2\% averaged over training) with sparse outcome rewards.

        We use 16 prompts per batch and a KL penalty with the k3 estimator and 0.002 coefficient. We train only on the PPM reward, rather than summing the outcome reward and PPM reward.
    \end{itemize}
\end{itemize}

We define segments by splitting the training token budget into 4 equal-length chunks. For example,  Following \Cref{eq:reward}, we obtain segment-level rewards $r_k$. 

Throughout the paper we refer to pass@$k$ (or p@$k$ for brevity) as the probability that
at least one of $k$ trajectories sampled i.i.d.\ from the policy for a given prompt attains
an outcome-level reward of $1$. For each prompt we draw $n \gg k$ trajectories, of which
$c$ receive a reward of $1$, and use the standard unbiased estimator of this probability
% \citep{chen2021codex},
\begin{equation}
\widehat{\text{pass@}k} = 1 - \frac{\binom{n-c}{k}}{\binom{n}{k}}
\end{equation}
which is the probability that a uniformly random size-$k$ subset of the $n$ drawn
trajectories contains at least one success. We report the mean of
$\widehat{\text{pass@}k}$ over the prompts in the evaluation set, which is likewise unbiased for
the population quantity $\mathbb{E}_{x^{\text{prompt}} \sim \mathcal{D}}\big[\text{pass@}k(x^{\text{prompt}})\big]$.
%Error bars are 95\% bootstrap confidence intervals over prompts
% (10000 resamples), and therefore reflect variation across prompts rather than
% Monte Carlo error in the per-prompt estimate.

\subsection{Reasoning points, graphs, and judges}\label{sec:instantiating-judge}

\Methodname\ relies on LLMs besides the policy to (i) generate reasoning points, (ii) generate reasoning graphs, and (iii) judging policy-generated responses. How sensitive is the method to the choice of LLM in each case? In this section, we show that any model with sufficient instruction following capabilities can serve as any of the three roles.

Before evaluating each of these three requirements, we remark that the use of external LLMs in the 4-chunk case is fairly wall clock efficient: reasoning points and graphs are constructed once along with the dataset, and judging largely relies on prefill (vs.\ autoregressive generation).

\textbf{Evaluating reasoning points.} We compare against ground truth reasoning points via GSM-Infinite. Qualitatively, an example is shown in \Cref{fig:gt-rubric-comparison}. Numerically, we find that various models achieve roughly identical performance in defining reasoning points (\Cref{tab:rubric-generation}).

\begin{figure}
    \centering
 \begin{minipage}[t]{0.49\textwidth}
  \textbf{\small Ground-truth reasoning points}

  \vspace{0.5em}
  {\ttfamily\scriptsize\raggedright\setlength{\parindent}{0pt}
  \diff{1. Defines the target quantity as a variable: let $x$ be the public highschool in Clearwater Bay.}\\[0.35em]
  2. Derives that the average number of teachers per private middle school in Ruby Bay is 2.\\[0.35em]
  3. Derives that the average number of teachers per public highschool in Ruby Bay is 8.\\[0.35em]
  4. Derives that the average number of teachers per public highschool in Clearwater Bay is 8.\\[0.35em]
  5. Derives that the number of private middle school in Clearwater Bay is 8.\\[0.35em]
  6. Derives that the total number of schools in Clearwater Bay is $x + 8$.\\[0.35em]
  7. Uses the given fact that the total number of schools in Clearwater Bay equals 11.\\[0.35em]
  8. Solves the equation $x + 8 = 11$ to obtain $x = 3$.\\[0.35em]
  9. Provides the final answer in the format $\boxed{3}$.
  \par}
  \end{minipage}
  \hfill
  \begin{minipage}[t]{0.49\textwidth}
  \textbf{\small Qwen3-4B-Instruct reasoning points}

  \vspace{0.5em}
  {\ttfamily\scriptsize\raggedright\setlength{\parindent}{0pt}
  \diff{1. The number of public highschool in Ruby Bay is correctly identified as 4.}\\[0.35em]
  2. The average number of teachers per private middle school in Ruby Bay is correctly identified as 2.\\[0.35em]
  3. The average number of teachers per public highschool in Ruby Bay is correctly calculated as 8.\\[0.35em]
  4. The average number of teachers per public highschool in Clearwater Bay is correctly calculated as 8.\\[0.35em]
  5. The number of private middle school in Clearwater Bay is correctly calculated as 8.\\[0.35em]
  6. The total number of schools in Clearwater Bay is correctly expressed as the sum of the number of public highschool
  in Clearwater Bay and the number of private middle school in Clearwater Bay, so it is correctly expressed as $x +
  8$.\\[0.35em]
  7. The total number of schools in Clearwater Bay is correctly given as 11, so the equation $x + 8 = 11$ is
  formed.\\[0.35em]
  8. The number of public highschool in Clearwater Bay is correctly solved as 3.\\[0.35em]
  9. The final answer is provided in the format $\boxed{3}$.
  \par}
  \end{minipage}
  % \caption{The only rubric-generation failure at $\text{ops}=8$ (1 of 32 problems).
  % Items~2--9 correspond one-to-one. The generated rubric omits the variable declaration
  % (left, \diff{highlighted}) and spends its first item on a given that the reference folds
  % away (right, \diff{highlighted}), though it uses $x$ correctly in items~6--8.}
    \caption{We evaluate LLM generated reasoning points against ground truth reasoning points from GSM-Infinite. Above is the only failure at $n=8$ operations of 32 held-out examples.}
    \label{fig:gt-rubric-comparison}
\end{figure}

\begin{table}
\centering
\caption{\textbf{Reasoning point evaluation.} Fraction of ground truth reasoning points that are also LLM-generated reasoning points, on 32 held-out prompts, judged by
gemini-3-flash-preview.}
\label{tab:rubric-generation}
\begin{tabular}{rcccc}
\toprule
Horizon $n$ & Qwen3-4B-Instruct & Qwen3-32B & gemini-2.5-flash-lite & gemini-3-flash-preview \\
\midrule
 8 & \textbf{0.9965} & \textbf{0.9965} & \textbf{0.9965} & 0.9773 \\
16 & \textbf{0.9765} & 0.9455 & 0.9555 & 0.9418 \\
24 & \textbf{0.9712} & 0.9591 & 0.9551 & 0.9654 \\
\bottomrule
\end{tabular}
% \begin{tabular}{rccc}
% \toprule
% Horizon $n$ & Qwen3-32B & gemini-2.5-flash-lite & gemini-3-flash-preview \\
% \midrule
%  8 & 0.9965 & 0.9965 & 0.9773 \\
% 16 & 0.9455 & 0.9555 & 0.9418 \\
% 24 & 0.9591 & 0.9551 & 0.9654 \\
% \bottomrule
% \end{tabular}
\end{table}

\textbf{Evaluating reasoning graphs.} In \Cref{tab:reasoning-graph}, we evaluate what fraction of prerequisite pairs in ground truth graphs via GSM-Infinite that are also present in LLM-generated graphs. While there is degradation in reasoning graphs with horizon, this degradation occurs uniformly across models, and may occur if the models have not been trained sufficiently on XML-like data. %A hypothesis for the performance degradation in gemini-2.5-flash-lite and Qwen3-32B is that they are undertrained in generating XML code. For example, they are much worse at generating SVGs than gemini-3-flash-preview \cite{johnbean2024svgbench}.

\begin{table}
\centering
\label{tab:reasoning-graph}
\caption{Reasoning graph evaluation.}
% \begin{tabular}{rccc}
% \toprule
% Horizon $n$ & Qwen3-32B & gemini-2.5-flash-lite & gemini-3-flash-preview \\
% \midrule
% 8  & 0.9259 & 0.9465 & 0.9714 \\
% 16 & 0.9048 & 0.9205 & 0.9682 \\
% 24 & 0.8467 & 0.8735 & 0.9819 \\
% \bottomrule
% \end{tabular}
  \begin{tabular}{rcccc}
  \toprule
  Horizon $n$ & Qwen3-4B-Instruct & Qwen3-32B & gemini-2.5-flash-lite & gemini-3-flash-preview \\
  \midrule
   8 & \textbf{0.9965} & \textbf{0.9965} & \textbf{0.9965} & 0.9773 \\
  16 & \textbf{0.9765} & 0.9455 & 0.9555 & 0.9418 \\
  24 & \textbf{0.9712} & 0.9591 & 0.9551 & 0.9654 \\
  \bottomrule
  \end{tabular}
\end{table}

\textbf{Evaluating the PPM measure $\phi$.} We follow the evaluation described in \Cref{sec:q-judge} on the POPE-hard dataset \cite{qu2026pope}, which is very challenging for the base policy (Qwen3-1.7B). We sample 8 prompts, 5 reasoning states to condition on in the prompt (0\%, 25\%, 50\%, 75\%, 100\% of the reference solution), and 8 trajectory continuations per setting. We sample 4K tokens, with \texttt{</think>} forcing at 1024 tokens remaining and \texttt{<answer>} forcing at 50 tokens remaining.

We find that $\phi$ increases roughly linearly with the length of the reference trajectory prefix we condition on, and that judge scores are consistent across models.

\begin{figure}
    \centering
    \includegraphics[width=\linewidth]{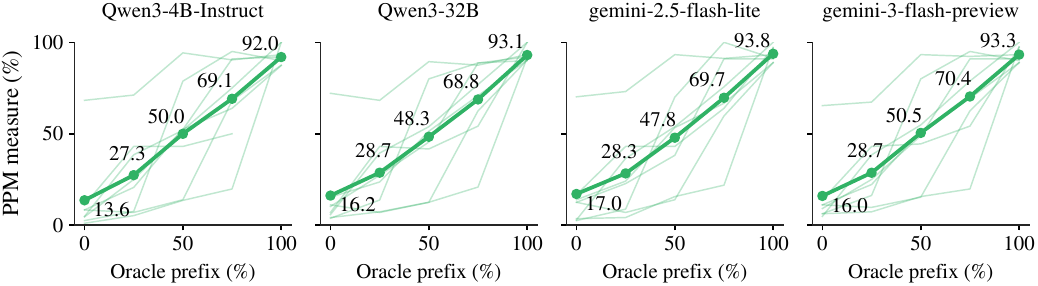}
    \caption{\textbf{PPM measure $\phi$ evaluation.} Across models, $\phi$ increases roughly linearly with the proportion of the reference trajectory we condition on in the prompt.}
    \label{fig:gemini-prefix}
\end{figure}

% \begin{figure}
%     \centering
%     \includegraphics[width=0.5\linewidth]{imgs/gemini_prefix_pope_rubric.pdf}
%   \caption{Providing longer prefixes of the oracle solution in context to our \methodname\ judge leads to roughly linearly increasing \methodname\ measure.}
%     \label{fig:gemini-prefix}
% \end{figure}

\begin{table}[t]
\centering
\caption{Qwen3-4B-Instruct trained on POPE-hard dataset and evaluated at larger test-time budget.}
\begin{tabular}{lcccc}
\toprule
& \multicolumn{2}{c}{Train} & \multicolumn{2}{c}{Test Time} \\
\cmidrule(lr){2-3} \cmidrule(lr){4-5}
Method
& \shortstack{4K Tokens \\ pass@1}
& \shortstack{4K Tokens \\ pass@8}
& \shortstack{16K Tokens \\ pass@1}
& \shortstack{16K Tokens \\ pass@8} \\
\midrule
Base model & 0.000 & 0.000 & 0.004 & 0.023 \\
GRPO & 0.000 & 0.000 & 0.004 & 0.023 \\
% GRPO + Value Baseline & -- & -- & -- & -- \\
% Teacher SFT & -- & -- & -- & -- \\
% Reference SFT & -- & -- & 0.442 & 0.689 \\
% OPSD & 0.097 & 0.196 & 0.140 & 0.367 \\
Verifree & 0.008 & 0.008 & 0.006 & 0.040 \\
POPE & 0.013 & 0.078 & 0.012 & 0.063 \\
Process rewards & 0.0003 & 0.002 & 0.009 & 0.054 \\
PPM, exact order & 0.0009 & 0.014 & 0.0004 & 0.032 \\
PPM, no shortcutting & 0.001 & 0.019 & 0.021 & 0.093 \\
PPM & 0.005 & 0.019 & 0.035 & \textbf{0.148} \\
\midrule
& \shortstack{8K Tokens \\ pass@1}
& \shortstack{8K Tokens \\ pass@8}
& \shortstack{16K Tokens \\ pass@1}
& \shortstack{16K Tokens \\ pass@8} \\
\midrule
Base model & 0.004 & 0.031 & 0.004 & 0.023 \\
GRPO & 0.002 & 0.012 & 0.004 & 0.023 \\
PPM, exact order & 0.005 & 0.027 & 0.006 & 0.023 \\
PPM, no shortcutting & 0.026 & 0.069 & 0.041 & 0.117 \\
PPM & 0.018 & 0.074 & 0.039 & \textbf{0.125} \\

% State-matching + POPE (ours) & -- & -- & -- & -- \\
\bottomrule
\end{tabular}
\label{tab:pope-dataset}
\end{table}

% \begin{wrapfigure}[17]{r}{0.5\textwidth}
%     \centering
%   \label{fig:gemini-prefix-pope-rubric}
%   \vspace{-1.5em}
%     \includegraphics[width=0.48\textwidth]{imgs/gemini_prefix_pope_rubric.pdf}
%   \vspace{-0.5em}
%   \caption{Providing longer prefixes of the oracle solution in context to our \methodname\ judge leads to roughly linearly improving judge scores.}
% \end{wrapfigure}

\section{Environments}\label{app:environments}

\begin{itemize}
    \item \textbf{Multi-Countdown.} We use the standard 4-number Countdown task in a multi-turn setting. At each turn, the agent receives a new sub-problem and is given a budget of 512 tokens to solve it.

    \item \textbf{Matrix Manipulation.} We adapt the Matrix Manipulation environment from Reasoning Gym \cite{stojanovski2025reasoninggymreasoningenvironments}. We resolved several bugs with the original environment, such as ambiguous or incorrectly labeled text descriptions of the matrix operations, and eliminating operations such as cropping that can artificially reduce the task horizon. The adapted environment is included as part of our code release. Each manipulation operation is presented as its own turn.

    \item \textbf{GSM-Infinite.} The standard GSM-Infinite environment \cite{zhou2025gsminfinitellmsbehaveinfinitely} generates datasets containing problems with a mixture of task horizons. To isolate the effect of task complexity, we instead construct datasets at a fixed complexity $n$ by rejection sampling from the original environment. For each prompt, we additionally fix the graph composition such that 40\% of edges are task-relevant and the remaining 60\% are distractors. \textbf{The datasets with 5000 prompts each are available here:}

\vspace{0.4em}
    $n=8$: {\small\url{https://huggingface.co/datasets/prestonfu/gsm_infinite_hard_r0.4_ops8}}
    
    $n=16$: {\small\url{https://huggingface.co/datasets/prestonfu/gsm_infinite_hard_r0.4_ops16}}
    
    $n=24$: {\small\url{https://huggingface.co/datasets/prestonfu/gsm_infinite_hard_r0.4_ops24}}
    \vspace{0.4em}

    The datasets are automatically generated with ground truth reasoning points, which can be derived from the underlying graphs.

    \item \textbf{Polaris.} We begin by \href{https://huggingface.co/datasets/HerrHruby/polaris_acemath_rl_4b_inst_hard_filtered_4k}{obtaining} a \href{https://huggingface.co/datasets/prestonfu/polaris_acemath}{filtered dataset} of 3903 math problems, which were originally sourced from \href{https://huggingface.co/datasets/POLARIS-Project/Polaris-Dataset-53K}{\texttt{POLARIS-Project/Polaris-Dataset-53K}} and \href{https://huggingface.co/datasets/nvidia/AceReason-Math}{\texttt{nvidia/AceReason-Math}}. Here \verb|mean_reward| denotes average success rates of Qwen3-4B-Instruct-2507 when evaluated at length 32K tokens. We then perform some filtering to remove problems that cannot be easily verified: these include proof-like or multiple-choice problems, problems that refer to missing figures, or malformed answers, leaving 3509 usable problems.

    \vspace{0.4em}

    We then generated solutions, reasoning points (``rubrics''), and reasoning graphs (``obsoletion graphs'') via gemini-3-flash-preview. Filtering to usable rows (i.e., by generating solutions that ag reed with dataset answers and correctly-formatted reasoning points and graph) yielded 2287 rows.

    \vspace{0.4em}

    \textbf{The resulting dataset is available here:} {\small\url{https://huggingface.co/datasets/prestonfu/polaris-acemath-gemini-rubrics-v2}}.
    
        \vspace{0.4em}

    We train on the \verb|train_boxed| split, which is identical to \verb|train| except for answer formatting.

    \item \textbf{POPE-hard.} We begin by filtering \href{https://huggingface.co/datasets/CMU-AIRe/POPE-source-128x32k}{\texttt{CMU-AIRe/POPE-source-128x32k}} \cite{qu2026pope}, a dataset of 2515 challenging math problems. We sample 16 trajectories per prompt with Qwen3-4B with an 8K token budget, while providing different numbers of paragraphs (0 paragraphs and standard generation, 10 paragraphs with a prompt to continue reasoning) from prefixes of the gemini-2.5-pro solutions.

\vspace{0.4em}
    
    \textbf{The resulting dataset is available here:} {\small\url{https://huggingface.co/datasets/kvfrans/POPE-HARD-w-oracle-solution-gemini-rubric}}.
    
    \vspace{0.4em}

    The \texttt{train} split consists of all 601 prompts with zero pass rate over 16 trajectories with no reference solution prefix. The \texttt{0p10p} split consists of the 382 \texttt{train} prompts with nonzero pass rate over 16 trajectories when conditioning on 10 paragraphs. Thus, \texttt{0p10p} consists of the set of prompts that are amenable to methods such as POPE \cite{qu2026pope}. The reasoning points and reasoning graphs are generated with gemini-3-flash-preview.
    
    \vspace{0.4em}

    Throughout the paper, all references to the ``POPE-hard dataset'' correspond to the \texttt{0p10p} split.
    % \href{https://huggingface.co/datasets/CMU-AIRe/POPE-HARD-w-oracle-solution}{subset}. We filter to prompts 0p10p. We generate reasoning points and reasoning graphs via gemini-3-flash-preview. \pf{let's make a subsetted dataset}
\end{itemize}

Example problems and solutions for the three environments are shown below.

\begin{llmprompt}[Multi-Countdown]
(*@\promptheading{Prompt}@*)

<|im_start|>system
You are a helpful assistant. You first think about the reasoning process in the mind, and then provide the user with the answer. You will be given a set of base numbers along with a target number. Your job is to create an equation that equals the target using basic arithmetic operations (+, -, *, /). Each provided number must be used exactly once. Show your work in <think> </think> tags. Think for only ten sentences, then return the final answer in <answer> </answer> tags, for example <answer> (1 + 2) / 3 </answer>. You will solve countdown problems multiple times, and each problem will be a different set of numbers and target.<|im_end|>
<|im_start|>user
[Problem 1/2] Base Numbers: [20, 32, 59, 75]. Target: 82.<|im_end|>
<|im_start|>assistant
<think>
...
<|im_end|>
<|im_start|>user
[Problem 2/2] Base Numbers: [72, 20, 6, 50]. Target: 208.<|im_end|>
<|im_start|>assistant
<think>
...
<|im_end|>

(*@\promptheading{Reference solution}@*)

Turn 1: 59 - 20 - 32 + 75
Turn 2: 50*6 - 20 - 72
\end{llmprompt}

\begin{llmprompt}[Manipulate Matrix]
(*@\promptheading{Prompt}@*)

<|im_start|>system
You will be given a matrix and a sequence of operations. After each operation, output the resulting matrix in <answer> </answer> tags, one row per line, values separated by spaces. Example:
<answer>
1 2
4 5
</answer><|im_end|>
<|im_start|>user
Initial matrix:
6 0 1 8
1 5 9 0
8 3 0 1
6 6 1 3

Apply operation 1/3: Mirror the matrix along the counterdiagonal.
Output the resulting matrix in <answer> </answer> tags.<|im_end|>
<|im_start|>assistant
<think>
...
<|im_end|>
<|im_start|>user
Apply operation 2/3: Rotate the matrix 90 degrees counterclockwise.
Output the resulting matrix in <answer> </answer> tags.<|im_end|>
<|im_start|>assistant
<think>
...
<|im_end|>
<|im_start|>user
Apply operation 3/3: Remove every 4-th column (1-indexed).
Output the resulting matrix in <answer> </answer> tags.<|im_end|>
<|im_start|>assistant
<think>
...
<|im_end|>

(*@\promptheading{Reference solution}@*)

Turn 1
<answer>
3 1 0 8
1 0 9 1
6 3 5 0
6 8 1 6
</answer>

Turn 2
<answer>
6 6 1 3
8 3 0 1
1 5 9 0
6 0 1 8
</answer>

Turn 3
<answer>
6 6 1
8 3 0
1 5 9
6 0 1
</answer>
\end{llmprompt}

\begin{llmprompt}[GSM-Infinite]
(*@\promptheading{Prompt}@*)

<|im_start|>system
Answer the question below.

Notes:
- The total number of adult animals in a location is the sum of all adult animals of every type ever mentioned for that location, EXCLUDING newborn children.
- If a type of animal is never mentioned for a location, assume its count there is 0.
- Average newborn children per adult may differ across locations.
- The total newborn children in a location is the sum, over each type of adult animal mentioned for that location, of (count of that adult animal in the location) times (average newborn children per that adult animal in the location).

**Important:** The final answer should be written as \boxed{...}.
<|im_end|>
<|im_start|>user
The average number of newborn children per adult owl in Pine Ridge equals the average number of newborn children per adult eagle in Pine Ridge. The average number of newborn children per adult kangaroo in Glintshade Grove equals the sum of the average number of newborn children per adult owl in Cedar Valley and the number of adult owl in Cedar Valley. The number of adult eagle in Pine Ridge equals 4. The average number of newborn children per adult koala in Glintshade Grove equals the average number of newborn children per adult owl in Cedar Valley. The number of adult owl in Cedar Valley equals 3 times the average number of newborn children per adult eagle in Beverly Forest. The number of adult owl in Pine Ridge equals 1. The average number of newborn children per adult eagle in Pine Ridge equals 3 times the sum of the average number of newborn children per adult eagle in Beverly Forest and the number of adult owl in Cedar Valley. The number of adult koala in Glintshade Grove equals the sum of the number of adult eagle in Pine Ridge and the average number of newborn children per adult owl in Cedar Valley. The average number of newborn children per adult owl in Cedar Valley equals the sum of the number of adult owl in Cedar Valley and the number of adult owl in Pine Ridge. The number of adult eagle in Beverly Forest equals 4 plus the number of adult eagle in Pine Ridge. The average number of newborn children per adult eagle in Beverly Forest equals 1. The number of adult kangaroo in Glintshade Grove equals the average number of newborn children per adult owl in Cedar Valley. Question: What is the total number of adult animals in Beverly Forest?
<|im_end|>
<|im_start|>assistant
<think>

(*@\promptheading{Reference solution}@*)

Define adult eagle in Pine Ridge as s; so s = 4. Define adult eagle in Beverly Forest as H; Q = s = 4; so H = 4 + Q = 4 + 4 = 8. Define total number of adult animals in Beverly Forest as Y; so Y = H = 8.
\\boxed{8}

(*@\promptheading{Rubric}@*)

1. The number of adult eagle in Pine Ridge is correctly identified as 4.
2. The number of adult eagle in Beverly Forest is correctly calculated as 8.
3. The total number of adult animals in Beverly Forest is correctly identified as 8.
4. Provides the final answer in the format \boxed{8}.

(*@\promptheading{Reasoning Graph}@*)

[([2], [1]), ([3], [2]), ([4], [3])]
\end{llmprompt}

\section{Prompts}\label{app:prompts}

\begin{llmprompt}[Reasoning Points Generation Prompt]
You are given a problem statement and a reference solution that should be treated as correct. Generate a rubric for evaluating other solutions against the reasoning actually present in the reference solution.

The rubric must be faithful to the reference solution. Do not add lemmas, cases, rigor, bounds, or proof obligations that are not explicitly present or clearly implied. If the reference solution uses an approximation or informal argument, describe that argument as it appears rather than replacing it with a stronger rigorous version. Rubric items should test whether another solution follows the same essential reasoning path as the reference solution.

Each rubric item must:
- be a single sentence;
- be evaluable as YES or NO without extra context;
- state one concrete mathematical requirement, such as deriving a claim, comparing expressions, handling a case, or reaching a conclusion;
- represent one logically distinct requirement, with no duplicates;
- be explicitly stated or clearly implied by the reference solution.

The final rubric item must state the final conclusion or answer.

Do not include:
- definitions, formulas, or background facts unless essential to the solution's reasoning;
- vague criteria such as "correctly analyzes," "uses appropriate reasoning," or "understands the problem";
- presentation or style criteria;
- point values, grading guidance, solution text, hints, or references to these instructions;
- stronger arguments than the reference solution gives;
- requirements needed only for a fully rigorous proof but absent from the reference solution.

## Output format

Short breakdown of the reference solution. (4 sentences max)

Estimated number of rubric items for each key step. (2 sentences max)

<rubric>
1. Rubric item 1
2. Rubric item 2
and so on... (6-12 items)
</rubric>

## Example problem statement
Determine all $\alpha > 1$ for which $\sum_{k=1}^{n} \left\lfloor k \sqrt{\alpha} \right\rfloor > \left\lfloor \frac{n^2}{\sqrt{\alpha}} \right\rfloor$, where $\left\lfloor . \right\rfloor$ denotes the integer part.

## Example reference solution
We are tasked with determining all $\alpha > 1$ for which the inequality

$$
\sum_{k=1}^n \left\lfloor k \sqrt{\alpha} \right\rfloor > \left\lfloor \frac{n^2}{\sqrt{\alpha}} \right\rfloor
$$

holds for all positive integers $n$.

---

### Step 1: Approximate the Sum and the Floor Terms

Let us approximate the left-hand side:

$$
\sum_{k=1}^n \left\lfloor k \sqrt{\alpha} \right\rfloor \approx \sum_{k=1}^n k \sqrt{\alpha} - \sum_{k=1}^n \left\{k \sqrt{\alpha} \right\}
$$

The sum of the first $n$ integers is $\frac{n(n+1)}{2}$, so:

$$
\sum_{k=1}^n \left\lfloor k \sqrt{\alpha} \right\rfloor \approx \sqrt{\alpha} \cdot \frac{n(n+1)}{2}
$$

For the right-hand side:

$$
\left\lfloor \frac{n^2}{\sqrt{\alpha}} \right\rfloor \approx \frac{n^2}{\sqrt{\alpha}} - \left\{ \frac{n^2}{\sqrt{\alpha}} \right\}
$$

---

### Step 2: Compare the Leading Terms

We compare the leading terms:

$$
\sqrt{\alpha} \cdot \frac{n(n+1)}{2} \approx \frac{\sqrt{\alpha}}{2} n^2
$$

$$
\frac{n^2}{\sqrt{\alpha}}
$$

For the inequality to hold, we must have:

$$
\frac{\sqrt{\alpha}}{2} n^2 > \frac{n^2}{\sqrt{\alpha}} \quad \Rightarrow \quad \frac{\sqrt{\alpha}}{2} > \frac{1}{\sqrt{\alpha}} \quad \Rightarrow \quad \alpha > 2
$$

---

### Step 3: Consider $\alpha = 2$

We analyze $\alpha = 2$:

- The leading terms are equal: $\frac{\sqrt{2}}{2} n^2 = \frac{n^2}{\sqrt{2}}$
- However, the fractional parts $\{k \sqrt{2}\}$ are distributed uniformly, and the difference between the sums is positive for large $n$, ensuring the inequality holds.

---

### Step 4: Conclusion

The inequality holds for all $\alpha \geq 2$.

<answer>[2, \infty)</answer>

## Example rubric
<rubric>
1. The solution states that the inequality is required to hold for all positive integers n.
2. The solution expresses or approximates the sum \sum_{k=1}^n \lfloor k\sqrt{\alpha}\rfloor using \sum_{k=1}^n k\sqrt{\alpha} and fractional parts.
3. The solution identifies \sum_{k=1}^n k = n(n+1)/2.
4. The solution identifies the leading-order term of \sum_{k=1}^n \lfloor k\sqrt{\alpha}\rfloor as \frac{\sqrt{\alpha}}2 n^2.
5. The solution identifies the leading-order term of \left\lfloor n^2/\sqrt{\alpha}\right\rfloor as n^2/\sqrt{\alpha}.
6. The solution compares the leading-order terms to derive \frac{\sqrt{\alpha}}2 > \frac1{\sqrt{\alpha}}.
7. The solution concludes from this comparison that \alpha > 2 is required away from the boundary.
8. The solution separately considers the boundary case \alpha = 2.
9. The solution argues that the boundary case \alpha = 2 should be included, for example using fractional-part behavior.
10. The solution concludes that the desired set is \alpha \in [2,\infty).
</rubric>

## Problem statement
{problem}

## Reference solution
{solution}

## Instructions
Provide the solution breakdown, rubric item allocation, and rubric as described in the system prompt.
\end{llmprompt}

\begin{llmprompt}[Reasoning Graph Prompt]
You are given a problem statement and a grading rubric (a numbered list of criteria). Construct an obsoletion graph: a set of rules stating that satisfying some rubric items makes other items redundant to grade, because the satisfied items already imply or subsume them.

For the obsoletion graph, a later item may obsolete earlier steps when it clearly depends on them or substantially subsumes their mathematical content. Allow natural local chains, such as a derived inequality obsoleting the intermediate comparison that directly produced it, or a case resolution obsoleting merely considering that case. Do not let a final answer obsolete specific derivations, constructions, case analyses, or impossibility arguments. Prefer a reasonably sparse graph that reduces redundant grading without skipping substantive reasoning.

Each rule has the form <rule if_all="i,j" makes_obsolete="a,b,c" />, meaning that if items i and j are all satisfied, then items a, b, and c need not be graded separately. Only reference item numbers that exist in the rubric.

## Output format

Brief explanation of the dependency policy used in the obsoletion graph. (4 sentences max)

<makes_obsolete_graph>
<rule if_all="..." makes_obsolete="..." />
<rule if_all="..." makes_obsolete="..." />
and so on...
</makes_obsolete_graph>

## Example problem statement
Determine all $\alpha > 1$ for which $\sum_{k=1}^{n} \left\lfloor k \sqrt{\alpha} \right\rfloor > \left\lfloor \frac{n^2}{\sqrt{\alpha}} \right\rfloor$, where $\left\lfloor . \right\rfloor$ denotes the integer part.

## Example rubric
<rubric>
1. The solution states that the inequality is required to hold for all positive integers n.
2. The solution expresses or approximates the sum \sum_{k=1}^n \lfloor k\sqrt{\alpha}\rfloor using \sum_{k=1}^n k\sqrt{\alpha} and fractional parts.
3. The solution identifies \sum_{k=1}^n k = n(n+1)/2.
4. The solution identifies the leading-order term of \sum_{k=1}^n \lfloor k\sqrt{\alpha}\rfloor as \frac{\sqrt{\alpha}}2 n^2.
5. The solution identifies the leading-order term of \left\lfloor n^2/\sqrt{\alpha}\right\rfloor as n^2/\sqrt{\alpha}.
6. The solution compares the leading-order terms to derive \frac{\sqrt{\alpha}}2 > \frac1{\sqrt{\alpha}}.
7. The solution concludes from this comparison that \alpha > 2 is required away from the boundary.
8. The solution separately considers the boundary case \alpha = 2.
9. The solution argues that the boundary case \alpha = 2 should be included, for example using fractional-part behavior.
10. The solution concludes that the desired set is \alpha \in [2,\infty).
</rubric>

## Example obsoletion graph
Item 4 obsoletes items 2 and 3 because identifying the leading term of the left-hand side usually already uses the approximation of the floor sum and the formula for $\sum k$. Item 6 obsoletes items 4 and 5 because comparing the leading terms via the displayed inequality presupposes those leading terms. Item 7 obsoletes item 6 because concluding $\alpha > 2$ from the leading-term comparison normally subsumes the intermediate inequality comparison. Item 9 obsoletes item 8 because arguing that $\alpha = 2$ is included necessarily means the boundary case was considered. Item 10 obsoletes items 7 and 9 because the final interval $[2,\infty)$ combines the threshold conclusion with inclusion of the boundary case.

<makes_obsolete_graph>
<rule if_all="4" makes_obsolete="2,3" />
<rule if_all="6" makes_obsolete="4,5" />
<rule if_all="7" makes_obsolete="6" />
<rule if_all="9" makes_obsolete="8" />
<rule if_all="10" makes_obsolete="7,9" />
</makes_obsolete_graph>

## Problem statement
{problem}

## Rubric
{rubric}

## Instructions
Provide the dependency policy explanation and the obsoletion graph as described in the system prompt.
\end{llmprompt}

\begin{llmprompt}[Naive Rubric Grading Prompt]
You are given:
1. A mathematical problem.
2. A grading rubric consisting of numbered criteria.
3. A student solution to the problem.

Your task is to grade the student solution against the rubric.
For each statement, output only "YES" or "NO" based on whether the solution satisfies the criterion. Format your response as a numbered list, e.g. if there are n rubric items, your response should follow the format:
1. YES
2. NO
...
n. YES

Problem statement:
{problem}

-------------------

Rubric:
{rubric}

-------------------

Student solution:
{student solution}

-------------------
Based on this information, please judge the partial rollout against the rubric with satisfiability as described in the system prompt.
\end{llmprompt}

\begin{llmprompt}[\METHODNAME\ Prompt]
You are given:
1. A mathematical problem.
2. A grading rubric consisting of numbered criteria.
3. A partial rollout (reasoning trace + solution) to solving the problem.

Grading rules:
- For each rubric criterion i, output YES if *any* part of the rollout exactly satisfies criterion i.
- Answer NO if you have checked the entire rollout and are certain that the criterion is nowhere satisfied.
- Always answer NO if the rubric item describes a feature not present in the given rollout. For example, if the rubric item asks "Do the <answer> </answer> tags contain the number 3", but there are no answer tags at all, answer NO.
- Judge each criterion independently. For example, if step 2 depends on step 1, grade step 2 based solely on whether the rollout's claimed step-2 result matches the rubric, regardless of whether step 1 was correct.
- Judge whether the rubric criterion itself is correct in the rollout, not whether the reasoning that led to it was sound.
- Do not use the rubric or problem text as evidence; only quote from the rollout.

You MUST respond in exactly two sections, in this order:

Section 1 -- Reasoning:
1. Criterion: <verbatim criterion including whitespace>.
   Evidence: <verbatim quote from rollout including whitespace> or "not present in rollout".
   Reason: <explain how evidence relates to criterion, maximum 15 words>.
   Verdict: YES/NO.
2. Criterion: ...
   Evidence: ...
   Reason: ...
   Verdict: ...
... and so on

Section 2 -- Grades (one line per criterion):
1. YES
2. NO
... and so on

Do NOT output Section 2 before completing Section 1. Do NOT skip Section 1.

# Problem statement

{problem}

# Partial rollout

<rollout>
{student solution}
</rollout>

# Rubric

{rubric}

# Instructions

Based on this information, please judge the partial rollout against the rubric with satisfiability as described in the system prompt.
Only the text inside <rollout> ... </rollout> counts as evidence. Ignore the problem statement and rubric when judging satisfaction.
Include the section 1 and section 2 headers.
\end{llmprompt}

\begin{llmprompt}[Process Reward Model Prompt]
**System prompt**

You will be given a math problem along with a solution. They will be formatted as follows:

# Problem
...(math problem)...

# Solution
<part_1>
...(part 1 of solution)...
</part_1>
...
<part_n>
...(part n of solution)...
</part_n>

Your task is to review each part of the solution in sequence, analyzing, verifying, and critiquing the reasoning in detail. Provide the analyses and the conclusion in the following format:

<analysis_1>
...(analysis of part 1)...
</analysis_1>
...
<analysis_n>
...(analysis of part n)...
</analysis_n>
<conclusion>
Correct/Incorrect
</conclusion>

Grading rules:
- When you analyze each part, use proper verification, recalculation, or reflection to indicate whether it is logically and mathematically valid. Elaborate on the analysis process carefully.
- If an error is detected in any part, describe the nature and cause of the error in detail, and suggest how to correct the error or the correct approach.
- Once a part is found to contain any error, stop further analysis of subsequent parts (as they may depend on the identified error) and directly provide the conclusion of "Incorrect."
- For instance, given a solution of five parts, if an error is found in the third part, reply with <analysis_1>, <analysis_2>, and <analysis_3> (the latter containing the detailed critique and correction guideline), then <conclusion>Incorrect</conclusion>. Skip the analyses of parts 4 and 5.
- Respond with your analyses and conclusion directly.

# Problem
{problem}

# Solution
{tagged chunk}

# Task

Provide the analysis/verification/critique/conclusion in the format described above.
\end{llmprompt}

\begin{llmprompt}[POPE Prefix Prompt]
You are given a problem and a {partial solution/reference solution}. Your task is to carefully study the {partial solution/reference solution} and use it as guidance to derive a complete and correct solution.

Use the information from the {partial solution/reference solution} silently. Do not copy, rephrase, or explicitly mention anything from it.

**Important**: Show your reasoning step-by-step, and present the final answer using LaTeX within <answer>...</answer> tags.

# Problem
{problem}

# {Partial Solution/Reference Solution}
{partial}

Derive the full solution to the problem, re-deriving each step yourself. Make sure your final answer is written as: <answer>your answer here</answer>
\end{llmprompt}

%%%%%%%%%%%%%%%%%%%%%%%%%%%%%%%%%%%%%%%%%%%%%%%%%%%%%%%%%%%%

% \newpage
% \input{checklist.tex}
\end{document}